\documentclass{article}


\usepackage[final,nonatbib]{neurips_2023}




\usepackage[utf8]{inputenc} 
\usepackage[T1]{fontenc}    
\usepackage{url}            
\usepackage{booktabs}       
\usepackage{amsfonts}       
\usepackage{nicefrac}       
\usepackage{microtype}      
\usepackage{xcolor}         

\usepackage[normalem]{ulem}
\usepackage{microtype}
\usepackage{graphicx}
\usepackage{subcaption}
\usepackage{enumerate}
\usepackage{times}
\usepackage{booktabs} 
\usepackage{bm}
\usepackage{diagbox}
\usepackage{algorithm,algorithmic}
\usepackage{thm-restate}

\usepackage[algo2e,ruled,vlined]{algorithm2e}

\usepackage{amsmath,amsthm,amssymb,mathtools}
\usepackage{thmtools, thm-restate}
\usepackage{tikz}
\usepackage{url}
\usepackage{setspace}
\usepackage[pdftex,bookmarksnumbered,bookmarksopen,
colorlinks,citecolor=blue,linkcolor=blue,urlcolor=blue]{hyperref}

\usepackage{xcolor}
\usepackage{soul}
\usepackage{longtable}

\usepackage{times}
\usepackage{enumitem}
\usepackage{varwidth}
\usepackage{graphicx}
\usepackage{wrapfig}
\usepackage{enumerate}
\usepackage{caption}

\usepackage{amssymb}
\usepackage{graphicx}
\usepackage{url}
\usepackage{setspace}
\usepackage{framed}
\usepackage{xcolor}
\usepackage{soul}
\usepackage{longtable}

\usepackage{times}
\usepackage{enumitem}
\usepackage{varwidth}
\usepackage{graphicx}
\usepackage{wrapfig}
\usepackage{enumerate}

\usepackage[utf8]{inputenc} 
\usepackage[T1]{fontenc}    
\usepackage{url}            
\usepackage{booktabs}       
\usepackage{amsfonts}       
\usepackage{nicefrac}       
\usepackage{microtype}      

\usepackage{graphicx}
\usepackage{appendix}
\usepackage{mdwlist}
\usepackage{xspace}
\usepackage{color}
\usepackage{mathrsfs}

\usepackage{booktabs}
\usepackage{comment}

\usepackage{multirow}


\newcommand{\sign}{\operatorname{sign}}

\newcommand{\err}{\mathrm{err}}

\newcommand{\error}{\mathrm{err}}

\newcommand{\Appendix}[1]{the full version for}

\newtheorem{theorem}{Theorem}[section]
\newtheorem{lemma}[theorem]{Lemma}

\newtheorem{assumption}{Assumption}
\newtheorem{definition}{Definition}

\newcommand{\x}{\bm{x}}

\newcommand{\z}{\mathbf{z}}

\newcommand{\B}{\mathbf{B}}

\newcommand{\cM}{\mathcal{M}}

\newcommand{\R}{\mathbb{R}}

\newcommand{\Z}{\mathbf{Z}}

\renewcommand{\comment}[1]{}
\newcommand{\red}[1]{}
\newcommand{\blue}[1]{{\color{black}#1}}

\newcommand{\cready}[1]{{\color{black}#1}}

\newcommand{\cD}{\mathcal{D}}

\newcommand{\cH}{\mathcal{H}}

\newcommand{\cL}{\mathcal{L}}

\newcommand{\cU}{\mathcal{U}}

\newcommand{\cO}{\mathcal{O}}
\newcommand{\cP}{\mathcal{P}}
\newcommand{\cQ}{\mathcal{Q}}

\newcommand{\cY}{\mathcal{Y}}
\newcommand{\cX}{\mathcal{X}}

\newcommand{\bbI}{\mathbb{I}}

\newcommand{\ptoq}{\cP {\to} \cQ}

\newcommand{\ellca}{\ell_{\text{CA}}}
\newcommand{\elltl}{\ell_{\text{TL}}}
\newcommand{\ellst}{\ell_{\text{ST}}}
\newcommand{\ellia}{\ell_{\text{IA}}}
\newcommand{\rrca}{\text{RR}_{\text{CA}}^{\mathcal{L}}}
\newcommand{\rrtl}{\text{RR}_{\text{TL}}^{\mathcal{L}}}
\newcommand{\rrst}{\text{RR}_{\text{ST}}^{\mathcal{L}}}
\newcommand{\rria}{\text{RR}_{\text{IA}}^{\mathcal{L}}}
\newcommand{\srca}{\text{SR}^{\mathcal{L}}_{\text{CA}}(S,\eta_1,\eta_2)}
\newcommand{\srtl}{\text{SR}^{\mathcal{L}}_{\text{TL}}(S,\eta_1,\eta_2)}
\newcommand{\srst}{\text{SR}^{\mathcal{L}}_{\text{ST}}(S,\eta_1,\eta_2)}
\newcommand{\sria}{\text{SR}^{\mathcal{L}}_{\text{IA}}(S,\eta_1,\eta_2)}

\definecolor{colorY}{rgb}{0.7 , 0.7 , 0.2}

\date{}
\title{Reliable learning in challenging environments}

\author{
  Maria-Florina Balcan\\
  Carnegie Mellon University\\
  \texttt{ninamf@cs.cmu.edu} \\
  \And
  Steve Hanneke\\
  Purdue University\\
  \texttt{steve.hanneke@gmail.com} \\
  \AND
  Rattana Pukdee\\
  Carnegie Mellon University\\
  \texttt{rpukdee@cs.cmu.edu} \\
  \And
  Dravyansh Sharma\\
  Carnegie Mellon University\\
  \texttt{dravyans@cs.cmu.edu} \\
}\date{}

\DeclareMathOperator*{\argmin}{argmin}

\begin{document}
\maketitle

\newenvironment{proofoutline}{\noindent{\emph{Proof Sketch. }}}{\hfill$\square$\medskip}

\begin{abstract}%
\blue{The problem of designing learners that provide guarantees that their predictions are provably correct is of increasing importance in machine learning. However, learning theoretic guarantees have only been considered in very specific settings.  In this work, we consider the design and analysis of reliable learners in challenging test-time environments as encountered in modern machine learning problems: namely `adversarial' test-time attacks (in several variations) and `natural' distribution shifts.  In this work, we provide a reliable learner with provably optimal guarantees in such settings. We discuss computationally feasible implementations of the learner and further show that our algorithm achieves strong positive performance guarantees on several natural examples: for example, linear separators under log-concave distributions or smooth boundary classifiers under smooth probability distributions.
}

\end{abstract}

\section{Introduction}
The question of providing  reliability guarantees on the output of learned classifiers has been studied previously 
in the classical learning setting where the training and test data are independent and identically distributed (i.i.d.) draws from the same distribution \cite{rivest1988learning,el2010foundations,el2012active}.  \blue{Conceptually,  a {\it reliable} learner outputs a prediction and may output a correctness guarantee. We know that the learner is correct on all points with the guarantee as long as the learning-theoretic assumptions hold, e.g., realizability. While a trivial model that abstains from providing any guarantee is also a reliable learner, we are interested in a reliable learner that provides the guarantee on as many points as possible ({\it useful} in the sense of Rivest and Sloan \cite{rivest1988learning}). \cite{el2010foundations} provides a characterization of optimal reliable learners in this classical learning setting.
}

 \blue{However, the assumption that the training and test data are drawn from the same distribution is often violated in practice. The mismatch may take the form of a `natural distribution shift' when the test distribution is different from the training distribution or `adversarial attacks' when there is an adversary that can perturb a test data point with the goal of changing the model prediction.} This is frequently accompanied by a significant performance drop, as well as the inability to guarantee the usefulness of the algorithm. As a result, there is a significant interest in the study of test-time attacks \cite{goodfellow2014explaining,carlini2017towards,madry2018towards} and distribution shift \cite{lipton2018detecting,recht2019imagenet,miller2021accuracy} among the applied machine learning community. Furthermore, recently there has been growing interest in the theoretical machine learning community for designing approaches with provable guarantees under test time attacks \cite{attias2019improved,montasser2019vc,montasseradversarially} as well as renewed interest in distribution shift \cite{ben2006analysis,mansour2008domain,hanneke2019value}.
All the prior theoretical work in the literature has mainly focused on the effect of attacks or distribution shift on average error rate {(}e.g.\ \cite{ben2006analysis,attias2019improved}{)}. However, this neglects a major relevant concern for users of machine learning algorithms, namely the ability to provide correctness guarantees for individual predictions: i.e., reliability. In {{this work}}, we advance this line of work by  developing a general understanding of how to learn reliably in the presence of {corruptions or changes to the test set, specifically under adversarial} test-time attacks { as well as} distribution shift {between the training (source) and test (target) data}.

\textbf{Our results}. We consider algorithms that provide robustly-reliable predictions which are guaranteed to be correct under standard assumptions from statistical learning theory, for both test-time attacks and distribution shift. Our first main set of results tackles the challenging case of {adversarial} test-time { perturbations}. For this setting, we introduce a novel compelling  
reliability criterion on a learner that  particularly captures the challenge of reliability under the test-time attacks. Given a test point $z$, a {{\it robustly-reliable} } classifier {either abstains from prediction, or} outputs both a prediction $y$ and a reliability guarantee $\eta$ with the guarantee that $y$ is correct unless one of  two bad events has occurred: 1) the true target function does not belong to the given hypothesis set $\cH$ or, 2) a test-point $z$ is perturbed from its original point by adversarial strength of at least $\eta$ {(measured in the relevant metric)}.  In the case of distribution shift, we provide novel analysis and a complexity measure that extend the classical notion of reliable learning to the setting when the test distribution is allowed to be an arbitrary new distribution. 

\subsection{Summary of contributions}
\begin{enumerate}[leftmargin=*,topsep=0pt,partopsep=1ex,parsep=1ex]\itemsep=-4pt

    \item We propose robustly-reliable learners for  test-time attacks which guarantee reliable learning in the presence of test-time attacks, and characterize the region of instance space where they are simultaneously robust and reliable. 
    Specifically, under the realizable setting, for adversarial perturbations within metric balls around the test points, we use the radius of the metric ball as a natural notion of adversarial strength. We output a reliability radius $\eta$ with a guarantee that our prediction on a point is correct as long as it was perturbed with a distance less than $\eta$ (under a given metric). We further show that our proposed robustly-reliable learner achieves pointwise optimal values for this reliability radius: that is, no robustly-reliable learner can output a reliability radius larger than our learner for any point in the instance space \blue{(Theorem \ref{thm:rr-eta-l1}, \ref{thm:mball-l3-ub})}. 

    \item \blue{The pointwise optimal algorithm is easy to derive from our definition. We discuss a computationally efficient implementation of the optimal learners. (Section \ref{subsec: computational efficiency}). }

    \item We discuss variants of these algorithms and guarantees appropriate for three different variants of adversarial losses studied in the literature: depending on whether the perturbed point must have the same label as the original point, or in lieu of this, whether the algorithm should predict the true label of the perturbed point, or the same label as the original point \blue{(Definition \ref{def:losses})}.

    \item \blue{We further introduce a safely-reliable region, which captures the challenge caused by the adversary's ability to perturb a test point to cause a reduction in our reliability radius (Definition \ref{def:srr}). As examples, we show that the safely-reliable region can be large for linear separators under log-concave distributions and for classifiers with smooth decision boundaries under nearly-uniform distributions and as a consequence, the robustly-reliable region is large as well (Theorem \ref{thm: SR-l1l2l3}).}

    \item  We extend this characterization to abstention-based reliable predictions for arbitrary adversarial perturbation sets, where we no longer restrict ourselves to metric balls. We again get a tight characterization of the robustly-reliable region (\blue{Theorem \ref{thm: rrr-l3-lb-general}}).

    \item We also consider reliability in the distribution shift setting where the test data points come from a different distribution. We introduce a novel refinement to the notion of disagreement coefficient  \cite{hanneke2007bound}, to measure the \textbf{transferability of reliability  guarantees} across distributions. We provide bounds on the probability mass of the reliable region under transfer for several interesting examples including, when learning linear separators, transfer from $\beta_1$ log-concave to $\beta_2$ log-concave and to $s$-concave distributions (Theorems \ref{thm:pq-beta-concave}, \ref{thm:pq-s-concave}). We additionally bound the probability of the reliable region for learning classifiers with general smooth classification boundaries, for transfer between smooth distributions (Theorem \ref{thm:pq-smooth}). 

    \item We further extend our reliability results to the setting of robustness tranfer, where the test data is simultaneously under adversarial perturbations as well as distribution shift (Section \ref{sec:robustness-transfer}).
    \item \cready{Finally, we demonstrate that it is possible extend our results into the agnostic setting. (Section \ref{sec:agnostic})}

\end{enumerate}
\textbf{Conceptual advances over prior work.}

\cready{Prior works on certified robustness \cite{steinhardt2017certified,cohen2019certified,wang2022improved} have examined pointwise consistency guarantees. 
The certified robustness guarantee is only that a prediction does not change with an adversarial perturbation, but it does not guarantee that the prediction is correct (neither for the original point nor the perturbation); in particular, a constant function is always certified robust but it may not be useful. In contrast, our notion of robustly-reliable learner guarantees that, for any test point $x$ and perturbation $z$, if $z$ has a distance less than $\eta$ to $x$ ($\eta$ = reliability radius), then the prediction will be “correct” (robust loss zero) in a sense informed by which robust loss we are addressing; we discuss this idea for several different losses, leading to different interpretations of this guarantee.  In particular, for the stability loss, the prediction being “correct” means that it predicts the true label of the original point $x$; this implies certified robustness, but is even stronger, since it also guarantees the correct label.  Prior work \cite{balcan2022robustly} introduces the notion of a robustly-reliable learner for poisoning attacks which is different from our definition that is tailored to test-time attacks with a guarantee in terms of a reliability radius. In distribution shifts setting, we are the first to assess the transferability of reliability guarantees which differ from a widely-studied metric of average error rate. For additional related work, we refer to Appendix \ref{appendix: related work}.}


\section{Preliminaries and problem formulation}

Let $\cX$ denote the instance space and $\cY=\{0,1\}$ be the label space. Let $\cH$ be a hypothesis class. 
The learner $\cL$ is given access to a labeled sample $S=\{(x_i,y_i)\}_{i=1}^m$ drawn from a distribution $\cD$ over $\cX \times \cY$ and learns a concept $h^{\cL}:\cX\rightarrow\cY$. In the realizable setting, we assume we have a hypothesis (concept) class $\cH$ and target concept $h^*\in\cH$ such that the {\it true label} of any $x\in\cX$ is given by $h^*(x)$. In particular, $S=\{(x_i,h^*(x_i))\}_{i=1}^m$ in this setting. Given the 0-1 loss function $\ell:\cH\times\cX\rightarrow \{0,1\}$, define $\operatorname{err}_S(h,\ell)=\frac{1}{m}\sum_{(x,y)\in S}\ell(h,x)$. We use  $\cD_\cX$ to denote the marginal distribution over $\cX$. We use $\bbI[\cdot]$ to denote the indicator function that takes values in $\{0,1\}$. We also define $B_\cD^{\cH}(h^*,r) = \{h \in \cH \mid \Pr_\cD[h(x) \neq h^*(x)]\le r\}$ as the set of hypotheses in $\cH$ that disagree with $h^*$ with probability at most $r$. During test-time, the learner makes a prediction on a test-point $z\in \cX$. We consider the following settings

\begin{enumerate}[leftmargin=*,topsep=0pt,partopsep=1ex,parsep=1ex]\itemsep=-4pt
    \item \textbf{Adversarial test-time attack.} We consider adversarial attacks with perturbation function $\cU:\cX\rightarrow 2^{\cX}$ that can perturb a test point $x$ to an arbitrary point $z$ from the perturbation set $\cU(x)\subseteq \cX$ \cite{montasser2019vc}. We assume that the adversary has access to the learned concept $h^{\cL}$ as well as the test point $x$, and can perturb this data point to any $z \in \cU(x)$ and then provide this perturbed data point to the learner at test-time. We want to provide pointwise robustness and reliability guarantees in this setting. We will assume that $x\in\cU(x)$ for all $x\in\cX$. For any point $z$, we have $\cU^{-1}(z):= \{x \in \cX | z \in \cU(x)\}$, the set of points that can be perturbed to $z$. We use  {\it perturbation} to refer to a point $z \in \cU(x)$ and the perturbation sets $\cU(x)$ interchangeably.

    \item \textbf{Distribution shift.} We consider when a test point $z$ is drawn from a different distribution from the training samples. In this case, we want to provide a pointwise reliability guarantee. We will discuss more on this in Section \ref{sec: distribution shift}.
\end{enumerate}

\subsection{Robust loss functions}
In the applied and theoretical literature, various definitions of adversarial success have been explored, each dependent on the interpretation of robustness; depending on whether the perturbed point must have the same label as the original point, or in lieu of this, whether the algorithm should predict the true label of the perturbed point, or the same label as the original point. To capture these, we formally consider the following loss functions.
\begin{figure}[ht]
    \centering
    \includegraphics[width =0.85\textwidth]{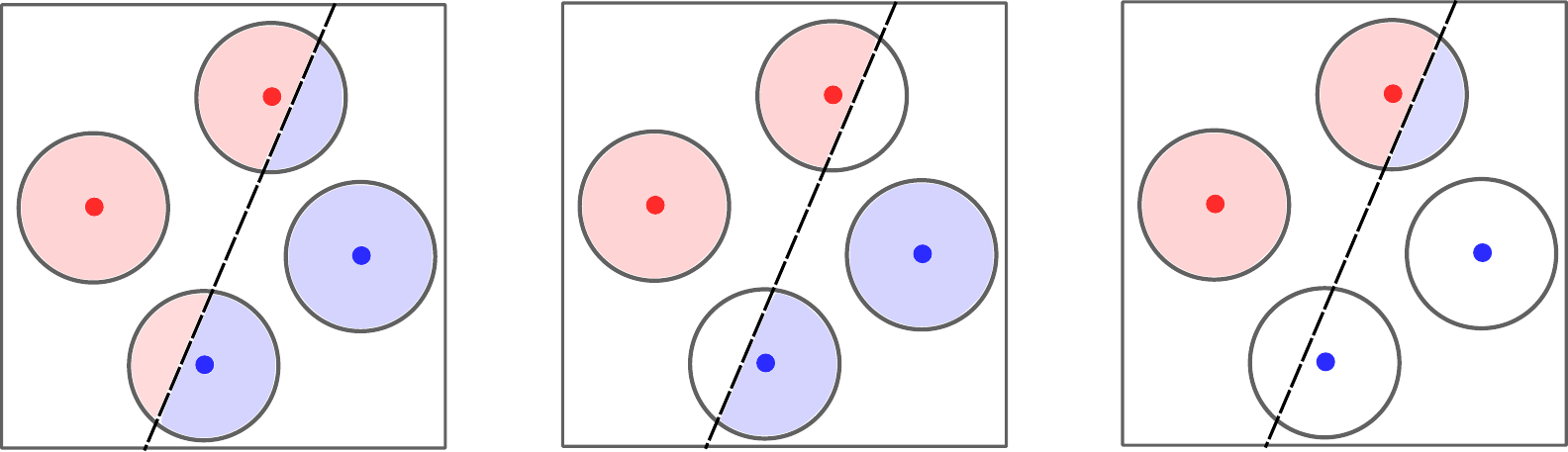}
    
    \caption{\blue{Different perturbation sets considered in  $\elltl,\ellst$ (left), $\ellca$ (mid) and $\ellia$ (right). The dashed line represents the decision boundary of $h^*$ and the background color of red and blue represents the label $0$ and $1$ respectively. The ball around each point describes the possible perturbation set $\cU(x)$ and the shaded area inside each ball is the allowed perturbation. In $\elltl,\ellst$, we consider all perturbation in $\cU(x)$ while in $\ellca$, we consider perturbations that do not change the true label of the perturbed point. Lastly, in $\ellia$, an adversary only perturb points where the original true label is $0$.}}
    \label{fig: Different perturbation sets}
\end{figure} 

\begin{definition}[Robust loss functions] \label{def:losses}
For a hypothesis $h$, a test point $x$, and a perturbation function $\cU$, we consider the following adversarially successful events.

\begin{enumerate}[leftmargin=*,topsep=0pt,partopsep=1ex,parsep=1ex]\itemsep=-4pt

    \item  \text{\textbf{Constrained Adversary loss}} \cite{szegedy2013intriguing,balcan2020power}. There exists a perturbation $z$ of $x$ that does not change the true label of an original point $x$ but $h(z)$ is incorrect. 
    \begin{equation*}
        \ellca^{h^*}(h,x) = \sup_{\substack{z\in\cU(x)\\
        h^*(z) = h^*(x)}} \bbI[h(z) \neq h^*(z)].
    \end{equation*} 
    For a fixed perturbation $z\in\cU(x)$, define $\ellca^{h^*}(h,x,z)=\bbI[h(z) \neq h^*(z) \wedge h^*(z) = h^*(x)]$.

  \item \textbf{True Label loss} \cite{zhang2019defending, gourdeau2021hardness}. There exists a perturbation $z$ of $x$ such that $h(z)$ is incorrect.   
    \begin{equation*}
        \elltl^{h^*}(h,x) = \sup_{z\in\cU(x)}\bbI[h(z) \neq h^*(z)].
    \end{equation*} 
    In this case, if the true label of the $z$ changes, then the learner need to match its prediction with the new label. For a fixed perturbation $z\in\cU(x)$, define $\elltl^{h^*}(h,x,z)=\bbI[h(z) \neq h^*(z)]$.

    \item \textbf{Stability loss} \cite{attias2019improved, montasser2019vc,montasseradversarially}. There exists a perturbation $z$ of $x$ such that $h(z)$ is different from $h^*(x)$.

    \begin{equation*}
        \ellst^{h^*}(h,x) = \sup_{z\in\cU(x)}\bbI[h(z) \neq h^*(x)].
    \end{equation*}
    In this case, we focus on the consistency aspect where we want the prediction of any perturbation $z$ to be the same as the prediction of $x$ and this has to be correct w.r.t. $x$ i.e. equals to $h^*(x)$. For a fixed perturbation $z\in\cU(x)$, define $\ellst^{h^*}(h,x,z)=\bbI[h(z) \neq h^*(x)]$. 
       \item  \blue{\text{\textbf{Incentive-aware Adversary loss}} \cite{zhang2021incentive}. We take inspiration from economics application where we assume that the label $1$ is a more favorable outcome e.g. loan approval for which an adversary has no incentive to make any perturbation when the original label is $1$. Define the perturbation set 
       \begin{equation*}
        \cU_{\text{IA}}(x,h^*) = \begin{cases}
        \cU(x) &\text{; $h^*(x) = 0$}\\
        \{x\} &\text{; $h^*(x) = 1$}
        \end{cases}
        \end{equation*}
    and define an incentive-aware adversary loss as
       \begin{equation*}
        \ellia^{h^*}(h,x) = \sup_{\substack{z\in\cU_{\text{IA}}(x,h^*)}} \bbI[h(z) \neq h^*(x)].
    \end{equation*} 
      For a fixed perturbation $z \in \cU(x)$, define $\ellia^{h^*}(h,x,z) = \bbI[h(z) \neq h^*(x) \wedge z \in \cU_{\text{IA}}(x,h^*)]$.}

\end{enumerate}
We say that $h$ is robust to a perturbation function $\cU$ at $x$ w.r.t. a robust loss $\ell$ if $\ell^{h^*}(h,x) = 0$.
\end{definition}
\textbf{Remark. } \blue{$\ellst, \ellia$} are robust losses that we can always evaluate in practice on the training data since we are comparing $h(z)$ with $h^*(x)$ which is known to us on the training data. For $\ellca,\elltl$, we are comparing $h(z)$ with $h^*(z)$ for which $z$ may lie outside of the support of the natural data distribution and we may not have access to $h^*(z)$. We illustrate the relationship between these losses by making a few useful observations.
\begin{itemize}
[leftmargin=*,topsep=4pt,partopsep=1ex,parsep=1ex]\itemsep=-4pt
    \item In the robustly-realizable case \cite{montasser2020reducing} when the perturbation function $\cU$ does not change the \textit{true} label of any $x$ in the training or test data, then all the losses $\ellca,\elltl,\ellst$ are equivalent. This corresponds to a common assumption in the adversarial robustness literature, that the perturbations are ``human-imperceptible'', which is usually quantified as the set of perturbations within a small metric ball around the data point.
   
    \blue{
    \item We provide an illustration of the perturbation set considered in various robust losses in Figure \ref{fig: Different perturbation sets}. By considering these perturbation set, we have the following implication $\elltl \rightarrow \ellca$, $\ellst \rightarrow \ellca$ and $\ellst \rightarrow \ellia$ where $\ell_1 \rightarrow \ell_2$ means robustness w.r.t. $\ell_1$ implies robustness w.r.t. $\ell_2$.}

\end{itemize}

{\color{black}
\section{Robustly-reliable learners w.r.t. metric ball attacks}\label{sec: rr-w-ball}
Although our robust losses are defined for any general perturbation set, we first consider the case where the perturbation sets are balls in some metric space. Such attacks are widely studied in the literature, in particular, for balls with bounded $L_p$-norm. Moreover, the radius of the metric ball serves as a natural notion of adversarial strength that allows us to quantify the level of robustness. We will later (Theorem \ref{thm: rrr-l3-lb-general}) present results for general perturbation sets as well. 

Let $\cM=(\cX,d)$ be a metric space equipped with distance metric $d$. We use the notation $\B_\cM(x,r)=\{x'\in\cX\mid d(x,x')\leq  r\}$ (resp.\ $\B^o_\cM(x,r)=\{x'\in\cX\mid d(x,x') <  r\}$) to denote a closed (resp.\ open) ball of radius $r$ centered at $x$. We will sometimes omit the underlying metric $\cM$ from the subscript to reduce notational clutter. We formally define a metric ball attack as follows.

\begin{definition}
     \textbf{Metric-ball attacks}
    are defined as the class of perturbation functions $\cU_{\cM}=\{u_\eta:\cX\rightarrow 2^\cX\mid u_\eta(x)=\B_\cM(x,\eta)\}$, induced by the metric $\cM=(\cX,d)$ defined over the instance space. 
\end{definition}

\noindent At test-time, given a test-point $z\in \cX$, we would like to make a prediction at $z$ with a reliability guarantee. We consider this type of learner, a {\it robustly-reliable} learner defined formally as follows.

\begin{definition}[Robustly-reliable learner w.r.t.\ $\cM$-ball attacks] \label{def:robustly-reliable-metric}
A learner $\cL$ is robustly-reliable w.r.t.\ $\cM$-ball attacks for hypothesis space $\cH$ and robust loss function $\ell$ if, \textbf{for any target} concept $h^*\in\cH$, given $S$ labeled by $h^*$, the learner outputs functions $h^\cL_{S}:\cX \to \cY$ and $r^\cL_{S}:\cX \to [0,\infty)\cup\{-1\}$ such that for all $x,z\in \cX$ if $r^\cL_{S}(z)=\eta > 0$ and $z\in \B^o_\cM(x,\eta)$ then $\ell^{h^*}(h^\cL_{S},x,z)=0$. Further, if $r^\cL_{S}(z) = 0$, then $h^*(z)=h^\cL_{S}(z)$.
\end{definition}

\noindent Note that $\cL$ outputs a prediction and 
a real value $r$ (the ``reliability radius'') 
for any test input. $r=-1$ corresponds to abstention (even in the absence of perturbation) \blue{i.e. when the learner is incapable of giving a reliability guarantee for that prediction)},  and $r=\eta > 0$ is a guarantee from the learner that if the adversary's attack is in $\B^o_\cM(x,\eta)$ then we are correct i.e. if an adversary changes the original test point $x$ to $z$, the attack will not succeed if the adversarial budget is less than $\eta$. \blue{Lastly, when $r = 0$, the learner provides a guarantee that the learner's prediction at $z$ is correct.}

\begin{definition}
    [Robustly-reliable region w.r.t. $\cM$-ball attacks] For a robustly-reliable learner $\cL$ w.r.t.\ $\cM$-ball attacks for sample $S$, hypothesis space $\cH$ and robust loss function $\ell$ defined above, the robustly-reliable region of $\cL$ at a reliability level $\eta$ is defined as $\text{RR}^{\cL}(S,\eta)=\{x\in\cX\mid r^\cL_{S}(x) \ge \eta\}$ for sample $S$ and $\eta\ge 0$.
\end{definition}

 \noindent The robustly-reliable region contains all points with a reliability guarantee of at least $\eta$. We use $\text{RR}^{\cL}_{W}$ to denote  robustly-reliable regions with respect to losses $\ell_W$ for $W \in \{\text{CA}, \text{TL},\text{ST},\text{IA}\}$. A natural goal is to find a robustly-reliable learner $\cL$ that has the largest robustly-reliable region possible. \blue{First, we note that predictions that are known by the learner to be correct are still known to be correct even when the test points are attacked. Therefore, a test point $z$ lies in the robustly-reliable region w.r.t. $\ellca,\elltl$, as long as we can be sure that $h^{\cL}_S(z)$ is correct. This is equivalent to $z$ being classified perfectly, i.e. according to the true label. 
 Therefore, the robustly-reliable region w.r.t.\ $\ellca,\elltl$ is given by the agreement region of the version space, which is  the largest region where we can be sure of what the correct label is in the absence of any adversarial attack \cite{el2010foundations}. We recall the definition of version space \cite{mitchell1982generalization} and agreement region \cite{cohn1994improving,balcan2006agnostic}.}
}
\begin{definition}
For a set $H \subseteq \cH$ of hypothesis, and any set of samples $S$, let $\operatorname{DIS}(H) = \{x \in \cX: \exists h_1,h_2 \in \cH \text{ s.t. } h_1(x)\neq h_2(x)\}$ be the \textbf{disagreement region} and $\operatorname{Agree}(H) = \cX \setminus \operatorname{DIS}(H)$ be the \textbf{agreement region}. Let $\cH_0(S) = \{h \in \cH | \operatorname{err}_S(h) = 0\}$ be a \textbf{version space}: the set of all hypotheses that correctly classify $S$. More generally, $\cH_\nu(S) = \{h \in \cH | \operatorname{err}_S(h) \le \nu\}$ for $\nu\ge0$.

\end{definition}

\blue{
We can also characterize the robustly-reliable region with respect to other robust losses in terms of the agreement region in the following Theorem. 
\begin{theorem}\label{thm:rr-eta-universal}
Let $\cH$ be any hypothesis class. With respect to $\cM$-ball attacks and $\ell_W$, for $\eta\ge 0$, 
\begin{enumerate}[label=(\alph*),leftmargin=*,topsep=4pt,partopsep=1ex,parsep=1ex]\itemsep=-4pt
    \item there exists a robustly-reliable learner $\cL$ such that $\text{RR}_{W}^{\cL}(S,\eta)\supseteq A_W$, and
    \item for any robustly-reliable learner $\cL$, $\text{RR}_{W}^{\cL}(S,\eta)\subseteq A_W$.
\end{enumerate}
Specifically, for the robust loss $\ell_W$, the optimal robustly-reliable region $A_W$ are 
\begin{table}[h]
\centering
\begin{tabular}{ll}
\toprule
\textbf{Robust loss }$\ell_W$ &  \textbf{Optimal robustly-reliable region} $A_W$\\
\midrule
$\ellca, \elltl$ &  $\{z \mid z \in \operatorname{Agree}(\cH_0(S))\}$\\
$\ellst$ & $\{z \mid \B^o(z,\eta) \subseteq \operatorname{Agree}(\cH_0(S)) \wedge h(z) = h(x),\forall x \in \B^o(z,\eta), \forall h \in \cH_0(S) \}$ \\
$\ellia$ &  $(A_{\text{ST}} \cap \{z \mid h^*(z) = 1\})\cup \{ z \mid z \in \operatorname{Agree}(\cH_0(S)) \wedge h^*(z) = 0\}$\\
\bottomrule
\end{tabular}
\end{table}
\end{theorem}
}

\begin{proof} \cready{(Sketch) We provide the construction of the optimal robustly-reliable learner $\cL_{\text{opt}}$ such that $\text{RR}_{W}^{\cL_{\text{opt}}}(S,\eta)\supseteq A_W$ and later show that for any robustly-reliable learner $\cL$, we must also have $\text{RR}_{W}^{\cL}(S,\eta)\subseteq A_W$. We start with $\ellca, \elltl$, consider a learner $\cL_{\text{opt}}$ that predicts using an ERM classifier and outputs $\eta=\infty$ for all points in the agreement region of $\cH_0(S)$. Any prediction in $\operatorname{Agree}(\cH_0(S))$ is reliable because it also agrees with $h^*$ ($h^*\in \cH_0(S)$ by realizability). On the other hand, for $z \in \operatorname{DIS}(\cH_0(S))$, there exist $h_1,h_2 \in \cH_0(S)$ that disagree on $z$. For any learner $\cL$, it is not possible to guarantee that $h^{\cL}(z)$ is correct as we may have $h^* = h_1$ or $h^* = h_2$. 

Now, for $\ellst$, consider a learner $\cL_{\text{opt}}$ that classifies using an ERM but the reliability radius is now the largest $\eta> 0$ for which $\B^o(z,\eta)\subseteq\operatorname{Agree}(\cH_0(S)) $ and $ h(x)=h(z), \;\forall\;x\in \B^o(z,\eta), h\in \cH_0(S)$, else $\eta=0$ if $z\in \operatorname{Agree}(\cH_0(S))$, and $-1$ otherwise.  The first condition guarantees that $h(x) = h^*(x), \forall x \in \B^o(z,\eta)$. Combined with the second condition we have $h(z) = h(x) = h^*(x), \forall x \in \B^o(z,\eta)$. Thus,  $\cL_{\text{opt}}$ is a robustly-reliable learner. On the other hand, for a robustly-reliable learner $\cL$, consider $z \in \rrst(S,\eta)$ for $\eta>0$. We must have $h^{\cL}(z) = h^*(x), \forall x \in \B^o(z,\eta)$. 
    Using a similar argument to the case for  $\ellca, \elltl$, we have $z\in \operatorname{Agree}(\cH_0(S))$. If there exists $x\in \B^o(z,\eta)$ that $x\not\in \operatorname{Agree}(\cH_0(S))$, there exists $h_1,h_2\in \cH_0(S)$ that $h^{\cL}(z) \neq h_1(x)$ or $h^{\cL}(z) \neq h_2(x)$. It is not possible to guarantee that $h^{\cL}(z) = h^*(x)$ as we may have $h^* = h_1$ or $h^* = h_2$. Therefore, we must have $\B^o(z,\eta) \subseteq \operatorname{Agree}(\cH_0(S))$. Finally, we cannot have $x\in \B^o(z,\eta)$ that $h(z) = h^*(z) \neq h^*(x)$ since this contradict with $h(z) = h^*(x)$. Therefore, we must have $h^*(x) = h^*(z)$. Since we have $x \in \operatorname{Agree}(\cH_0(S))$, this implies that $h(x) = h(z)$ for $h \in \cH_0(S)$. For $\ellia$, the construction is similar to $\ellst$. For full proof, we refer to Appendix \ref{appendix: robustly-reliable metric ball}.}
\end{proof}

\blue{ For $\ellst$, the learner is able to certify a subset of the agreement region, which satisfies two additional conditions: {$h^{\cL}_S$ must be correct} on all possible points $x$ that could be perturbed to an observed test point $z$, and the true label of $z$ should match the true label of $x$. We denote the second condition of $ h(z) = h(x),\forall x \in \B^o(z,\eta), \forall h \in \cH_0(S)$ as the \textbf{label consistency condition}. For $\ellia$, since robustness w.r.t. $\ellst$ implies robustness w.r.t. $\ellia$, we have $A_\text{ST} \subset A_\text{IA}$. In addition, with an incentive-aware adversary, whenever $h(z) = 0$, we must have $h^*(x) = 0$ since the adversary does not perturb a data point with the original label $1$. Therefore, we can additionally provide a guarantee on $z$ that lies in the agreement region and $h(z) = 0$. We remark that we can identify the term $h^*(z)$ for any point $z \in \operatorname{Agree}(\cH_0(S))$ due to the realizability assumption.}

\noindent  We have provided a robustly-reliable learner with the largest possible robustly-reliable region for losses $\ellca, \elltl, \ellst, \ellia$. However, we note that the probability mass of the robustly-reliable region may not be a meaningful way to quantify the overall reliability of a learner because a perturbation $z$ may lie outside of the support of the natural data distribution and have zero probability mass. It seems more useful to measure the mass of points $x$ where any perturbation $z$ of $x$ still lies within the robustly-reliable region. We formally define this region as the safely-reliable region.

\begin{definition}
    [Safely-reliable region w.r.t. $\cM$-ball attacks] Let $\cL$ be a robustly-reliable learner w.r.t. $\cM$-ball attacks for sample $S$, hypothesis class $\cH$ and robust loss function $\ell$. The safely-reliable region of learner $\cL$ {at reliability levels $\eta_1, \eta_2$} is defined as 

    \begin{enumerate}[leftmargin=*,topsep=0pt,partopsep=1ex,parsep=1ex]\itemsep=-4pt
        \item $\srca= \{x \in \cX \mid  \B_\cM(x,\eta_1)\cap \{z\mid h^*(z) = h^*(x) \} \subseteq \rrca(S,\eta_2)\},$
        \item $\text{SR}^{\cL}_{W}(S,\eta_1,\eta_2) = \{x \in \cX \mid  \B_\cM(x,\eta_1) \subseteq \text{RR}^{\cL}_{W}(S,\eta_2)\}$ \text{ for } $W \in \{\text{TL},\text{ST} \}$,
        \item \blue{$\sria = \{x \in \cX \mid  h^*(x) = 0 \wedge \B_\cM(x,\eta_1) \subseteq \rria(S,\eta_2)\} \cup \{x \in \cX \mid  h^*(x) = 1 \wedge x \in  \rria(S,\eta_2))  \}$.}
    \end{enumerate}
    \label{def:srr}
\end{definition}

\red{[Optional TODO] add a remark comparing with \cite{balcan2022robustly}}

\red{[Optional Comment] An alternate presentation would be to state the definition of SR region in terms of the following property; and establish the containments in Definition \ref{def:srr}.}

\noindent The safely-reliable region contains any point that retains a reliability radius of at least $\eta_2$ even after being attacked by an adversary with strength $\eta_1$. {In the safely-reliable region, we consider a set of potential natural (before attack) points $x$, while in the robustly-reliable region, we consider a set of potential test points $z$.} 

In the following subsections, we show that in interesting cases commonly studied in the literature, the probability mass of the safely-reliable region is actually quite large.

\subsection{Safely-reliable region for linear separators under log-concave distributions is large}\label{sec: rrr-linear}
\red{[Optional TODO] For log-concave examples, our proof implies a non-optimal but easier to efficiently implement robustly-reliable learner, with similar bounds on the reliable regions -- maybe we should talk about that in our theorems.}

\noindent We provide the probability mass of safely-reliable regions with respect to different losses for linear separators when the data distribution follows an isotropic (mean zero and identity covariance matrix) log-concave (logarithm of density function is concave) distribution under a bounded $L_2$-norm ball attack. For full proof, we refer to Appendix  \ref{appendix: safely-reliable}. We will rely on the following key lemma \blue{which states that the agreement region of a linear separator cannot contain points that are {arbitrarily}  close to the decision boundary of $h^*$ for any sample $S$. }
\begin{lemma}
    Let $\cD$ be a distribution over $\R^d$ and $\cH = \{h: x \to \operatorname{sign}(\langle w_h, x\rangle ) \mid w_h \in \R^d, \lVert w_h \rVert_2 = 1\}$ be a class of linear separators. For $h^*\in \cH$, for a set of samples $S \sim \cD^m$ such that there is no data point in $S$ that lies on the decision boundary, for any $0 < c < d$, there exists $\delta(S,c,d) > 0$ such that for any $x$ with $ c \leq \|x\| \leq d$ and $|\langle w_{h^*}, x\rangle| < \delta$, we have $x\not\in \operatorname{Agree}(\cH_0(S))$.
    \label{lemma: agreement contain large margin}
\end{lemma}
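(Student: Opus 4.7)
The plan is to prove the lemma by exhibiting, for each such $x$, an alternative hypothesis $h' \in \cH_0(S)$ with $h'(x) \neq h^*(x)$. Since $h^* \in \cH_0(S)$ trivially, this places $x$ in $\operatorname{DIS}(\cH_0(S))$, i.e.\ outside $\operatorname{Agree}(\cH_0(S))$. The construction will be a single-parameter perturbation of $w_{h^*}$ in the direction of $x$ itself: the perturbation is strong enough to flip the sign of the inner product at $x$ (because $x$ has small margin) but too weak to flip any training margin (because all training points have strictly positive margin).

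Concretely, first I would extract two positive quantities from $S$: the minimum training margin $\gamma := \min_{(x_i,y_i)\in S} |\langle w_{h^*}, x_i\rangle|$, which is strictly positive by the no-boundary assumption, and the maximum training norm $R := \max_{(x_i,y_i)\in S} \|x_i\|$. I would then set $\delta(S,c,d) := c^2 \gamma / (2\,d\,R)$. For a given $x$ with $c \le \|x\| \le d$ and $|\langle w_{h^*}, x\rangle| < \delta$, WLOG assume $\langle w_{h^*}, x\rangle \ge 0$ (the other case uses $+\alpha x$ instead of $-\alpha x$), and define $\tilde w := w_{h^*} - \alpha x$ for some $\alpha$ in the open interval $(\delta/c^2, \gamma/(dR))$, which is nonempty by our choice of $\delta$. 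Then let $w' := \tilde w / \|\tilde w\|$ and $h'(y) := \sign(\langle w', y\rangle)$.

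The verification splits into two cleanly parallel estimates. For $x$ itself, $\langle \tilde w, x\rangle = \langle w_{h^*}, x\rangle - \alpha\|x\|^2 < \delta - \alpha c^2 < 0$, so $h'(x) \neq h^*(x)$. For every training point $x_i$, $|\langle \tilde w, x_i\rangle - \langle w_{h^*}, x_i\rangle| = \alpha|\langle x, x_i\rangle| \le \alpha d R < \gamma \le |\langle w_{h^*}, x_i\rangle|$, so the sign of $\langle w_{h^*}, x_i\rangle$ is preserved and $h'(x_i) = h^*(x_i) = y_i$, giving $h' \in \cH_0(S)$. Since normalization by a positive scalar does not change signs, the unit-norm constraint on $w'$ is harmless as long as $\tilde w \neq 0$, which holds for $\alpha < 1/d$; further shrinking $\delta$ to, say, $\min\{c^2\gamma/(2dR),\, c^2/(2d)\}$ ensures this trivially. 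I do not anticipate a real obstacle here: the only care needed is to make $\delta$ simultaneously small enough (to force sign flipping at $x$ with a tiny $\alpha$) and large enough that this same $\alpha$ stays well below the minimum training margin; the bound $\|x\|\le d$ is precisely what lets the same $\alpha$ work uniformly over all eligible $x$.
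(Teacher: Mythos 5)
Your proof is correct, and the core construction is identical to the paper's: both perturb $w_{h^*}$ by subtracting a small multiple of $x$ (the paper writes $w_h = (w_{h^*} - \lambda x)/\lVert w_{h^*} - \lambda x\rVert$, you write $w' = \tilde w/\lVert\tilde w\rVert$ with $\tilde w = w_{h^*} - \alpha x$). What differs is the verification. The paper first establishes a separate angle lemma --- there exists $\delta_1(S)$ so that any $h$ with $\theta(w_h,w_{h^*}) \le \delta_1$ lies in $\cH_0(S)$ --- and then shows the perturbed vector satisfies both $\langle w_h, x\rangle < 0$ and $\theta(w_h,w_{h^*})\le\delta_1$ by solving a quadratic inequality in $\lambda$, yielding $\delta(S,c,d) \le c^2\tan\delta_1/\sqrt{(d+\tan^2\delta_1)^2 + c^2\tan^2\delta_1}$. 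You sidestep all the trigonometry by observing that normalization by a positive scalar does not change signs, so it suffices to verify the sign of $\langle\tilde w,\cdot\rangle$ directly: the flip at $x$ via $\langle\tilde w,x\rangle < \delta - \alpha c^2 < 0$, and preservation on $S$ via the Cauchy--Schwarz estimate $\alpha|\langle x,x_i\rangle| \le \alpha dR < \gamma \le |\langle w_{h^*},x_i\rangle|$. Both proofs need the box $c\le\lVert x\rVert\le d$ for the same reason (to bound the inner product perturbation uniformly); your choice $\delta = c^2\gamma/(2dR)$ plays the same role as the paper's more cumbersome formula. Incidentally, your safeguard $\delta \le c^2/(2d)$ to keep $\tilde w\neq 0$ is harmless but unnecessary: $\gamma \le \lVert w_{h^*}\rVert\cdot R = R$ always, so $\gamma/(dR)\le 1/d$ and any admissible $\alpha$ already satisfies $\alpha\lVert x\rVert < 1$. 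Overall your argument is the same idea with a cleaner, purely linear-algebraic verification.
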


\blue{A direct implication of the lemma is that any $L_2$-ball $\B(x,\eta)$ that lies in the agreement region must not contain the decision boundary of $h^*$ and must contain points with the same label. This allows us to remove the \textit{label consistency condition} and instead focus on whether the ball $\B(x,\eta)$ lies in the agreement region. Intuitively, the reliable region is now given by the `$\eta$-buffered' agreement region where we only select points that have a distance at least $\eta$ from the boundary of the agreement region (Figure \ref{fig: robustly-reliable regions}). We provide bounds for the probability mass of the safely-reliable region below and we refer to the full proof in Appendix \ref{appendix: safely-reliable}.

}

\begin{figure}
    \centering
\includegraphics[width = 0.8\columnwidth]{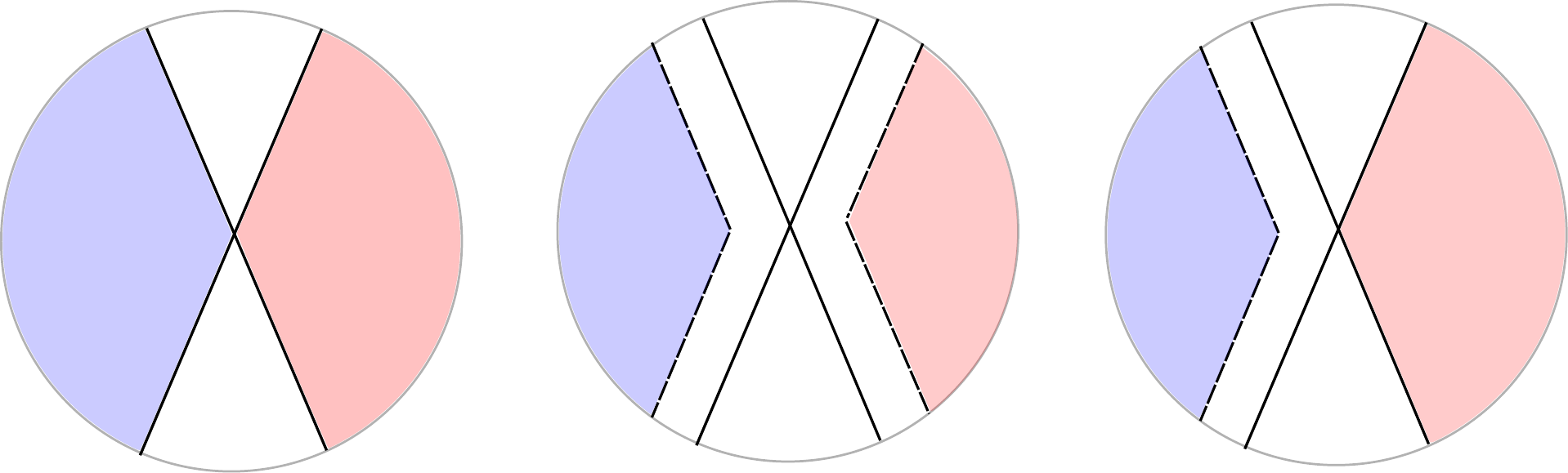}
    \caption{\blue{The robustly-reliable region for $\ellca, \elltl$ (left), $\ellst$ (mid) and $\ellia$(right) for linear separators with an $L_2$-ball perturbation. The background color of blue and red represents the agreement region of class $1$ and $0$ respectively. In this case, we can remove the label consistency condition and reduce the robustly-reliable region into the `$\eta$-buffered' agreement region.}}
    \label{fig: robustly-reliable regions}
\end{figure}

{\color{black}
\begin{theorem} Let $\cD$ be isotropic log-concave over $\R^d$ and $\cH = \{h: x \to \operatorname{sign}(\langle w_h, x\rangle ) \mid w_h \in \R^d, \lVert w_h \rVert_2 = 1\}$ be the class of linear separators. Let $\B(\cdot, \eta)$ be a $L_2$ ball perturbation with radius $\eta$. For $S \sim \cD^m$, for  $m = \cO(\frac{1}{\varepsilon^2}(\operatorname{VCdim}(\cH) + \ln\frac{1}{\delta}))$, for an optimal robustly-reliable learner $\cL$,
\begin{enumerate}[label=(\alph*),leftmargin=*,topsep=0pt,partopsep=1ex,parsep=1ex]\itemsep=-4pt
    \item $\Pr(\srtl) \geq 1 -  2\eta_1 - \Tilde{\cO}(\sqrt{d}\varepsilon)$ with probability at least $1 - \delta$,
    \item $\srca = \srtl$ almost surely,
    \item $\Pr(\srst) \geq  1 - 2(\eta_1 + \eta_2) - \Tilde{\cO}(\sqrt{d}\varepsilon)$ with probability at least $1 - \delta$,
    \item \blue{$\Pr(\sria) \geq  1 - (\eta_1 + \eta_2) - \Tilde{\cO}(\sqrt{d}\varepsilon)$ with probability at least $1 - \delta$.}
\end{enumerate}
The $\Tilde{\cO}$-notation suppresses dependence on logarithmic factors and distribution-specific constants.
\label{thm: SR-l1l2l3}
\end{theorem}
\blue{ We remark that we can't always remove the label consistency condition for a general perturbation set. For example, consider $\cU(x) = \B(x - a, \eta) \cup \{x\} \cup \B(x +a, \eta),$
is made of two $L_2$ balls with center $x-a, x+a$, with appropriate value of $a,\eta$, we may have each ball lie in the different side of the agreement region so that the whole perturbation set lie in the agreement region but contain points with different labels (Figure \ref{fig: different label}).
}
\blue{
We also provide bounds on the probability mass of the safely-reliable region for more general concept spaces beyond linear separators, specifically, classifiers with smooth boundaries in Appendix \ref{appendix: safely-reliable region for classifiers with smooth boundaries}.
}
\begin{figure}[h]
    \centering
\includegraphics[width = 0.265\columnwidth]{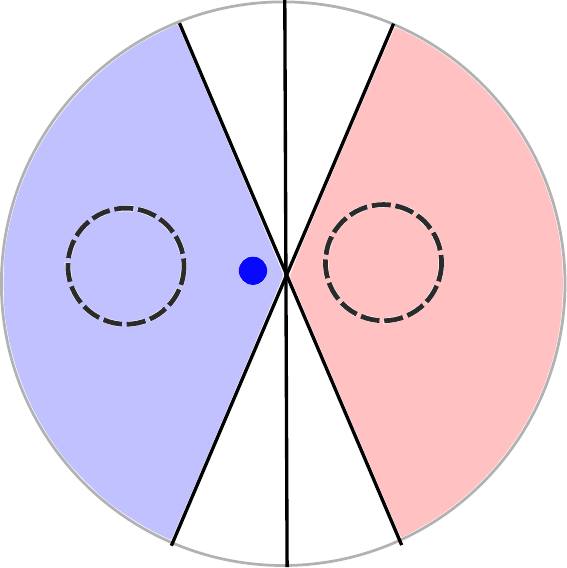}
    \caption{The perturbation set is represented by two dashed balls. This lies inside the agreement region but contains points with different labels.}
    \label{fig: different label}
\end{figure}

\section{On computational efficiency}
\label{subsec: computational efficiency}
\blue{Given the definition, it is possible to implement computationally efficient robustly-reliable learners. For example, for linear separator concept classes under bounded $L_2$-norm attack. The optimal robustly-reliable learner \cready{$\cL_{\text{opt}}$}, described above may be implemented via a linearly constrained quadratic program that computes the (squared) distance of the test point $z$ to the closest point $z'$ in the disagreement region. This gives us the reliability radius, since for linear separators one must cross the decision boundary to perturb a point to a differently labeled point
\begin{align*}
    \min_{w,w',z'} ||z-&z'||^2\\
    \text{s.t.}\qquad\quad  \langle w,x_i\rangle y_i&\ge 0,\qquad \text{for each }(x_i,y_i)\in S,\\
     \langle w',x_i\rangle y_i&\ge 0,\qquad \text{for each }(x_i,y_i)\in S,\\
     \langle w,z'\rangle \langle w',z'\rangle &\le 0.
\end{align*}

Given training sample $S$, for any given test point $z$, the learner $\cL$ can efficiently compute the solution $s^*$ to the above program and output $\sqrt{s^*}$ as the reliability radius. We show that the variant of this objective also provides a reliability radius for a wide range of hypothesis classes under $L_2$ ball attacks \cready{(see Lemma \ref{lemma: reliability radius objective})}. In addition, we can relax this objective into a regularized objective that gives a lower bound on the reliability radius of $||z-z^*||^2$, when $z^*$ is the solution of
\begin{align*}
    h_1,h_2,z^* = \argmin_{h,h',z'} ||z-&z'||^2 + \lambda(\hat{R}(h, S \cup \{(z',0)\}) + \hat{R}(h', S \cup \{(z',1)\}))
\end{align*}
when $\hat{R}(h,S)$ is the empirical risk of $h$ on $S$. We provide a more detailed discussion in Appendix \ref{appendix: computational efficiency}.

}
\section{Robustly-reliable learning under distribution shift}
\label{sec: distribution shift}

\begin{figure*}[t]
    \centering
 \includegraphics[width=0.35\columnwidth]{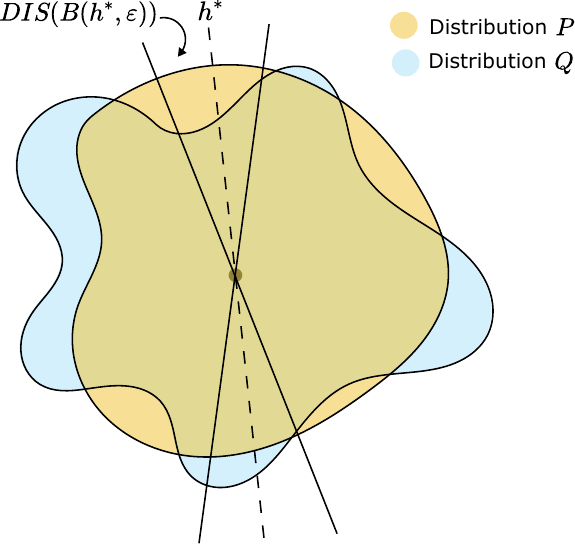}
 \quad\quad
    \includegraphics[width=0.35\columnwidth]{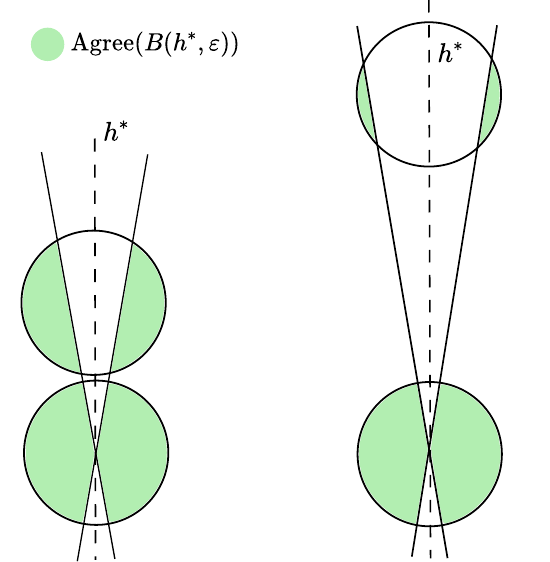}
    \caption{The disagreement region and the agreement region under a distribution shift where $\cP$ and $\cQ$ are isotropic (left) and where there is a mean shift (right).}
    \label{fig: distribution shift}
    \vspace{-4mm}
\end{figure*}

\noindent We now consider the reliability aspect for distribution shift, a different kind of test-time robustness challenge when the test data comes from a different distribution than the training data. Formally, let $\cP$ be the training distribution and let $\cQ$ be the test distribution. We assume the \textit{realizable distribution shift} setting, i.e. there is a target concept $h^* \in \cH$ such that the true label of any $x \in \cX$ is given by $h^*(x)$ at training time and test time, or $ \error_\cP(h^*) = \error_\cQ(h^*) = 0$. \blue{As observed earlier, points that are known by the learner to be correct (reliable) are still known to be correct even when it is drawn from a different distribution. } This reliability guarantee holds even when the distributions $\cP$ and $\cQ$ do not share a common support, a setting for which many prior theoretical works result in vacuous bounds. For example, suppose $\cX=\R^n$, $\cP$ and $\cQ$ are supported on disjoint $n$-balls, and $\cH$ is the class of linear separators. Then the total variation distance, the $\cH$-divergence \cite{kifer2004detecting,ben2010theory} as well as the discrepancy distance \cite{mansour2008domain} between $\cP$ and $\cQ$ are all 1. While recent work of \cite{hanneke2019value} does apply in this setting, they do not focus on the reliability guarantee. \blue{In this work, we are interested in quantifying the transferability of reliability guarantee transfer between distributions $\cP$ and $\cQ$. } We recall the notion of reliable prediction \cite{el2010foundations}. 
\begin{definition}[Reliability] 
\label{def:reliable}
A learner $\cL$ is reliable w.r.t.  concept space $\cH$ if, for any target concept $h^*\in\cH$, given any sample $S$ labeled by $h^*$, the learner outputs functions $h^\cL_{S}:\cX \to \cY$ and $a^\cL_{S}:\cX \to \{0,1\}$ such that for all $x\in \cX$ if $a^\cL_{S}(x) = 1$ then $h^\cL_{S}(x) = h^*(x)$. Else, if $a^\cL_{S}(x) = 0$, the learner abstains from prediction. The reliable region of  $\cL$ is  $R^{\cL}(S)=\{x\in\cX\mid a^\cL_{S}(x) = 1\}$.
\end{definition}

\noindent We define the following metric to measure the reliability of a learner under distribution shift.

\begin{definition}[$\ptoq$ reliable correctness] 
The $\ptoq$-reliable correctness of $\cL$ (at sample rate $m$, for distribution shift from $\cP$ to $\cQ$) \blue{is defined as the expected probability mass of its reliable region under $Q$ when trained on a random training $S\sim\cP^m$}, i.e. $\Pr_{x\sim Q, S\sim P^m}[x\in R^{\cL}(S)]$.

\end{definition}

{\color{black}
\noindent The disagreement coefficient was originally introduced to study the label complexity in agnostic active learning \cite{hanneke2007bound} and is also known to characterize reliable learning in the absence of any distribution shift \cite{el2010foundations}. We propose the following refinement to the notion of disagreement coefficient, which we will use to give bounds on a learner's $\ptoq$ reliable correctness. 

\begin{definition}[$\ptoq$ disagreement coefficient]\label{def-pq} For a hypothesis class $\cH$, the $\ptoq$ disagreement coefficient of $h^*\in \cH$ with respect to $\cH$ is given by
    \begin{equation*}
        \Theta_{\ptoq}(\varepsilon) =\sup_{r\ge \varepsilon}\frac{\Pr_\cQ[\operatorname{DIS}(B_\cP(h^*,r))]}{r},
    \end{equation*}
where $B_\cP(h^*,r) = \{h \in \cH \mid \Pr_\cP[h(x) \neq h^*(x)]\le r\} $.
\end{definition}

This quantifies the rate of disagreement over $\cQ$ among classifiers which are within disagreement-balls w.r.t. $h^*$ under $\cP$, relative to the version space radius. 
 The proposed metric is asymmetric between $\cP$ and $\cQ$, and also depends on the target concept $h^*$. More simple examples are in Appendix \ref{appendix: disagreement coeff}. We show that the $\ptoq$-reliable correctness of our learner may be bounded in terms of the $\ptoq$ disagreement coefficient using a uniform convergence based argument. The proof details are in Appendix \ref{appendix: distribution shift}.

\begin{theorem}\label{thm:pq-reliability}
Let $\cQ$ be a realizable distribution shift of $\cP$ with respect to $\cH$, and $h^*\in\cH$ be the target concept. 
Given sufficiently large sample size $m\ge\frac{c}{\varepsilon^2}(d+\ln\frac{1}{\delta})$, the $\ptoq$-reliable correctness of $\cL$, \blue{the optimal robustly-reliable learner}, is at least $$\Pr_{x\sim Q, S\sim P^m}[x\in R^\cL(S)]\ge 1-\Theta_{\cP \to \cQ}\cdot\varepsilon-\delta.$$ Here $c$ is an absolute constant, and $d$ 
is the VC-dimension of $\cH$.
\end{theorem}

\blue{In Appendix \ref{sec:robustness-transfer}, we show that this $\cP \to \cQ$ disagreement coefficient can be small for several examples which implies that it is possible to transfer the reliability guarantee from one distribution to the other. In particular, when learning linear separators, we provide bounds for transferring from $\beta_1$ log-concave to $\beta_2$ log-concave and to $s$-concave distributions (Theorems \ref{thm:pq-beta-concave}, \ref{thm:pq-s-concave}). In addition, when learning classifiers with general smooth classification boundaries, we provide bounds for transferring between smooth distributions (Theorem \ref{thm:pq-smooth}).

}

}
}

\section{Safely-reliable correctness under distribution shift \red{or Reliable robustness transfer}}
There is a growing practical \cite{shafahiadversarially,salman2020adversarially} as well as recent theoretical interest \cite{deng2023hardness} in the setting of `robustness transfer', where one simultaneously expects adversarial test-time attacks as well as distribution shift. We will study the reliability aspect for this more challenging setting. We note that the definition of a robustly-reliable learner does not depend on the data distribution (see Definition \ref{def:robustly-reliable-metric}) as the guarantee is pointwise. Our optimality result in Section \ref{sec: rr-w-ball} applies even when a test point is drawn from a different distribution $\cQ$. In this case, the safely-reliable region instead would have a different probability mass. 
\begin{definition}[$\ptoq$ safely-reliable correctness]\label{def:pqsrc}
The $\ptoq$ safely-reliable correctness of $\cL$ (at sample rate $m$, for distribution shift from $\cP$ to $\cQ$, w.r.t.\ robust loss $\ell$) is defined as the probability mass of its safely-reliable region under $Q$, on a sample $S\sim\cP^m$, i.e. $$\text{PQR}_\ell^\cL(S,\eta_1,\eta_2):=\Pr_{x\sim Q, S\sim P^m}[x\in \text{SR}^{\cL}_\ell(S, \eta_1,\eta_2)].$$
\end{definition}
\blue{
We consider an example when the training distribution $\cP$ is isotropic log-concave and the test distribution $\cQ_\mu$  is log-concave with its mean shifted by $\mu$ but the covariance matrix is still an identity matrix (see Figure \ref{fig: distribution shift}, right). We provide the bound on the $\ptoq$ safely-reliable correctness of this example in Appendix \ref{sec:robustness-transfer} (see Theorem \ref{thm: cross-proeduct linear separators}).
}
\cready{
\section{Reliability in the agnostic setting}\label{sec:agnostic}
In the above, we have assumed that the training samples $S$ are realizable under our concept class $\cH$, i.e.\ there is a target concept $h^*$ consistent with our (uncorrupted) data. In the agnostic setting, we can have $\min_{h\in\cH}\err_S(h)>0$, meaning no single concept is always correct. We define a {\it $\nu$-tolerably robustly-reliable} learner under test-time attacks in the agnostic setting as the learner whose reliable predictions agree with every low-error hypothesis (error at most $\nu$) on the training sample (\cite{balcan2022robustly} have proposed the corresponding definition for data poisoning attacks).

\begin{definition}[$\nu$-tolerably robustly-reliable learner w.r.t.\ $\cM$-ball attacks] \label{def:tolerably-robustly-reliable-metric}
A learner $\cL$ is robustly-reliable w.r.t. $\cM$-ball attacks for sample $S$, hypothesis space $\cH$ and robust loss function $\ell$ if, for \textbf{every} concept $h^*\in\cH$ with $\err_S(h^*)\le\nu$, the learner outputs functions $h^\cL_{S}:\cX \to \cY$ and $r^\cL_{S}:\cX \to [0,\infty)\cup\{-1\}$ such that for all $x,z\in \cX$ if $r^\cL_{S}(z)=\eta > 0$ and $z\in \B^o_\cM(x,\eta)$ then $\ell^{h^*}(h^\cL_{S},x,z)=0$. Further, if $r^\cL_{S}(z) = 0$, then $h^*(z)=h^\cL_{S}(z)$. Given sample $S$ such that some concept $h^*\in\cH$ satisfies $\err_S(h^*)\le \nu$, the robustly-reliable region of $\cL$ is defined as $RR^{\cL}(S,\nu,\eta)=\{x\in\cX\mid r^\cL_{S}(x) \ge \eta\}$ for $\nu,\eta\ge 0$. \red{also define safely-reliable region}
\end{definition}We generalize our results from Section \ref{sec: rr-w-ball} to the agnostic setting (proof details in Appendix \ref{app:agnostic}). Here $\cH_\nu(S)=\{h\in\cH\mid \err_S(h)\le \nu\}$. 

\begin{theorem}\label{thm:rr-eta-l1-agnostic}
Let $\cH$ be any hypothesis class. With respect to $\cM$-ball attacks and 
$\ellca$, for $\eta\ge 0$, 
\begin{enumerate}[label=(\alph*),leftmargin=*,topsep=4pt,partopsep=1ex,parsep=1ex]\itemsep=-4pt
    \item There exists a robustly-reliable learner $\cL$ such that $RR^{\cL}_{\text{CA}}(S,\nu,\eta)\supseteq \operatorname{Agree}(\cH_\nu(S))$,
    \item For any robustly-reliable learner $\cL$, $\rrca(S,\nu,\eta)\subseteq \operatorname{Agree}(\cH_\nu(S))$.
\end{enumerate}

\end{theorem}
}

\blue{
\section{Discussion}
In this work, we generalize the classical line of works on reliable learning to address challenging test-time environments. We propose a novel robustly-reliability criterion that is applicable to several variations of test-time attacks. Our analysis leads to an easy-to-derive algorithm that can be implemented efficiently in many cases. Additionally, we introduce a $\cP \to \cQ$ disagreement coefficient to capture the transferability of the reliability guarantee between distributions. The proposed robustly-reliability criterion and the $\cP \to \cQ$ disagreement coefficient together provide a comprehensive framework for handling test-time attacks and evaluating the reliability of learning models. This contributes to the advancement of reliable learning methodologies in the face of challenging real-world scenarios, facilitating the development of more resilient and trustworthy machine learning systems. \cready{Notably, key questions remain open, including, how to efficiently implement the algorithm for a class of neural networks, and how to learn reliably with respect to any general robust loss function?

\section{Acknowledgements}
This work was supported in part by NSF grants CCF-1910321 and SES-1919453, the Defense Advanced Research Projects Agency under cooperative agreement HR00112020003, a  Bloomberg Data Science PhD fellowship, and a Simons Investigator Award.

}

}

\bibliographystyle{alpha}
\bibliography{reference}

\appendix

\clearpage
\section{Additional related work}
\label{appendix: related work}

\textbf{Reliability. } A learning model that outputs a confidence level is valuable in practical applications, as it allows us to determine when to trust the model and when to defer the task to a human. However, it is well-known that models like neural networks can exhibit high confidence, yet still produce incorrect results \cite{guo2017calibration}. To tackle this issue, there has been a line of works on learning algorithms with uncertainty estimate \cite{williams2006gaussian, blundell2015weight, gal2016dropout,lakshminarayanan2017simple,maddox2019simple}.  Unlike prior work, our results take into account the relevant notion of robust loss. In particular, we extend the reliability guarantees in
perfect selective classification \cite{el2012active} and reliable-useful learning model \cite{rivest1988learning} to different robust losses under a test-time attack. Prior work on reliability under data poisoning attacks \cite{balcan2022robustly} obtained similar results on training-time attacks, by providing guarantees that the learner is always correct at any point that it makes a prediction provided the training data corruption does not exceed a point-specific threshold. Our work is also related to learning algorithms with an abstention option \cite{yan2016active,cortes2016boosting, cortes2018online,puchkin2021exponential,zhuefficient}.

\noindent \textbf{Robustness. } Robustness against adversarial attacks is essential for the safe deployment of machine learning models in the real world. Our focus in this work is on perturbation attacks, where we aim to provide learners that remain robust even when the test data points are perturbed. It is known that many modern approaches such as deep neural networks fail in this case even when the perturbation is human-imperceptible \cite{szegedy2013intriguing, goodfellow2014explaining}. There has been a lot of empirical effort \cite{madry2018towards,zhang2019theoretically, shafahi2019adversarial, rice2020overfitting,tramer2018ensemble,carmon2019unlabeled} as well as theoretical effort \cite{tsiprasrobustness,schmidt2018adversarially,javanmard2020precise, raghunathan2020understanding,montasser2019vc,montasser2021adversarially,goldwasser2020beyond, balcan2022nash} to develop learners with improved robustness, and more broadly to understand various aspects of adversarial robustness. In particular, there is a line of work on certified robustness \cite{cohen2019certified,lecuyer2019certified,li2019certified} which provides a pointwise guarantee that the prediction does not change, so long as the attack strength is within a learner-specified `radius' for the point. While the certified robustness research focuses on this consistency aspect, our work addresses the reliability aspect where we hope to guarantee that the prediction is also correct.

\noindent \textbf{Distribution shift. }
A distribution shift refers to the phenomenon where the training distribution differs from the test distribution which often leads to a degradation in the learner's performance. This has been studied under several different settings \cite{quinonero2008dataset}, ranging from covariate shift \cite{shimodaira2000improving,huang2006correcting,bickel2007discriminative}, and domain adaptation \cite{mansour2008domain, ben2010theory, zhang2019bridging, zhang2020localized} to transfer learning \cite{pan2010survey,torrey2010transfer, hanneke2019value}. Many algorithms have been proposed to deal with the shift which involves encouraging invariance between different domains \cite{sun2016deep,arjovsky2019invariant,rosenfeldrisks}  or taking into account the worst-case subpopulation \cite{namkoong2016stochastic, duchi2021learning, sagawadistributionally, cao2019learning}. While prior work typically focuses on the average performance on the target domain or subpopulation, we provide point-wise reliability guarantees.

\noindent \textbf{Learning with noise. } There is extensive classic literature on learning methods which are tolerant or robust to noise \cite{kearns1994introduction,vapnik1998statistical}---including efficient learning under bounded or Massart noise \cite{awasthi2015efficient,diakonikolas2019distribution}, agnostic active learning \cite{kalai2008agnostically,daniely2016complexity}, learning under malicious noise \cite{awasthi2014power,balcan2021noise}, to list a few. Recent work has considered reliable learning under some of these classic noise models \cite{hopkins2020noise,balcan2022robustly}.

\section{Additional proof details for robustly-reliable learners w.r.t. metric ball attacks}\label{appendix: robustly-reliable metric ball}
\begin{theorem}\label{thm:rr-eta-l1}
Let $\cH$ be any hypothesis class. With respect to $\cM$-ball attacks and $\ellca$, for $\eta\ge 0$, 
\begin{enumerate}[label=(\alph*),leftmargin=*,topsep=4pt,partopsep=1ex,parsep=1ex]\itemsep=-4pt
    \item there exists a robustly-reliable learner $\cL$ such that $\rrca(S,\eta)\supseteq \operatorname{Agree}(\cH_0(S))$, and
    \item for any robustly-reliable learner $\cL$, $\rrca(S,\eta)\subseteq \operatorname{Agree}(\cH_0(S))$.
\end{enumerate}
The results hold for $\rrtl$ as well.
\end{theorem}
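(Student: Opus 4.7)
The plan is to observe that both $\ellca^{h^*}$ and $\elltl^{h^*}$, when evaluated at $(x,z)$ with $x=z$, reduce to the indicator $\bbI[h^{\cL}_S(z)\neq h^*(z)]$ (the extra clause $h^*(z)=h^*(x)$ in $\ellca$ is automatic when $x=z$). Hence the robustly-reliable condition at $z$ is equivalent, after instantiating the adversary at $x=z$, to pointwise reliability in the classical sense of \cite{el2010foundations}; the metric-ball perturbation set then only controls the numerical value of the reported radius, not whether reliability is possible at $z$ at all.

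For the achievability part (a), I construct the learner explicitly. Since $h^*\in\cH_0(S)$, the version space is nonempty; pick any $h_0\in\cH_0(S)$ by ERM and set $h^{\cL}_S=h_0$. For the reliability radius, set $r^{\cL}_S(z)$ to any value at least $\eta$ on $\operatorname{Agree}(\cH_0(S))$, and $r^{\cL}_S(z)=-1$ elsewhere. To verify robust reliability, suppose $r^{\cL}_S(z)=\eta'>0$ and $z\in\B^o_\cM(x,\eta')$: then $z\in\operatorname{Agree}(\cH_0(S))$, so $h^{\cL}_S(z)=h_0(z)=h^*(z)$, which makes $\bbI[h^{\cL}_S(z)\neq h^*(z)]=0$ and kills both $\ellca^{h^*}(h^{\cL}_S,x,z)$ and $\elltl^{h^*}(h^{\cL}_S,x,z)$ regardless of $x$. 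The $r=0$ case is vacuous with this construction.

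For the necessity part (b), take any robustly-reliable $\cL$ and any $z\in\rrca(S,\eta)$, so $r^{\cL}_S(z)\geq\eta\geq 0$. The key move is to instantiate the reliability condition at $x=z$. If $r^{\cL}_S(z)=\eta'>0$ then $z\in\B^o_\cM(z,\eta')$, so $\ellca^{h^*}(h^{\cL}_S,z,z)=0$ forces $h^{\cL}_S(z)=h^*(z)$ (since $h^*(z)=h^*(x)$ is automatic); if $r^{\cL}_S(z)=0$ the definition gives $h^{\cL}_S(z)=h^*(z)$ directly. Because $\cL$ is robustly-reliable \emph{for any} target $h^*\in\cH$ consistent with $S$, this identity must hold for every $h^*\in\cH_0(S)$ simultaneously, so all version-space hypotheses agree at $z$, i.e.\ $z\in\operatorname{Agree}(\cH_0(S))$. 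The $\elltl$ case is identical since $\elltl^{h^*}(h,z,z)=\bbI[h(z)\neq h^*(z)]$.

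I do not expect a substantive obstacle; the proof is a clean reduction via the choice $x=z$, which collapses the adversarial condition at $z$ onto the non-adversarial reliable-learning characterization. The only mild subtlety is reconciling the ``for all $\eta\geq 0$'' quantifier in part (a) with the range $[0,\infty)\cup\{-1\}$ of the radius: this is handled by setting $r^{\cL}_S$ as large as desired on $\operatorname{Agree}(\cH_0(S))$, or by implicitly parameterizing the constructed learner by $\eta$.
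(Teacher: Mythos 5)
Your proposal is correct and follows essentially the same route as the paper's proof: part (a) uses an ERM plus the agreement region of the version space with a large (or unbounded) reported radius, and part (b) instantiates the robust-reliability requirement at $x=z$, using $z\in\B^o_\cM(z,\eta')$ and the ``for any target $h^*$'' quantifier to force all version-space hypotheses to agree at $z$. One small point in your favor: you explicitly handle the $r^{\cL}_S(z)=0$ case via the last clause of Definition~\ref{def:robustly-reliable-metric}, which the paper's converse argument glosses over (it assumes $r^{\cL}_S(z)>0$), and you stay within the stated codomain $[0,\infty)\cup\{-1\}$ rather than setting $r^{\cL}_S(z)=\infty$.
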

\begin{proof}
(Proof of Theorem \ref{thm:rr-eta-l1})
The robustly-reliable learner $\cL$ is given as follows. 
 Set $h^{\cL}_S=\argmin_{h\in\cH}\operatorname{err}_S(h)$ i.e. an ERM over $S$, and $r^{\cL}_S(z)=\infty$ if $z\in\operatorname{Agree}(\cH_0(S))$, else $r^{\cL}_S(z)=-1$.
 By realizability, $\operatorname{err}_S(h^{\cL}_S)\le \operatorname{err}_S(h^*)=0$, or $h^{\cL}_S\in\cH_0(S)$.  We first show that $\cL$ is robustly-reliable. For $z\in\cX$, if $r^{\cL}_S(z)=\eta> 0$, then $z\in \operatorname{Agree}(\cH_0(S))$. 
 We have $h^*(z)=h^{\cL}_S(z)$ since the classifiers $h^*,h^{\cL}_S\in \cH_0(S)$ and $z$ lies in the agreement region of classifiers in $\cH_0(S)$ in this case. Thus, we have $\ellca^{h^*}(h^{\cL}_S,x,z)=0$ for any $x$ such that $z\in\B_\cM^o(x,\eta)$. The $\eta=0$ case corresponds to reliability in the absence of test-time attack, so \cite{el2010foundations} applies. Therefore, $\rrca(S,\eta)\supseteq \operatorname{Agree}(\cH_0(S))$ for all $\eta\ge 0$ follows from the setting $r^{\cL}_S(z)=\infty$ if $z\in\operatorname{Agree}(\cH_0(S))$.

Conversely, let $z\in \operatorname{DIS}(\cH_0(S))$. There exist $h_1,h_2\in \cH_0(S)$ such that $h_1(z)\ne h_2(z)$. If possible,  let there be a robustly-reliable learner $\cL$ such that $z\in \rrca(S,\eta)$ for some $\eta>0$. By definition of the robust-reliability region, we must have $r^{\cL}_S(z)> 0$. By definition of a  ball, we have $z\in\B_\cM^o(z,\eta)$ for any $\eta>0$, and therefore $\ellca^{h^*}(h^{\cL}_S,z,z)=0$. But then we must have $h^{\cL}_S(z)=h^*(z)$ by definition of $\ellca$. But we can set $h^*=h_1$ or $h^*=h_2$ since both are consistent with $S$. But $h_1(z)\ne h_2(z)$, and therefore $h^{\cL}_S(z)\ne h^*(z)$ for one of the above choices for $h^*$, contradicting that $\cL$ is robustly-reliable.    
\end{proof}

\begin{theorem}\label{thm:mball-l3-ub}
Let $\cH$ be any hypothesis class. With respect to $\cM$-ball attacks and $\ellst$, for $\eta \geq 0$,
\begin{enumerate} [label=(\alph*),leftmargin=*,topsep=4pt,partopsep=1ex,parsep=1ex]\itemsep=-4pt
    \item there exists a robustly reliable learner $\cL$ such that $\rrst(S,\eta)\supseteq A_{\text{ST}} $, and
    \item for any robustly-reliable learner $\cL$, $\rrst(S,\eta)\subseteq A_{\text{ST}} $,
\end{enumerate}
where \blue{$A_{\text{ST}} = \{z \mid \B^o(z,\eta) \subseteq \operatorname{Agree}(\cH_0(S)) \wedge \forall h \in \cH_0(S), h(x)=h(z),\forall x \in \B^o(z,\eta) \}$}.

\end{theorem}

\begin{proof}
(Proof of Theorem \ref{thm:mball-l3-ub})
Given sample $S$, consider the learner $\cL$ which outputs $h^{\cL}_S=\argmin_{h\in\cH}\operatorname{err}_S(h)$, and $r^{\cL}_S(z)$ is given by the largest $\eta> 0$ for which $\B^o(z,\eta)\subseteq\operatorname{Agree}(\cH_0(S)) $ and $ h(x)=h(z), \;\forall\;x\in \B^o(z,\eta), h\in \cH_0(S)$, else $\eta=0$ if $z\in \operatorname{Agree}(\cH_0(S))$, and $-1$ otherwise. Note that the supremum exists here since a union of open sets is also open. By realizability, $\operatorname{err}_S(h^{\cL}_S)\le \operatorname{err}_S(h^*)=0$, or $h^{\cL}_S\in\cH_0(S)$. We first show that $\cL$ is robustly-reliable w.r.t. $\cM$ for loss $\ellst$. For $z\in\cX$, if $r^{\cL}_S(z)=\eta\ge 0$, then $\B^o(z,\eta)\subseteq\operatorname{Agree}(\cH_0(S))$, in particular $z\in \operatorname{Agree}(\cH_0(S))$. 
Moreover, by definition, for any $x\in \B^o(z,\eta)$, we have $h^{\cL}_S(z)=h^{\cL}_S(x)$ by construction. Putting together, and using the property that distance functions of a metric are symmetric, we have $h^{\cL}_S(z)=h^*(x)$ for any $x$ such that $z\in\B^o(x,\eta)$. Thus, we have $\ellst^{h^*}(h^{\cL}_S,x,z)=0$ for any $x$ such that $z\in\B^o(x,\eta)$. Thus $\cL$ satisfies Definition \ref{def:robustly-reliable-metric}.

Conversely, we will show that for any robustly-reliable learner $\cL$ w.r.t. $\ellst$, for any $\eta>0$,
\begin{equation*}
    \rrst(S,\eta) \subseteq \operatorname{Agree}(\cH_0(S)),
\end{equation*}
which follows from similar arguments from Theorem \ref{thm:rr-eta-l1} which also apply to the $\ellst$ loss. Let $z\in \operatorname{DIS}(H_0(S))$. There exists $h_1,h_2\in H_0(S)$ such that $h_1(z)\ne h_2(z)$. If possible,  let there be a robustly-reliable learner $\cL$ such that $z\in \rrst(S,\eta)$ for some $\eta>0$. By definition of the robust-reliability region, we must have $r^{\cL}_S(z)> 0$. By definition of a closed ball, we have $z\in\B^o_\cM(z,\eta)$ for any $\eta>0$, and therefore $\ellst^{h^*}(h^{\cL}_S,z,z)= \bbI[h_S^{\cL}(z) \neq h^*(z)] = 0$ which implies that $h^{\cL}_S(z)=h^*(z)$. But we can set $h^*=h_1$ or $h^*=h_2$ since both are consistent with $S$. But $h_1(z)\ne h_2(z)$, and therefore $h^{\cL}_S(z)\ne h^*(z)$ for one of the above choices for $h^*$, contradicting that $\cL$ is robustly-reliable. Next, we will show that, for any $\eta>0$,
\begin{equation*}
    \rrst(S,\eta)\subseteq  \{z\mid\B_\cM^o(z,\eta)\subseteq\operatorname{Agree}(\cH_0(S)) \}.
\end{equation*}
We will prove this by contradiction. Suppose $z\in \operatorname{Agree}(H_0(S))$, but there exists $x' \in \B_\cM^o(z,\eta)$ such that $x' \not\in \operatorname{Agree}(H_0(S))$. Let there be a robustly-reliable learner $\cL$ such that $z\in \rrst(S,\eta)$. By definition, we have
$\ellst^{h^*}(h^{\cL}_S,x,z)=0$ for any $x$ that $z \in \B_\cM^o(x,\eta)$. This implies that $\ellst^{h^*}(h^{\cL}_S,x',z)=0$ that is $h^{\cL}_S(z) = h^*(x')$. Because $x' \not\in \operatorname{Agree}(H_0(S))$, there exists $h_1,h_2\in H_0(S)$ such that $h_1(x')\ne h_2(x')$. We can set $h^* = h_1$ or $h^* = h_2$ since both are consistent with $S$. But $h_1(x')\ne h_2(x')$, and therefore $h^{\cL}_S(z)\ne h^*(z)$ for one of the above choices for $h^*$, contradicting that $\cL$ is robustly-reliable. Finally, we will show that, for any $\eta>0$,
\begin{equation*}
\rrst(S,\eta)\subseteq  \{z\mid\B_\cM^o(z,\eta)\subseteq\operatorname{Agree}(\cH_0(S)) \wedge h(x)=h(z), \;\forall\;x\in \B_\cM^o(z,\eta), h\in \cH_0(S)\}.
\end{equation*}
Let $z$ be a data point such that $\B_\cM^o(z,\eta) \subseteq \operatorname{Agree}(H_0(S))$ but there exists $x' \in \B_\cM^o(z,\eta)$ such that $h(x') \neq h(z)$ for $h \in H_0(S)$. Let there be a robustly-reliable learner such that $z \in \rrst(S,\eta)$. This implies that $\ellst(h_S^{\cL},x,z) = 0$ for any $x$ that $z \in \B_\cM^o(z,\eta)$. However, $\ellst(h_S^{\cL},x',z) = \bbI[h_S^{\cL}(z) \neq h^*(x')] = \bbI[h_S^{\cL}(z) \neq h_S^{\cL}(x')] \neq 0$, contradicting that $\cL$ is robustly-reliable.
\end{proof}

\begin{theorem}\label{thm:mball-l4-ub}
Let $\cH$ be any hypothesis class. With respect to $\cM$-ball attacks and $\ellia$, for $\eta \geq 0$,
\begin{enumerate} [label=(\alph*),leftmargin=*,topsep=4pt,partopsep=1ex,parsep=1ex]\itemsep=-4pt
    \item there exists a robustly reliable learner $\cL$ such that $\rria(S,\eta)\supseteq A_{\text{IA}} $, and
    \item for any robustly-reliable learner $\cL$, $\rria(S,\eta)\subseteq A_{\text{IA}} $,
\end{enumerate}
where \blue{$A_{\text{IA}}  =  (A_{\text{ST}} \cap \{z \mid h^*(z) = 1\})\cup \{ z \mid z \in \operatorname{Agree}(\cH_0(S)) \wedge h^*(z) = 0\}$.}
\end{theorem}

\blue{
\begin{proof}
(Proof of Theorem \ref{thm:mball-l4-ub}) The construction of the robustly-reliable learner for $\ellia$ is similar to the robustly-reliable learner for $\ellst$. The key difference is that the reliability radius now depends on the predicted label. Given sample $S$, consider the learner $\cL$ which outputs $h^{\cL}_S=\argmin_{h\in\cH}\operatorname{err}_S(h)$.  
\begin{enumerate}
\item If $h^{\cL}_S(z) = 1$, $r^{\cL}_S(z)$ is given by the largest $\eta> 0$ for which $\B^o(z,\eta)\subseteq\operatorname{Agree}(\cH_0(S)) $ and $ h(x)=h(z), \;\forall\;x\in \B^o(z,\eta), h\in \cH_0(S)$ and $\eta=0$ when $z\in \operatorname{Agree}(\cH_0(S))$, and $-1$ otherwise. Note that the supremum exists here since a union of open sets is also open.
\item If $h^{\cL}_S(z) = 0$, $r^{\cL}_S(z) = \infty$ when $z\in \operatorname{Agree}(\cH_0(S))$, and $-1$ otherwise. 
\end{enumerate}

 We first show that $\cL$ is robustly-reliable w.r.t. $\cM$ for loss $\ellia$. By realizability, $\operatorname{err}_S(h^{\cL}_S)\le \operatorname{err}_S(h^*)=0$, or $h^{\cL}_S\in\cH_0(S)$. For $z\in\cX$, if $h^{\cL}_S(z) = 1$ and $r^{\cL}_S(z)=\eta\ge 0$, then $\B^o(z,\eta)\subseteq\operatorname{Agree}(\cH_0(S))$, in particular $z\in \operatorname{Agree}(\cH_0(S))$. 
Moreover, by definition, for any $x\in \B^o(z,\eta)$, we have $h^{\cL}_S(z)=h^{\cL}_S(x)$ by construction. Putting together, and using the property that distance functions of a metric are symmetric, we have $h^{\cL}_S(z)=h^*(x)$ for any $x$ such that $z\in\B^o(x,\eta)$. Thus, we have $\ellst^{h^*}(h^{\cL}_S,x,z)=0$ for any $x$ such that $z\in\B^o(x,\eta)$. This also implies that $\ellia^{h^*}(h^{\cL}_S,x,z)=0$ since $\ellst$ implies $\ellia$.

On the other hand, if $h^{\cL}_S(z) = 0$ and  $r^{\cL}_S(z)= \infty$, we have $z\in \operatorname{Agree}(\cH_0(S))$. This implies that $h^{\cL}_S(z)=h^*(z)= 0.$ For any $x$ that $z \in \cU_{\text{IA}}(x,h^*)$, by the incentive-aware property of the adversary, if $h^*(x) = 1$, we must have $\cU_{\text{IA}}(x,h^*) = \{x\}$ which implies that $z = x$ and $h^*(z) = h^*(x) = 1$. In our case, $h^*(z)= 0$ also implies that we must also have $h^*(x) = 0$. Therefore, $\ellia^{h^*}(h^{\cL}_S,x,z)= \bbI[h^{\cL}_S(z) \neq h^*(x) \wedge z \in \cU_{\text{IA}}(x,h^*)] \leq \bbI[h^{\cL}_S(z) \neq h^*(x)] = 0.$ Therefore, we can conclude that $\cL$ satisfies Definition \ref{def:robustly-reliable-metric}.

Conversely, we will show that for any robustly-reliable learner $\cL$ w.r.t. $\ellia$, for any $\eta>0$,
\begin{equation*}
    \rria(S,\eta)\cap\{z \mid h^{\cL}_S(z) = 0 \} \subseteq \operatorname{Agree}(\cH_0(S)) \cap\{z \mid h^*(z) = 0 \}.
\end{equation*}
Since $z \in \cU_\text{IA}(z)$, for $z$ to lie in the robustly-reliable region, we need $\ellia^{h^*}(h^{\cL}_S,z,z) = \bbI[h^{\cL}_S(z) \neq h^*(z)] = 0$ that is $z$ must be reliable. By similar arguments from Theorem \ref{thm:rr-eta-l1},  we have the result.

Next, we will show that, for any $\eta>0$,
\begin{equation*}
    \rria(S,\eta) \cap \{z \mid h^{\cL}_S(z) = 1 \}\subseteq  \{z\mid\B_\cM^o(z,\eta)\subseteq\operatorname{Agree}(\cH_0(S)) \}.
\end{equation*}
We will prove this by contradiction. Suppose $z\in \operatorname{Agree}(H_0(S))$, but there exists $x' \in \B_\cM^o(z,\eta)$ such that $x' \not\in \operatorname{Agree}(H_0(S))$. Let there be a robustly-reliable learner $\cL$ such that $z\in \rria(S,\eta)$ and $h^{\cL}_S(z) = 1$. Because $x' \not\in \operatorname{Agree}(H_0(S))$, there exists $h_1\in H_0(S)$ such that $h_1(x')=0$. We may have $h^* = h_1$ since $h_1$ is consistent with $S$. However, we have $\cU_{\text{IA}}(x', h_1) = \B_\cM^o(x',\eta)$ and 
$$\ellia^{h^*}(h^{\cL}_S,x',z)= \bbI[h(z) \neq h^*(x') \wedge z \in \cU_{\text{IA}}(x')] = 1$$
which contradicts with $z$ lies in the robustly-reliable region. Furthermore, with a similar argument that we can't have $h^*(x') = 0$, we can show that the agreed label of any $x$ must be $1$,
\begin{align*}
    &\rria(S,\eta)\cap \{z \mid h^{\cL}_S(z) = 1 \}\\
&\subseteq  \{z\mid\B_\cM^o(z,\eta) \subseteq\operatorname{Agree}(\cH_0(S)) \wedge h(x)= 1, \;\forall\;x\in \B_\cM^o(z,\eta), h\in \cH_0(S)\}.\\
&= A_\text{ST} \cap \{z \mid h^*(z) = 1 \}
\end{align*}
This concludes that for any robustly-reliable learner $\cL$ with respect to $\ellia$, we have
$$
\rria(S,\eta) \subseteq (A_{\text{ST}} \cap \{z \mid h^*(z) = 1\})\cup \{ z \mid z \in \operatorname{Agree}(\cH_0(S)) \wedge h^*(z) = 0\}.
$$
\end{proof}
}

\section{General robustly-reliable learner}\label{appendix: general robustly-reliable}
\begin{definition}[General robustly-reliable learner] \label{def:robustly-reliable}
A learner $\cL$ is robustly-reliable for sample $S$ w.r.t.\ a perturbation function $\cU$, concept space $\cH$ and robust loss function $\ell$ if, for any target concept $h^*\in\cH$, given $S$ labeled by $h^*$, the learner outputs functions $h^\cL_{S}:\cX \to \cY$ and $a^\cL_{S}:\cX \to \{0,1\}$ such that for all $z\in \cX$ if $a^\cL_{S}(z) = 1$ and $z\in\cU(x)$ then $\ell^{h^*}(h^\cL_{S},x,z)=0$. On the other hand, if $a^\cL_{S}(z) = 0$, our learner abstains from prediction. The robustly-reliable region of a learner $\cL$ is defined as $\text{RR}^{\cL}(S)=\{x\in\cX\mid a^\cL_{S}(x) = 1\}$, the region that the learner $\cL$ does not abstain. 
\end{definition}

\noindent We again obtain the pointwise optimal characterization of the robustly-reliable region in terms of the agreement region.  For $\ellca, \elltl$  the robustly-reliable region would be the same as the {region where we can be sure of what the correct label is: i.e.\ the agreement region of the version space} while for {$\ellst$, it is the region of points $z$ for which $\cU^{-1}(z)$ lies inside the agreement region of the version space, and all classifiers in the version space agree on $\cU^{-1}(z)$}.

\begin{theorem}
\label{thm: rrr-l3-lb-general}
Let $\cH$ be any hypothesis class, and $\cU$ be the perturbation function.

\begin{enumerate}[label=(\alph*),leftmargin=*,topsep=0pt,partopsep=1ex,parsep=1ex]\itemsep=-4pt   
    \item There exists a robustly-reliable learner $\cL$ w.r.t.\ $\cU$ and $\ellca$ such that $\rrca(S)\supseteq \operatorname{Agree}(\cH_0(S))$. Moreover, for any robustly-reliable learner $\cL$, $\rrca(S)\subseteq \operatorname{Agree}(\cH_0(S))$.
    \item The same results hold for $\rrtl$ as well.
    \item There exists a robustly-reliable learner $\cL$ w.r.t.\ $\cU$ and $\ellst$, such that $\rrst(S)\supseteq A_{\text{ST}} $, and for any $\cL$  robustly-reliable  w.r.t.\ $\ellst$, $\rrst(S)\subseteq  A_{\text{ST}}$, where $A_{\text{ST}} = \{z\mid\cU^{-1}(z)\subseteq\operatorname{Agree}(\cH_0(S)) \wedge h(x)=h(z), \;\forall\;x\in \cU^{-1}(z), h\in \cH_0(S)\}.$
    \item \blue{There exists a robustly-reliable learner $\cL$ w.r.t.\ $\cU$ and $\ellia$, such that $\rria(S)\supseteq A_{\text{IA}} $, and for any $\cL$  robustly-reliable  w.r.t.\ $\ellia$, $\rria(S)\subseteq  A_{\text{IA}}$, where $A_{\text{IA}} =  (A_{\text{ST}} \cap \{z \mid h^*(z) = 1\})\cup \{ z \mid z \in \operatorname{Agree}(\cH_0(S)) \wedge h^*(z) = 0\}$}.
\end{enumerate}

\label{thm: upper bound rr for l3 }
\end{theorem}

\begin{proof}

We first establish part (a).
Given sample $S$, consider the learner $\cL$ which outputs $h^{\cL}_S=\argmin_{h\in\cH}\operatorname{err}_S(h)$ i.e. an ERM over $S$, and $a^{\cL}_S(z)=\bbI[z\in\operatorname{Agree}(\cH_0(S))]$. By realizability, $\operatorname{err}_S(h^{\cL}_S)\le \operatorname{err}_S(h^*)=0$, or $h^{\cL}_S\in\cH_0(S)$. We first show that $\cL$ is robustly-reliable. For $z\in\cX$, if $a^{\cL}_S(z)=1$, then $z\in \operatorname{Agree}(\cH_0(S))$. 
We have $h^*(z)=h^{\cL}_S(z)$ since the classifiers $h^*,h^{\cL}_S\in \cH_0(S)$ and $z$ lies in the agreement region of classifiers in $\cH_0(S)$. Thus, we have $\ellca^{h^*}(h^{\cL}_S,x,z)=0$ for any $x$ such that $z\in\cU(x)$. $\rrca(S)\supseteq \operatorname{Agree}(\cH_0(S))$ follows from the choice of $a^{\cL}_S(z)=\bbI[z\in\operatorname{Agree}(\cH_0(S))]$.

On the other hand, Let $z\in \operatorname{DIS}(H_0(S))$. There exist $h_1,h_2\in H_0(S)$ such that $h_1(z)\ne h_2(z)$. If possible,  let there be a robustly-reliable learner $\cL$ such that $z\in \rrca(S)$. That is, $a^{\cL}_S(z)=1$. We have $z\in\cU(z)$, and therefore $\ellca^{h^*}(h^{\cL}_S,z,z)=0$. But then we must have $h^{\cL}_S(z)=h^*(z)$ by definition of $\ellca$. We can set $h^*=h_1$ or $h^*=h_2$ since both are consistent with $S$. However, $h_1(z)\ne h_2(z)$, and therefore $h^{\cL}_S(z)\ne h^*(z)$ for one of the above choices for $h^*$, contradicting that $\cL$ is robustly-reliable.

This completes the proof of (a). Essentially the same argument may be used to establish (b), by substituting $\ellca$ with $\elltl$. We will now turn our attention to part (c).

Given sample $S$, consider the learner $\cL$ which outputs $h^{\cL}_S=\argmin_{h\in\cH}\operatorname{err}_S(h)$, that is an ERM over $S$, and $a^{\cL}_S(z)=\bbI[\cU^{-1}(z)\in\operatorname{Agree}(\cH_0(S))\wedge h^{\cL}_S(x_1)=h^{\cL}_S(x_2)\;\forall\;x_1,x_2\in \cU^{-1}(z)]$. By realizability, $\operatorname{err}_S(h^{\cL}_S)\le \operatorname{err}_S(h^*)=0$, or $h^{\cL}_S\in\cH_0(S)$. We first show that $\cL$ is robustly-reliable w.r.t. $\ellst$. For $z\in\cX$, if $a^{\cL}_S(z)=1$, then $\cU^{-1}(z)\subseteq\operatorname{Agree}(\cH_0(S))$, in particular $z\in \operatorname{Agree}(\cH_0(S))$. 
Moreover, by definition, for any $x$ that $z\in U(x)$, we have $h^{\cL}_S(z)=h^{\cL}_S(x)$ by construction of $a^{\cL}_S(z)$. Putting together, we have $h^{\cL}_S(z)=h^*(x)$ for any $x$ such that $z\in\cU(x)$. Thus, we have $\ellst^{h^*}(h^{\cL}_S,x,z)=0$ for any $x$ such that $z\in\cU(x)$.

On the other hand, for any robustly-reliable learner $\cL$, we will show that
\begin{equation*}
    \rrst(S) \subseteq \operatorname{Agree}(\cH_0(S)).
\end{equation*}

Let $z\in \operatorname{DIS}(H_0(S))$. There exists $h_1,h_2\in H_0(S)$ such that $h_1(z)\ne h_2(z)$. If possible,  let there be a robustly-reliable learner $\cL$ such that $z\in \rrst(S)$. That is, $a^{\cL}_S(z)=1$. We have $z\in\cU^{-1}(z)$, and therefore $\ellst^{h^*}(h^{\cL}_S,z,z)= \bbI[h_S^{\cL}(z) \neq h^*(z)] = 0$ which implies that $h^{\cL}_S(z)=h^*(z)$. But we can set $h^*=h_1$ or $h^*=h_2$ since both are consistent with $S$. However, $h_1(z)\ne h_2(z)$, and therefore $h^{\cL}_S(z)\ne h^*(z)$ for one of the above choices for $h^*$, contradicting that $\cL$ is robustly-reliable. Next, we will show that 
\begin{equation*}
    \rrst(S)\subseteq  \{z\mid\cU^{-1}(z)\subseteq\operatorname{Agree}(\cH_0(S)) \}.
\end{equation*}
We will prove this by contradiction, $z\in \operatorname{Agree}(H_0(S))$ but there exists $x' \in \cU^{-1}(z)$ such that $x' \not\in \operatorname{Agree}(H_0(S))$. Let there be a robustly-reliable learner $\cL$ such that $z\in \rrst(S)$. By definition, we have
$\ellst^{h^*}(h^{\cL}_S,x,z)=0$ for any $x$ that $z \in \cU(x)$. This implies that $\ellst^{h^*}(h^{\cL}_S,x',z)=0$ that is $h^{\cL}_S(z) = h^*(x')$. Because $x' \not\in \operatorname{Agree}(H_0(S))$, there exists $h_1,h_2\in H_0(S)$ such that $h_1(x')\ne h_2(x')$. We can set $h^* = h_1$ or $h^* = h_2$ since both are consistent with $S$. But $h_1(x')\ne h_2(x')$, and therefore $h^{\cL}_S(z)\ne h^*(z)$ for one of the above choices for $h^*$, contradicting that $\cL$ is robustly-reliable. Next, we will show that
\begin{equation*}
\rrst(S)\subseteq  \{z\mid\cU^{-1}(z)\subseteq\operatorname{Agree}(\cH_0(S)) \wedge h(x)=h(z), \;\forall\;x\in \cU^{-1}(z), h\in \cH_0(S)\}.
\end{equation*}
Let $z$ be a data point that $\cU^{-1}(z) \subseteq \operatorname{Agree}(H_0(S))$ but there exists $x' \in \cU^{-1}(z)$ that $h(x') \neq h(z)$ for $h \in H_0(S)$. Let there be a robustly-reliable learner that $z \in \rrst(S)$. This implies that $\ellst(h_S^{\cL},x,z) = 0$ for any $x$ that $z \in \cU(x)$. However, $\ellst(h_S^{\cL},x',z) = \bbI[h_S^{\cL}(z) \neq h^*(x')] = \bbI[h_S^{\cL}(z) \neq h_S^{\cL}(x')] \neq 0$, contradicting that $\cL$ is robustly-reliable.
\blue{

Finally,  the proof of part d) is similar to the proof of part c).
Given sample $S$, consider the learner $\cL$ which outputs $h^{\cL}_S=\argmin_{h\in\cH}\operatorname{err}_S(h)$, that is an ERM over $S$, and 
\begin{enumerate}
    \item if $h^{\cL}_S(z) = 1$, let $a^{\cL}_S(z)=\bbI[\cU^{-1}(z)\in\operatorname{Agree}(\cH_0(S))\wedge h^{\cL}_S(x_1)=h^{\cL}_S(x_2)\;\forall\;x_1,x_2\in \cU^{-1}(z)]$;
    \item if $h^{\cL}_S(z) = 0$, let $a^{\cL}_S(z)=\bbI[z \in\operatorname{Agree}(\cH_0(S))]$.
\end{enumerate}
 By realizability, $\operatorname{err}_S(h^{\cL}_S)\le \operatorname{err}_S(h^*)=0$, or $h^{\cL}_S\in\cH_0(S)$. We first show that $\cL$ is robustly-reliable w.r.t. $\ellia$. For $z\in\cX$, if $h^{\cL}_S(z) = 1$ and $a^{\cL}_S(z)=1$, then $\cU^{-1}(z)\subseteq\operatorname{Agree}(\cH_0(S))$, in particular $z\in \operatorname{Agree}(\cH_0(S))$. Moreover, by definition, for any $x$ that $z\in U(x)$, we have $h^{\cL}_S(z)=h^{\cL}_S(x)$ by construction of $a^{\cL}_S(z)$. Putting together, we have $h^{\cL}_S(z)=h^*(x)$ for any $x$ such that $z\in\cU(x)$. Thus, we have $\ellst^{h^*}(h^{\cL}_S,x,z)=0$ for any $x$ such that $z\in\cU(x)$. This also implies that $\ellia^{h^*}(h^{\cL}_S,x,z)=0$ for any $x$ such that $z\in\cU(x)$ since $\ellst$ implies $\ellia$.

For $z \in \cX$, if $h^{\cL}_S(z) = 0$ and $a^{\cL}_S(z) = 1$, we have $z \in \operatorname{Agree}(\cH_0(S))$ and $h^{\cL}_S(z) = h^*(z) = 0$. By the incentive-aware property of the adversary, any $x$ such that $z\in\cU(x)$, we can't have $h^*(x) = 1$ since the adversary has no incentive to make any perturbation in this case. Therefore, we have $h^*(x) = 0$ and $\cU_{\text{IA}}(h^*,x) = \cU(x)$. We have $\ellia^{h^*}(h^{\cL}_S,x,z)= \bbI[h(z) \neq h^*(x) \wedge z \in \cU_{\text{IA}}(x,h^*)] =0$. We can conclude that our learner $\cL$ is robustly-reliable w.r.t. $\ellia$.

Conversely, we will show that for any robustly-reliable learner $\cL$ w.r.t. $\ellia$, for any $\eta>0$,
\begin{equation*}
    \rria(S)\cap\{z \mid h^{\cL}_S(z) = 0 \} \subseteq \operatorname{Agree}(\cH_0(S)) \cap\{z \mid h^*(z) = 0 \}.
\end{equation*}
Since $z \in \cU_\text{IA}(z)$, for $z$ to lie in the robustly-reliable region, we need $\ellia^{h^*}(h^{\cL}_S,z,z) = \bbI[h^{\cL}_S(z) \neq h^*(z)] = 0$ that is $z$ must be reliable. By similar arguments from above,  we have the result. Next, we will show that,
\begin{equation*}
    \rria(S) \cap \{z \mid h^{\cL}_S(z) = 1 \}\subseteq  \{z\mid \cU^{-1}(z) \subseteq\operatorname{Agree}(\cH_0(S)) \}.
\end{equation*}
We will prove this by contradiction. Suppose $z\in \operatorname{Agree}(H_0(S))$, but there exists $x' \in \cU^{-1}(z)$ such that $x' \not\in \operatorname{Agree}(H_0(S))$. Let there be a robustly-reliable learner $\cL$ such that $z\in \rria(S)$ and $h^{\cL}_S(z) = 1$. Because $x' \not\in \operatorname{Agree}(H_0(S))$, there exists $h_1\in H_0(S)$ such that $h_1(x')=0$. We may have $h^* = h_1$ since $h_1$ is consistent with $S$. However, we have $\cU_{\text{IA}}(x', h_1) = \cU(x')$ and 
$$\ellia^{h^*}(h^{\cL}_S,x',z)= \bbI[h(z) \neq h^*(x') \wedge z \in \cU_{\text{IA}}(x')] = 1$$
which contradicts with $z$ lies in the robustly-reliable region. Furthermore, with a similar argument that we can't have $h^*(x') = 0$, we can show that the agreed label of any $x$ must be $1$,
\begin{align*}
    &\rria(S,\eta)\cap \{z \mid h^{\cL}_S(z) = 1 \}\\
&\subseteq  \{z\mid\cU^{-1}(z) \subseteq\operatorname{Agree}(\cH_0(S)) \wedge h(x)= 1, \;\forall\;x\in \cU^{-1}(z), h\in \cH_0(S)\}.\\
&= A_\text{ST} \cap \{z \mid h^*(z) = 1 \}
\end{align*}
This concludes that for any robustly-reliable learner $\cL$ with respect to $\ellia$, we have
$$
\rria(S) \subseteq (A_{\text{ST}} \cap \{z \mid h^*(z) = 1\})\cup \{ z \mid z \in \operatorname{Agree}(\cH_0(S)) \wedge h^*(z) = 0\}.
$$

}
\end{proof}

\noindent We can also define a safely-reliable region for general perturbations as follows.
\begin{definition}
\label{def: general safely reliable}
    (General safely-reliable region) Let $\cL$ be a robustly-reliable learner w.r.t.\ a perturbation function $\cU$ for sample $S$, concept space $\cH$ and robust loss function $\ell$. The safely-reliable region of a learner $\cL$ is defined as  $\text{SR}^{\cL}(S) = \{x \in \cX \mid  \cU(x) \subseteq \text{RR}^{\cL}(S)\}$.
\end{definition}

\section{Additional proof details for safely-reliable region}
\label{appendix: safely-reliable}

\begin{lemma}
\label{lemma: ball in agreement}
Let $\cD$ be isotropic log-concave over $\R^d$ and $\cH = \{h: x \to \operatorname{sign}(\langle w_h, x\rangle ) \mid w_h \in \R^d, \lVert w_h \rVert_2 = 1\}$ be the class of linear separators. Let $\B(\cdot, \eta)$ be a $L_2$ ball perturbation with radius $\eta$. For $S \sim \cD^m$, for  $m = \cO(\frac{1}{\varepsilon^2}(\operatorname{VCdim}(\cH) + \ln\frac{1}{\delta}))$, with probability at least $1-\delta$,  we have 
\begin{equation*}
    \Pr(\{x \mid \B(x,\eta) \subseteq \operatorname{Agree}(\cH_0(S) \}) \geq 1 -  2\eta - \Tilde{\cO}(\sqrt{d}\varepsilon).
\end{equation*}
\end{lemma}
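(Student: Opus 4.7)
The plan is to apply standard VC-type uniform convergence first: for $m = \cO((\operatorname{VCdim}(\cH) + \ln(1/\delta))/\varepsilon^2)$ samples, with probability at least $1 - \delta$ over $S \sim \cD^m$, every $h \in \cH_0(S)$ satisfies $\Pr_\cD[h(x) \ne h^*(x)] \le \varepsilon$; that is, $\cH_0(S) \subseteq B_\cD^{\cH}(h^*, \varepsilon)$. Combined with the classical fact that for isotropic log-concave $\cD$ one has $\Pr_\cD[h_w \ne h_{w^*}] = \Theta(\theta(w,w^*))$ for origin-passing linear separators, this yields $\theta(w,w^*) \le c\varepsilon$, and hence $\|w - w^*\|_2 \le c\varepsilon$, for every $h_w \in \cH_0(S)$ and some universal constant $c$.

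The main step is a geometric reduction from the event $\B(x,\eta) \not\subseteq \operatorname{Agree}(\cH_0(S))$ to a one-dimensional band condition. Let $T_\eta = \{x : \B(x,\eta) \not\subseteq \operatorname{Agree}(\cH_0(S))\}$. If $x \in T_\eta$, there exist $y \in \B(x,\eta)$ and $h_w \in \cH_0(S)$ with $\operatorname{sign}(\langle w, y\rangle) \ne \operatorname{sign}(\langle w^*, y\rangle)$. I would split into three cases based on whether the signs of $\langle w^*, \cdot\rangle$ and $\langle w, \cdot\rangle$ are preserved between $y$ and $x$: either (i) both are preserved, so $h_w(x) \ne h^*(x)$, and combined with $\theta(w,w^*) \le c\varepsilon$ this places $x$ in the wedge $\{|\langle w^*, x\rangle| \le c\varepsilon \|x\|\}$; or (ii) the sign of $\langle w^*, \cdot\rangle$ flips, forcing $|\langle w^*, x\rangle| \le \eta$; or (iii) the sign of $\langle w, \cdot\rangle$ flips, giving $|\langle w, x\rangle| \le \eta$, which via the triangle inequality $|\langle w^*, x\rangle| \le |\langle w, x\rangle| + \|w - w^*\|\|x\|$ produces $|\langle w^*, x\rangle| \le \eta + c\varepsilon\|x\|$. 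In every case, $T_\eta \subseteq \{x : |\langle w^*, x\rangle| \le \eta + c\varepsilon\|x\|\}$.

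The final step is a routine tail estimate using isotropic log-concavity. The norm $\|x\|$ concentrates near $\sqrt{d}$, so $\Pr[\|x\| > R] \le \delta'$ for $R = \Theta(\sqrt{d} + \ln(1/\delta'))$. Conditioning on $\|x\| \le R$, it suffices to bound $\Pr[|\langle w^*, x\rangle| \le \eta + c\varepsilon R]$. Since $\langle w^*, x\rangle$ is a one-dimensional isotropic log-concave random variable, its density is bounded by a universal constant, so $\Pr[|\langle w^*, x\rangle| \le t] \le 2t$. Plugging in and choosing $\delta'$ polynomially small yields $\Pr[T_\eta] \le 2\eta + \widetilde{\cO}(\sqrt{d}\varepsilon)$, which is exactly the claimed lower bound.

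The main obstacle is the second step: the disagreement region $\operatorname{DIS}(\cH_0(S))$ is a sample-dependent wedge (not a slab of fixed width), and one must carefully track how an $\eta$-ball around $x$ can reach that wedge while ending up with a band inequality for $|\langle w^*, x\rangle|$ whose width grows only linearly in $\|x\|$---this linear-in-$\|x\|$ dependence is what will produce only a $\sqrt{d}\varepsilon$ loss after truncation. Once the containment $T_\eta \subseteq \{x : |\langle w^*,x\rangle| \le \eta + c\varepsilon\|x\|\}$ is established, the uniform convergence and log-concave concentration/anti-concentration estimates are standard.
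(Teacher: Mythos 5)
Your proof is correct and reaches the same bound, but via a more self-contained route than the paper. The paper invokes Theorem~14 of Balcan--Long (2013) as a black box to obtain an explicit set $A = \{\|x\|_2 < \alpha\sqrt{d}\}\cap\{|\langle w_{h^*},x\rangle| \ge C_1\alpha\varepsilon\sqrt{d}\}$ inside $\operatorname{Agree}(B_\cD^{\cH}(h^*,\varepsilon))$, then shrinks it by $\eta$ in both the norm and margin directions to form $A_\eta$ with $\B(x,\eta)\subseteq A$ whenever $x\in A_\eta$, and finally bounds $\Pr(A_\eta^c)$ with the standard log-concave facts. You instead pass through the angle relation $\Pr_\cD[h_w\ne h^*]=\Theta(\theta(w,w^*))$ (essentially Theorem~4 of the same reference) to get $\theta(w,w^*)\le c\varepsilon$ for all $h_w\in\cH_0(S)$, and then derive the band containment $T_\eta \subseteq \{|\langle w^*,x\rangle| \le \eta + c\varepsilon\|x\|\}$ by a direct case analysis on which of the two sign conditions ($\langle w^*,\cdot\rangle$ versus $\langle w,\cdot\rangle$) changes between $x$ and a witness $y\in\B(x,\eta)$. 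This effectively re-proves the content of Balcan--Long's agreement-region lemma rather than citing it, and handles the $\eta$-ball perturbation and the $\varepsilon$-cone in a single sweep instead of the paper's two-step ``build $A$, then shrink to $A_\eta$.'' Both approaches then truncate $\|x\|$ and bound a 1-dimensional isotropic log-concave band, so the endgame is identical. One small slip: from $\Pr[\|x\|>\alpha\sqrt{d}]\le e^{-\alpha+1}$, making the tail $\le\delta'$ requires $R=\Theta(\sqrt{d}\log(1/\delta'))$, not $\Theta(\sqrt{d}+\log(1/\delta'))$; this is absorbed into the $\tilde\cO$ and does not affect the conclusion.
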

\begin{proof}
(Proof of Lemma \ref{lemma: ball in agreement})
    From uniform convergence (Theorem 4.1 \cite{anthony1999neural}), for $S \sim \cD^m$, for  $m = \cO(\frac{1}{\varepsilon^2}(\operatorname{VCdim}(\cH) + \ln\frac{1}{\delta}))$, with probability at least $1-\delta$, we have
    $\operatorname{Agree}(B_\cD^{\cH}(h^*, \varepsilon)) \subseteq \operatorname{Agree}(\cH_0(S))$. From \cite{balcan2013active} (Theorem 14), for linear separators on a log-concave distribution $A = \{x: \lVert x\rVert_2 < \alpha \sqrt{d}\} \cap \{ x: |\langle w_{h^*},x \rangle| \ \geq C_1\alpha\varepsilon\sqrt{d}\} \subseteq \operatorname{Agree}(B_\cD^{\cH}(h^*, \varepsilon))$
for some constant $C_1$. We claim that for any $x \in A_\eta: = \{x: \lVert x\rVert_2 < \alpha \sqrt{d} - \eta\} \cap \{ x: |\langle w_{h^*},x \rangle| \ \geq C_1\alpha\varepsilon\sqrt{d}+ \eta\}$, we have $\B(x,\eta) \subseteq A$. Let $x \in A_\eta$, consider $z\in \B(x,\eta)$. We have $||z||_2 \leq ||z-x||_2 + \lVert x\rVert_2 \leq \eta + \alpha\sqrt{d} - \eta = \alpha\sqrt{d}$ and $|\langle w_{h^*},z \rangle| \geq |\langle w_{h^*},x \rangle| - |\langle w_{h^*},z-x \rangle|\geq C_1\alpha\varepsilon\sqrt{d} + \eta - ||z-x|| \geq C_1\alpha\varepsilon\sqrt{d}.$ Therefore, $z \in A$ for any $z\in \B(x,\eta)$ which implies that for any $x\in A_\eta$, $\B(x,\eta) \subseteq A$ which also implies that $A_\eta \subseteq \{x\mid \B(x,\eta) \subseteq \operatorname{Agree}(\cH_0(S))\}$.  We can bound the probability mass of $A_\eta$ with the following fact on isotropic log-concave distribution $\mathcal{D}$ over $\mathbb{R}^d$ \cite{lovasz2007geometry}: 1)  $\operatorname{Pr}_{x \sim \mathcal{D}}(\|x\| \geq \alpha \sqrt{d}) \leq e^{-\alpha+1}$, 2) When $d=1$ $\operatorname{Pr}_{x \sim \mathcal{D}}(x \in[a, b]) \leq|b-a|$ and 3) The projection $\langle w_{h^*},x \rangle$ follows a 1-dimensional isotropic log-concave distribution. We have
$
    \Pr_{x\sim\cD}(A_\eta)
    \geq 1 - \Pr_{x\sim\cD}(\{x: ||x|| \geq \alpha \sqrt{d} - \eta\}) - \Pr_{x\sim\cD}(\{ x: |\langle w_{h^*},x \rangle| \ \leq C_1\alpha\varepsilon\sqrt{d} + \eta \})
    \geq 1 - e^{-\left(\alpha - \frac{\eta}{\sqrt{d}}\right) + 1} - 2C_1\alpha\varepsilon\sqrt{d} - 2\eta
    = 1 - 2\eta - \Tilde{\cO}(\sqrt{d}\varepsilon).
$
The final line holds when we set $\alpha = \ln(\frac{1}{\sqrt{d}\varepsilon})$.
\end{proof}

\begin{figure}
    \centering
    \includegraphics[width = 0.4\textwidth]{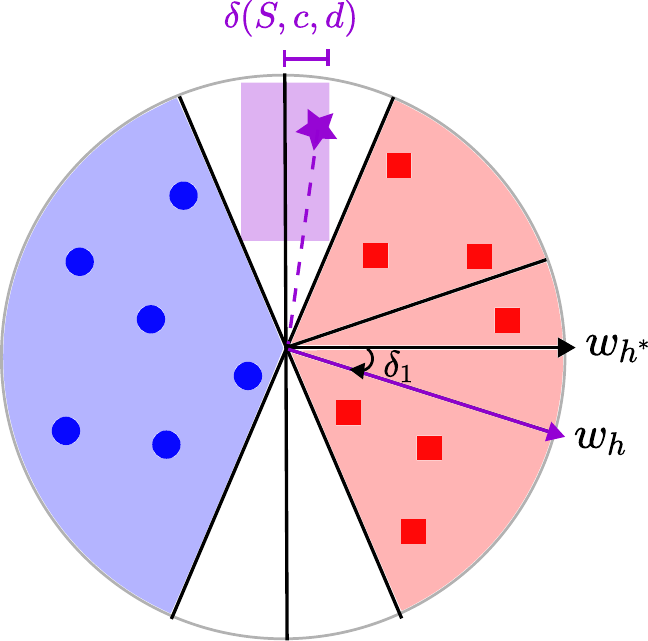}
    \caption{For any set of samples $S$ (blue and red points) with no point lying on the decision boundary of  a linear separator $h^*$, 
    for any $0 < c < d$, there exists an area  around the decision boundary of $h^*$ (formally defined as $\{x \in \R^d \mid c \leq \lVert x \rVert \leq d, |\langle w_{h^*}, x \rangle| \leq \delta(S,c,d) \}$, illustrated by a purple rectangle) such that for any point (purple star) in this area, there exists a hypothesis $h$ that agree with $h^*$ on $S$ but disagree with $h^*$ at that point}
    \label{fig: Lemma 12 fig}
\end{figure}
\begin{proof}
(Proof of Lemma \ref{lemma: agreement contain large margin}) First, we will show that for any sample $S \sim \cD^m$ with no points lying on the decision boundary of $h^*$, there exists  a constant $\delta_1(S)$ such that for any $h$ with a small enough angle to $h^*$, $\theta(w_h, w_{h^*}) \leq \delta_1$, $h$ must have the same prediction as $h^*$ on $S$ that is  $h \in \cH_0(S).$ Since there is no point lying on the decision boundary, we have $\min_{x \in S}\frac{|\langle w_{h^*}, x\rangle|}{\lVert x \rVert} > 0$. 

For $\delta_1$, such that  $0 < \delta_1 < \min_{x \in S}\frac{|\langle w_{h^*}, x\rangle|}{\lVert x \rVert} $ ,  if $\theta(w_h, w_{h^*}) \leq \delta_1$,  for any $x \in S$,
\begin{align*}
    |\langle w_{h^*}, x\rangle - \langle w_{h}, x\rangle| &\leq \lVert w_{h^*} - w_{h}\rVert \lVert x \rVert \\
    &\leq \theta(w_{h^*}, w_h) \lVert x \rVert\\
    &< |\langle w_{h^*}, x\rangle|.
\end{align*}
The second to last inequality holds due to the fact that the arc length cannot be smaller than corresponding chord length, and the last inequality follows from the assumption  $\theta(w_h, w_{h^*}) \leq \delta_1$. This implies that $\langle w_{h^*}, x\rangle  \langle w_{h}, x\rangle > 0$ and $h \in \cH_0(S)$.  Now, consider any $x$ such that $c \leq \lVert x \rVert \leq d$. We will show that there exists a constant $\delta=\delta(S,c,d)$ such that if the margin of $|\langle w_{h^*}, x \rangle|$ is smaller than $\delta$ then $x \not\in \operatorname{Agree}(\cH_0(S))$.  Since $|\langle w_{h^*}, x \rangle| = \lVert x \rVert \cos(\theta(w_{h^*},x)) \geq c \cos(\theta(w_{h^*},x))$, $\delta > |\langle w_{h^*}, x \rangle|$ implies that $\cos(\theta(w_{h^*},x)) < \frac{\delta}{c}$ that is the angle between $w_{h^*}$ and $x$ is almost $\frac{\pi}{2}$. Intuitively, we claim that if $\delta$ is small enough then there exists $h \in \cH_0(S)$ such that $h(x) \neq h^*(x)$.  Without loss of generality, let $\langle w_{h^*}, x \rangle > 0$  ($\theta(w_{h^*},x) < \frac{\pi}{2}$). We will show that if $\theta(w_{h^*},x)$ is close enough to $\frac{\pi}{2}$, we can rotate $w_{h^*}$ to $w_h$ with a small enough angle so that  $\theta(w_h, w_{h^*}) \leq \delta_1 $ but  $\langle w_{h}, x \rangle < 0$ ($\theta(w_h,x) > \frac{\pi}{2}$) as illustrated in Figure \ref{fig: Lemma 12 fig}. Formally, we consider $w_h = \frac{w_{h^*} - \lambda x}{\lVert w_{h^*} - \lambda x \rVert}$ for some $\lambda > 0$ (to be specified). We will show that there exists $\lambda$ such that 1) $\langle w_{h}, x \rangle < 0$, and 2) $\theta(w_h, w_{h^*}) \leq \delta_1 $.  The first condition corresponds to $\lambda > \frac{\langle x ,w_{h^*} \rangle}{\lVert x \rVert^2}$.  The second condition leads to the following inequality
$$
\frac{\langle w_{h^*} - \lambda x, w_{h^*}\rangle}{\lVert w_{h^*} - \lambda x \rVert} \geq \cos(\delta_1)$$
$$
\frac{1 - \lambda \langle x, w_{h^* }\rangle}{\sqrt{1 - 2 \lambda \langle x, w_{h^*} \rangle + \lVert x \rVert^2 \lambda^2 }} \geq \cos(\delta_1)
$$
Assume that $\lambda < \frac{1}{\langle x, w_{h^* }\rangle}$, the inequality is equivalent to
$$
(1 - \lambda \langle x, w_{h^* }\rangle)^2
 \geq \cos^2(\delta_1)(1 - 2 \lambda \langle x, w_{h^*} \rangle + \lVert x \rVert^2 \lambda^2 )
$$
$$(\cos^2(\delta_1)\lVert x \rVert^2 - \langle x, w_{h^* }\rangle^2)\lambda^2 + 2\lambda \langle x, w_{h^* }\rangle \sin^2(\delta_1) - \sin^2(\delta_1) \leq 0.$$
Solving this inequality leads to 
$$\lambda \leq \lambda_{\text{max}} = \frac{-2\sin^2(\delta_1)\langle x, w_{h^* }\rangle + 2\sin(\delta_1)\cos(\delta_1)\sqrt{(\lVert x \rVert^2 - \langle x, w_{h^* }\rangle^2)}}{2(\cos^2(\delta_1)\lVert x \rVert^2 - \langle x, w_{h^* }\rangle^2)}.$$
Therefore, there exists $\lambda$ that satisfies both of conditions 1), 2) if $\lambda_{\text{max}} > \frac{\langle x, w_{h^* }\rangle}{\lVert x \rVert^2}$. Finally, we claim that if $|\langle x, w_{h^* }\rangle| \leq \delta(S,c,d) \leq \frac{c^2 \tan(\delta_1)}{\sqrt{(d + d\tan(\delta_1))^2 + (c^2 \tan(\delta_1))^2}}$  then $\lambda_{\text{max}} > \frac{\langle x, w_{h^* }\rangle}{\lVert x \rVert^2}$.  For $x$ with $|\langle x, w_{h^* }\rangle| \leq \delta$ , we have  $\frac{\langle x, w_{h^* }\rangle}{\lVert x \rVert^2} \leq \frac{\delta}{\lVert x \rVert^2} \leq \frac{\delta}{c^2}$ and also
\begin{align*}
    \lambda_{\text{max}} &= \frac{-2\sin^2(\delta_1)\langle x, w_{h^* }\rangle + 2\sin(\delta_1)\cos(\delta_1)\sqrt{(\lVert x \rVert^2 - \langle x, w_{h^* }\rangle^2)}}{2(\cos^2(\delta_1)\lVert x \rVert^2 - \langle x, w_{h^* }\rangle^2)}\\
    &> \frac{-\sin^2(\delta_1)\langle x, w_{h^* }\rangle + \sin(\delta_1)\cos(\delta_1)\sqrt{(\lVert x \rVert^2 - \langle x, w_{h^* }\rangle^2)}}{\cos^2(\delta_1)\lVert x \rVert^2 }\\
    &= \frac{-\sin^2(\delta_1)\frac{\langle x, w_{h^* }\rangle}{\lVert x \rVert^2} + \sin(\delta_1)\cos(\delta_1)\sqrt{(1 - (\frac{\langle x, w_{h^* }\rangle}{\lVert x \rVert})^2)}}{\cos^2(\delta_1) }\\
    &\geq \frac{-\sin^2(\delta_1)\frac{\delta}{c^2} + \sin(\delta_1)\cos(\delta_1)\sqrt{(1 - (\frac{\delta}{c^2})^2)}}{\cos^2(\delta_1)d }.\\
\end{align*}
The last inequality follows from $\frac{\langle x, w_{h^* }\rangle}{\lVert x \rVert^2} \leq  \frac{\delta}{c^2}$. It is sufficient to show that
\begin{align*}
    &\frac{-\sin^2(\delta_1)\frac{\delta}{c^2} + \sin(\delta_1)\cos(\delta_1)\sqrt{(1 - (\frac{\delta}{c^2})^2)}}{\cos^2(\delta_1)d }\geq \frac{\delta}{c^2}\\
    &\Leftrightarrow \quad -\tan^2(\delta_1)\frac{\delta}{c^2} + \tan(\delta_1)\sqrt{(1 - (\frac{\delta}{c^2})^2)} \geq d\frac{\delta}{c^2}\\
    &\Leftrightarrow \quad  \tan(\delta_1)\sqrt{(1 - (\frac{\delta}{c^2})^2)} \geq (d + \tan^2(\delta_1))\frac{\delta}{c^2} \\
    &\Leftrightarrow \quad \delta(S,c,d) \leq \frac{c^2\tan(\delta_1)}{\sqrt{(d + \tan^2(\delta_1))^2 + c^2\tan^2(\delta_1)}}.
\end{align*}

\end{proof}

\begin{proof}
\begin{figure}[t]
    \centering
    \includegraphics[width = 0.3\textwidth]{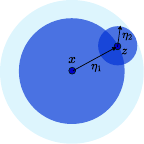}
    \caption{The safely-reliable region contains any point that retains a reliability radius of at least $\eta_2$ even after being attacked by an adversary with strength $\eta_1$.}
    \label{fig:Safely reliable region}
\end{figure}
    (Proof of Theorem \ref{thm: SR-l1l2l3})
     We know that for $\ellca, \elltl$, the robustly-reliable region is the same as the reliable region. This is also a reason why the probability mass does not depend on $\eta_2$. Consider the optimal robustly-reliable learner $\cL$, we have
    $\rrca(S,\eta_2) = \rrtl(S,\eta_2) = \operatorname{Agree}(\cH_0(S))$ (Theorem \ref{thm:rr-eta-l1}). On the other hand, $\rrst(S,\eta_2) =  \{z\mid\B(z,\eta_2)\subseteq\operatorname{Agree}(\cH_0(S)) \wedge h(x)=h(z), \;\forall\;x\in \B(z,\eta_2), h\in \cH_0(S)\}$ (Theorem \ref{thm:mball-l3-ub}).
    \textbf{a)} Since $\srtl = \{ x \mid \B(x,\eta_1) \subseteq \operatorname{Agree}(\cH_0(S)) \}$. Applying Lemma \ref{lemma: ball in agreement}, we have the first result. 
\textbf{b)}
     Recall that $\srca= \{x \in \cX \mid  \B(x,\eta_1)\cap \{z\mid h^*(z) = h^*(x) \} \subseteq \operatorname{Agree}(\cH_0(S))\}$. We will show that for any $x \in \srca$, $\B(x,\eta_1) = \B(x,\eta_1)\cap \{z\mid h^*(z) = h^*(x) \}$ by contradiction. Let $x \in \srca$ and $\B(x,\eta_1)$ contain two points with a different label, this implies that this ball must contain the decision boundary of $h^*$ ($\B(x,\eta_1) \cap \{x \mid \langle w_h^*, x \rangle  = 0\} \neq \emptyset$). The ball must also contain a point that has the same label as $x$ with an arbitrarily small margin w.r.t. $h^*$. For any $a > 0$, there exists $z \in \B(x,\eta_1)\cap \{z\mid h^*(z) = h^*(x) \}$ with $|\langle z, w_{h^*} \rangle| < a$. This is impossible because by Lemma \ref{lemma: agreement contain large margin} the agreement region $\operatorname{Agree}(\cH_0(S))$ can not contain point with arbitrarily small margin if $S$ does not contain any point on the decision boundary of $h^*$. This event has a probability $1$ as the projection $\langle w_{h^*}, x \rangle$ also follows a log-concave distribution which implies that $\Pr(\langle w_{h^*}, x \rangle = 0) = 0$. Therefore, with probability $1$, $\B(x,\eta_1)$ must contain points with the same label and we can conclude that $\srca = \srtl$. \textbf{c)} Similarly, by Lemma \ref{lemma: agreement contain large margin}, we can show that if $\B(z,\eta_2)\subseteq\operatorname{Agree}(\cH_0(S))$, $\B(z,\eta_2)$ then every point in $\B(z,\eta_2)$ must have same label with probability $1$. Therefore, $\rrst(S,\eta_2) =  \{z\mid\B(z,\eta_2)\subseteq\operatorname{Agree}(\cH_0(S)) \}$. We have $\srst = \{x \in \cX \mid  \B_\cM(x,\eta_1) \subseteq \{z\mid\B(z,\eta_2)\subseteq\operatorname{Agree}(\cH_0(S)) \}\} = \{x \in \cX \mid  \B_\cM(x,\eta_1 + \eta_2) \subseteq\operatorname{Agree}(\cH_0(S))\}$ by a triangle inequality (see Figure \ref{fig:Safely reliable region}). Applying Lemma \ref{lemma: ball in agreement}, we have the result. \blue{\textbf{d)} With the result above we have $\rria(S, \eta_2) = (\{z\mid\B(z,\eta_2)\subseteq\operatorname{Agree}(\cH_0(S)) \}\cap\{z \mid h^*(z) = 1\}) \cup (\operatorname{Agree}(\cH_0(S)) \cap \{z \mid h^*(z) = 0\})$. Recall that $\sria = \{x \in \cX \mid  h^*(x) = 0 \wedge \B_\cM(x,\eta_1) \subseteq \rria(S,\eta_2)\} \cup \{x \in \cX \mid  h^*(x) = 1 \wedge x \in  \rrca(S,\eta_2))  \}$. Therefore, we have
     $\sria = (\{z\mid\B(z,\eta_2)\subseteq\operatorname{Agree}(\cH_0(S)) \}\cap\{z \mid h^*(z) = 1\}) \cup 
     (\{z\mid\B(z,\eta_1)\subseteq\operatorname{Agree}(\cH_0(S)) \}\cap\{z \mid h^*(z) = 0\})$ We can conclude the result by applying Lemma \ref{lemma: ball in agreement} and symmetry.
     }
\end{proof}

\section{Safely-reliable region for classifiers with smooth boundaries}
\label{appendix: safely-reliable region for classifiers with smooth boundaries}

We also bound the probability mass of the safely-reliable region for more general concept spaces beyond linear separators.  Specifically, we consider classifiers with smooth boundaries in the sense of \cite{van1996weak}.

\begin{definition}[$\alpha$-norm]\label{def:alpha-norm} Let $f:C\rightarrow\R$ be a function on $C\subset\R^d$, and let $\alpha\in\R^+$. For $k=(k_1,\dots,k_d)\in\Z^d_{\ge0}$, let $||k||_1=\sum_{i=1}^dk_i$ and let $D^k=\frac{\partial^k}{\partial^{k_1}x_1\dots \partial^{k_d}x_d}$. We define $\alpha$-norm of $f$ as
    $$||f||_\alpha:=\max_{||k||_1<\lceil\alpha\rceil}\sup_{x\in {C}}|D^kf(x)|+\max_{||k||_1=\lceil\alpha\rceil-1}\sup_{x\ne x'\in {C}}\frac{|D^kf(x)-D^kf(x')|}{|x-x'|^{\alpha-\lceil\alpha\rceil+1}}.$$
   
\end{definition}

\noindent We define $\alpha$th order smooth functions to be those which have a bounded $\alpha$-norm. More precisely, we define the class of $\alpha$th order smooth functions $F_\alpha^C := \{f \mid ||f||_\alpha \le C\}$. For example, $1$st order smoothness corresponds to Lipschitz continuity. We now define concept classes with smooth classification boundaries.

\begin{definition}[Concepts with Smooth Classification Boundaries, \cite{wang2011smoothness}] A set of concepts $\cH_\alpha^C$ defined on $\cX = [0,1]^{d+1}$ is said to have $\alpha$th order smooth classification boundaries, if for every $h\in\cH_\alpha^C$ the classification boundary is  the graph of function $x_{d+1} = f(x_1,\dots,x_d)$, where $f \in F_\alpha^C$ and $(x_1,\dots,x_{d+1})\in\cX$ i.e. the predicted label is given by $\sign(x_{d+1} - f(x_1,\dots, x_d))$.
\label{def:smooth-classifiers}
\end{definition}

\noindent If we further assume that the probability density may be upper and lower bounded by some absolute positive constants (i.e.\ ``nearly'' uniform density), we can bound the safely-reliable region of our learner even in this setting. We start with analogues of Lemmas \ref{lemma: ball in agreement} and \ref{lemma: agreement contain large margin} for concepts with smooth classification boundaries.

We first bound the probability mass of points $x$ for which $\B(x,\eta)$ is contained in the agreement region of sample-consistent classifiers. We use the Lipschitzness of smooth functions to show that such point $x$ must lie outside of a `ribbon' around the boundary of target concept $h^*$, and adapt and extend the arguments of \cite{wang2011smoothness} to bound the probability mass of this ribbon.

\begin{lemma}
\label{lemma: ball in agreement - smooth boundaries}
  Let the instance space be $\cX = [0,1]^{d+1}$ and $\cD$ be a distribution over $\cX$ with a ``nearly'' uniform density where there exist positive constants $0<a<b$ such that $a\le p(x)\le b$ for all $x \in[0,1]^{d+1}$ when $p(x)$ is the probability density of $\cD$. Let $\cH_\alpha^C$ be the hypothesis space of concepts with smooth classification boundaries with $d < \alpha<\infty$, and $\B(\cdot, \eta)$ be a $L_2$ ball perturbation with radius $\eta$. For $S \sim \cD^m$, for  $m = \cO(\frac{1}{\varepsilon^2}(\operatorname{VCdim}(\cH) + \ln\frac{1}{\delta}))$, with probability at least $1-\delta$, we have 
\begin{equation*}
    \Pr(\{x \mid \B(x,\eta) \subseteq \operatorname{Agree}(\cH_0(S)\}) \geq 1 - 2b (C+1)\eta - {\cO}\left( {b}{a^{-\frac{\alpha}{d+\alpha}}}\varepsilon^{\frac{\alpha}{d+\alpha}}\right).
\end{equation*}
\end{lemma}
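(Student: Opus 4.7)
The argument mirrors the log-concave linear-separator case (Lemma~\ref{lemma: ball in agreement}), but the geometric step is replaced by a smoothness/bump-function construction in the spirit of \cite{wang2011smoothness}. My overall plan is: (i) use uniform convergence to move from $\operatorname{Agree}(\cH_0(S))$ to $\operatorname{Agree}(B_\cD^\cH(h^*,\varepsilon))$; (ii) show that $\operatorname{Agree}(B_\cD^\cH(h^*,\varepsilon))$ contains the complement of a thin ``ribbon'' of width $r_\varepsilon := C'(\varepsilon/a)^{\alpha/(d+\alpha)}$ around the graph of $f^*$; (iii) use Lipschitzness of $f^*$ (inherited from $\|f^*\|_\alpha\le C$) to show that any point whose $L_2$ distance to the graph of $f^*$ exceeds $r_\varepsilon+(1{+}C)\eta$ has its entire $\eta$-ball inside this ``fattened'' agreement region; (iv) bound the mass of the complementary ribbon by $2b(r_\varepsilon + (1{+}C)\eta)$ using the density upper bound $p\le b$.

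\textbf{Steps in detail.} First, by standard VC uniform convergence (Theorem 4.1 of~\cite{anthony1999neural}) with $m=\cO(\varepsilon^{-2}(\operatorname{VCdim}(\cH_\alpha^C)+\ln(1/\delta)))$ samples, with probability at least $1-\delta$ every $h\in\cH_0(S)$ satisfies $\Pr_\cD[h\ne h^*]\le\varepsilon$, giving $\operatorname{Agree}(B_\cD^\cH(h^*,\varepsilon))\subseteq\operatorname{Agree}(\cH_0(S))$. Next, for step (ii), take a point $(x_0,t_0)\in\cX$ with margin $r := |t_0-f^*(x_0)|$ to the target boundary. I will exhibit a bump perturbation $g\in F_\alpha^{2C}$ such that $f:=f^*\pm g$ satisfies $\|f\|_\alpha\le C$, $f(x_0)$ is on the opposite side of $t_0$ from $f^*(x_0)$, and $\int|f-f^*|\le \varepsilon/a$. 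A standard construction uses a smooth bump supported in a Euclidean ball of radius $\rho\asymp (r/C)^{1/\alpha}$ with peak height $2r$; the $\alpha$-norm constraint then forces $r\lesssim C\rho^\alpha$, and the mass is $\int|g|\asymp \rho^d\cdot r\asymp C^{-d/\alpha}r^{\,1+d/\alpha}$. By the density lower bound $p\ge a$, the probability-weighted disagreement is at most $b\int|g|$ (equivalently requiring $a\int|g|\le \varepsilon$), so the existence of a bad $f$ is ruled out exactly when $r\ge r_\varepsilon = C'(\varepsilon/a)^{\alpha/(d+\alpha)}$, proving $\{(x,t):|t-f^*(x)|>r_\varepsilon\}\subseteq\operatorname{Agree}(B_\cD^\cH(h^*,\varepsilon))$.

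\textbf{Fattening and mass bound.} For step (iii), any $(x',t')\in \B((x,t),\eta)$ has $t'-f^*(x')=(t-f^*(x))+(t'-t)-(f^*(x')-f^*(x))$, and since $\|f^*\|_\alpha\le C$ implies $f^*$ is $C$-Lipschitz, the perturbation in $t'-f^*(x')$ is at most $|t'-t|+C\|x'-x\|\le(1+C)\eta$. Therefore the set
\[
A_\eta := \bigl\{(x,t)\in\cX : |t-f^*(x)|>r_\varepsilon+(1+C)\eta\bigr\}
\]
satisfies $\B((x,t),\eta)\subseteq \operatorname{Agree}(\cH_0(S))$ for every $(x,t)\in A_\eta$. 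For step (iv), the complement of $A_\eta$ is a ribbon of vertical width $2(r_\varepsilon+(1+C)\eta)$ around the graph of $f^*$; since $p\le b$ everywhere and the Lebesgue volume of the ribbon inside $[0,1]^{d+1}$ is at most $2(r_\varepsilon+(1+C)\eta)$, Fubini in the $x_{d+1}$ direction gives
\[
\Pr\bigl(\cX\setminus A_\eta\bigr)\le 2b\bigl(r_\varepsilon+(1+C)\eta\bigr)=2b(C+1)\eta+\cO\!\left(b\,a^{-\alpha/(d+\alpha)}\varepsilon^{\alpha/(d+\alpha)}\right),
\]
which yields the claimed lower bound on $\Pr(\{x:\B(x,\eta)\subseteq\operatorname{Agree}(\cH_0(S))\})$.

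\textbf{Main obstacle.} The delicate step is (ii): the explicit smooth bump construction achieving the optimal mass-vs-height trade-off $\int|g|\asymp C^{-d/\alpha}r^{\,1+d/\alpha}$ under the $\alpha$-norm constraint, together with verifying that $f^*+g$ genuinely stays in $\cH_\alpha^C$ (so that the associated classifier lies in $B_\cD^\cH(h^*,\varepsilon)$). This is where the condition $d<\alpha<\infty$ enters and where the proof must appeal to the bump-based lower bounds in~\cite{wang2011smoothness}; the rest is bookkeeping using the triangle inequality and Lipschitzness of $f^*$.
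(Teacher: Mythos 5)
Your overall plan—uniform convergence to replace $\cH_0(S)$ by $B_\cD^\cH(h^*,\varepsilon)$, a ``ribbon'' characterization of the disagreement region around the graph of $f^*$, Lipschitz-based fattening by $(1+C)\eta$, and a Fubini mass bound using $p\le b$—matches the paper's proof exactly, and steps (i), (iii) and (iv) are correct as written.

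Step (ii), however, is argued in the wrong direction. To prove the containment $\{(x,t):|t-f^*(x)|>r_\varepsilon\}\subseteq\operatorname{Agree}(B_\cD^\cH(h^*,\varepsilon))$, you must show a \emph{uniform upper bound} $\sup_{h\in B_\cD(h^*,\varepsilon)}\|f_h-f^*\|_\infty \le r_\varepsilon$: that \emph{every} consistent $h$ stays within $r_\varepsilon$ of $f^*$. Your bump construction instead \emph{exhibits} a particular perturbation $g$ that would be bad at margin $r$; ruling out that specific family of bumps does not rule out all $h\in B_\cD(h^*,\varepsilon)$. What is needed is the converse implication, i.e.\ an interpolation (or reverse-Sobolev) inequality of the form ``$\|g\|_\alpha\le 2C$ and $\|g\|_1\le \varepsilon/a$ imply $\|g\|_\infty = O((\varepsilon/a)^{\alpha/(d+\alpha)})$'', applied to $g:=f_h-f^*$. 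That is precisely what Lemma~11 of \cite{wang2011smoothness} provides, and the paper simply cites it; the bump construction is the tightness example for that inequality, not its proof. There is also a technical flaw in your construction as stated: since $\|f^*\|_\alpha$ can already equal $C$, one only gets $\|f^*\pm g\|_\alpha\le\|f^*\|_\alpha+\|g\|_\alpha\le 3C$, so $f^*\pm g$ need not lie in $\cH_\alpha^C$ and the corresponding $h$ need not be a legal hypothesis. Replacing the bump discussion with a direct application of Lemma~11 to $g=f_h-f^*$ (which \emph{does} satisfy $\|g\|_\alpha\le 2C$ and, via $p\ge a$, $\|g\|_1\le\varepsilon/a$) repairs the argument and recovers the paper's proof verbatim.
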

\begin{proof}
By uniform convergence (Theorem 4.1, \cite{anthony1999neural}), for $S \sim \cD^m$, for  $m = \cO(\frac{1}{\varepsilon^2}(\operatorname{VCdim}(\cH) + \ln\frac{1}{\delta}))$, with probability at least $1-\delta$, we have
    $\operatorname{Agree}(B_\cD^{\cH}(h^*, \varepsilon)) \subseteq \operatorname{Agree}(\cH_0(S))$. Therefore, it suffices to lower bound $\pi:=\Pr\{x \mid \B(x,\eta) \subseteq \operatorname{Agree}(B_\cD^{\cH}(h^*, \varepsilon))\}$. 
Let $h^*\in\cH_\alpha^C$ be the target concept and denote $\x=(x_1,\dots,x_d)\in[0,1]^d$.  Recall that the predicted label of $(\x, x_{d+1})$ from $h,h^*$ is given by $\sign(x_{d+1} - f_h(\x))$ and $\sign(x_{d+1} - f_{h^*}(\x))$ respectively. Therefore, $h,h^*$ would disagree on $(\x, x_{d+1})$ when $x_{d+1}$ lies between $f_h(\x)$ and $f_{h^*}(\x)$. Denote $\Phi_h(\x)= |\int_{f_{h^*}(\x)}^{f_h(\x)}p(\x,x_{d+1})dx_{d+1}|$ be the probability mass of points that $h$ disagree with $h^*$ over $(\x, x_{d+1})$ for a fixed $\x \in [0,1]^d$. With this notation, the probability mass of points $(\x, x_{d+1})$ that $h$ and $h^*$ disagree with is given by $\int_{[0,1]^d}|{\Phi}_h(\x)|d\x$.  Furthermore, from   $ a\le p(\x,x_{d+1})\le b$, we know that $a|f_h(\x)-f_{h^*}(\x)|\le |\Phi_h(\x)|\le b|f_h(\x)-f_{h^*}(\x)|.$

Consider  $h\in B_\cD(h^*,\varepsilon)$, we have $\int_{[0,1]^d}|{\Phi}_h(\x)|d\x\le \varepsilon$.  This implies that 
$\int_{[0,1]^d}|f_h(\x)-f_{h^*}(\x)|d\x\le  \int_{[0,1]^d}|\frac{\Phi(\x)}{a}|d\x\le \frac{\varepsilon}{a}.$
    Since the classification boundaries are assumed to be $\alpha$th order smooth with $\alpha>d$, Lemma 11 of \cite{wang2011smoothness}  implies that $||f_h-f_{h^*}||_{\infty}=O\left((\frac{\varepsilon}{a})^{\frac{\alpha}{d+\alpha}}\right)$  where $||g||_\infty:=\sup_{\x\in[0,1]^d}|g(\x)|$. Consider
$$
 1 - \pi = \Pr_{x\sim \cD_\cX}( \exists z \in \B(x, \eta), \exists h \in B(h^*, \varepsilon), h(z) \neq h^*(z) ).
$$
Recall that for $z = (\z, z_{d+1})$, $h(z) \neq h^*(z)$ when $z_{d+1}$ lies between $h(\z), h^*(\z)$ which implies $f_{h^*}(\z) - |f_{h^*}(\z) - f_{h}(\z)| < z_{d+1} < f_{h^*}(\z) +  |f_{h^*}(\z) - f_{h}(\z)|$. Since $z \in \B(x, \eta)$ and the boundary functions $f_h$ are $C$-Lipschitz (by Definition \ref{def:alpha-norm}) we have $|f_h(\z) - f_{h}(\x)| \leq C\lVert \z - \x \rVert \leq C\eta$ and $|z_{d+1} - x_{d+1}| \leq \eta$. This implies that $f_{h^*}(\x) - |f_{h^*}(\z) - f_{h}(\z)| - (C+1)\eta < x_{d+1} < f_{h^*}(\x) +  |f_{h^*}(\z) - f_{h}(\z)| + (C+1)\eta$. We are interested in the set $\{ x \mid \exists z \in \B(x, \eta), \exists h \in B(h^*, \varepsilon), h(z) \neq h^*(z) \}$, this leads to the inequality
$$f_{h^*}(\x) - D - (C+1)\eta < x_{d+1} < f_{h^*}(\x) +  D + (C+1)\eta$$ when
$D = \sup_{\substack{z \in \B(x, \eta)\\ h \in B(h^*, \varepsilon)}}|f_{h^*}(\z) - f_{h}(\z)| \leq \sup_{\substack{h \in B(h^*, \varepsilon)}}\lVert f_{h^*} - f_{h}\rVert_\infty = O\left((\frac{\varepsilon}{a})^{\frac{\alpha}{d+\alpha}}\right)$. Therefore,
\begin{align*}
     1 - \pi &\leq \int_{[0,1]^d}\int_{f_{h^*}(\x) - D - (C+1)\eta}^{f_{h^*}(\x) +  D + (C+1)\eta}p(\x, x_{d+1}) dx_{d+1}d\x\\
     &\leq 2b((C+1)\eta + D)\\
     &= 2b(C+1)\eta+O\left({b}{a^{-\frac{\alpha}{d+\alpha}}}\varepsilon^{\frac{\alpha}{d+\alpha}}\right).
\end{align*}

\end{proof}

\noindent The above bound immediately implies a bound on the probability mass of the safely-reliable region for $\elltl$, in combination with our previous results. The following lemma allows us to easily handle the extension of our results to losses $\ellca$ and $\ellst$ as well, where the safely-reliable region involves additional constraints.
   \begin{figure}
        \centering
        \includegraphics[width = 0.5\textwidth]{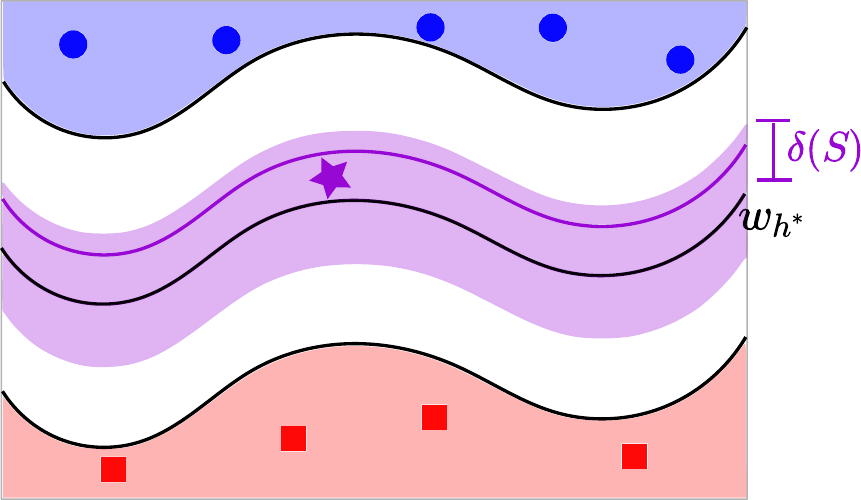}
        \caption{For any set of samples $S$ (blue and red points) with no point lying on the decision boundary of  a concept with smooth boundary $h^*$, 
     there exists a band  around the decision boundary of $h^*$ (formally defined as $\{x \in \R^d \mid |f_{h^*}(\x) - x_{d+1}| < \delta \}$ (purple)) such that for any point (purple star) in this area, there exists a hypothesis $h$ (by translation) that agree with $h^*$ on $S$ but disagree with $h^*$ at that point.}
        \label{fig:lemma 12 smooth}
    \end{figure}
\begin{lemma}
    Let the instance space be $\cX = [0,1]^{d+1}$ and $\cD$ be a distribution over $\cX$. Let $\cH_\alpha^C$ be the hypothesis space of concepts with smooth classification boundaries with $d < \alpha<\infty$. For $h^*\in \cH^C_\alpha$, for a set of samples $S \sim \cD^m$ such that there is no data point in $S$ that lies on the decision boundary, there exists $\delta(S) > 0$ such that for any $x = (\x, x_{d+1})$ with $|f_{h^*}(\x) - x_{d+1}| < \delta$, we have $x\not\in \operatorname{Agree}(\cH_0(S))$.
    \label{lemma: agreement contain large margin, smooth}
\end{lemma}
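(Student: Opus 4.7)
The plan is to exhibit, for each test point $x = (\x, x_{d+1})$ sufficiently close to the graph of $f_{h^*}$, a second sample-consistent classifier $h \in \cH_0(S)$ with $h(x) \ne h^*(x)$. Since $h^* \in \cH_0(S)$ by realizability, such an $h$ places $x \in \operatorname{DIS}(\cH_0(S))$, i.e., outside $\operatorname{Agree}(\cH_0(S))$, as required.

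First I extract the key positive quantities from the hypotheses. The hypothesis that no training point lies on the boundary of $h^*$ yields the training margin
\[
\mu \;:=\; \min_{i \in [m]} \bigl| x_{i,d+1} - f_{h^*}(\x_i) \bigr| \;>\; 0,
\]
and I let $\kappa := C - \|f_{h^*}\|_\alpha$ denote the smoothness slack, which is strictly positive in the generic case that $h^*$ is interior to the $\alpha$-norm ball. I fix once and for all a smooth nonnegative bump profile $\phi:\R^d\to[0,1]$ with $\phi(0)=1$, compact support, and finite $\alpha$-norm $M := \|\phi\|_\alpha$ (e.g.\ a standard $C^\infty$ mollifier). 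For each candidate $\x$, I form the translated bump $\phi_\x(y) := \phi(y-\x)$ restricted to $[0,1]^d$, so that $\phi_\x(\x)=1$, $0\le\phi_\x\le 1$, and $\|\phi_\x\|_\alpha \le M$ uniformly in $\x$ (restriction to a smaller domain does not increase the $\alpha$-norm).

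With $\sigma := \operatorname{sign}(x_{d+1} - f_{h^*}(\x))$, I propose the perturbed boundary $f_h := f_{h^*} + \sigma\,\tau\,\phi_\x$ for a scalar $\tau>0$ to be chosen. Three conditions determine the admissible $\tau$: (i) $f_h \in F_\alpha^C$ is guaranteed whenever $\tau M \le \kappa$, since $\|f_h\|_\alpha \le \|f_{h^*}\|_\alpha + \tau M$; (ii) $h$ agrees with $h^*$ on every training example as long as $\tau < \mu$, because the perturbation $|f_h(\x_i) - f_{h^*}(\x_i)| \le \tau < \mu \le \mu_i$ cannot flip the sign of $x_{i,d+1} - f_{h^*}(\x_i)$; (iii) $h(x) \ne h^*(x)$ holds iff $\tau > |x_{d+1} - f_{h^*}(\x)|$, which pushes $f_h(\x)$ across $x_{d+1}$ in the direction $\sigma$. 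Setting
\[
\delta(S) \;:=\; \tfrac{1}{2}\,\min\!\bigl(\mu,\; \kappa/M\bigr) \;>\; 0,
\]
any $x$ with $|x_{d+1}-f_{h^*}(\x)|<\delta(S)$ admits an admissible $\tau$ strictly between $|x_{d+1}-f_{h^*}(\x)|$ and $\min(\mu,\kappa/M)$, producing the desired $h \in \cH_0(S)$ disagreeing with $h^*$ at $x$.

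The main obstacle is the smoothness constraint (i): the bump contributes a multiple of $M$ to the $\alpha$-norm, which is why $\delta(S)$ must be coupled to the slack $\kappa$. When $\kappa>0$ the construction above works directly. To handle the boundary case $\|f_{h^*}\|_\alpha = C$, I would first replace $f_{h^*}$ by a convex combination $(1-\eta)f_{h^*} + \eta g_0$ with a fixed $g_0 \in F_\alpha^{C/2}$ and $\eta>0$ small enough (using the margin $\mu$) to preserve the labels on $S$; by convexity of $F_\alpha^C$ this modified target has strictly positive slack, reducing us to the interior case and completing the argument.
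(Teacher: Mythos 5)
Your bump construction is a genuinely different route from the paper's proof. The paper perturbs $h^*$ by a global vertical translation $f_{h_t}(\x) = f_{h^*}(\x) + t$, asserts (without further argument) that translation keeps $f_{h_t}$ in $F_\alpha^C$, and takes $\delta(S) = \min_{x\in S}|f_{h^*}(\x) - x_{d+1}|$; any $|t|$ strictly between $|f_{h^*}(\x)-x_{d+1}|$ and $\delta(S)$ then yields a sample-consistent hypothesis disagreeing with $h^*$ at $x$. Your compactly supported bump is more local and more careful about the smoothness budget: since the $\alpha$-norm in Definition~\ref{def:alpha-norm} includes the $k=0$ term $\sup_x|f(x)|$, \emph{any} additive perturbation of $f_{h^*}$ can push $\|\cdot\|_\alpha$ past $C$, which is exactly the concern your slack $\kappa = C - \|f_{h^*}\|_\alpha$ tracks. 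For the interior case $\kappa>0$ your argument is complete and correct.

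The reduction of the boundary case $\kappa = 0$ to the interior case does not go through, however. With $f_{h'} = (1-\eta)f_{h^*}+\eta g_0$, the gained slack is $\kappa' = \Theta(\eta)$, but $|f_{h'}(\x)-f_{h^*}(\x)| = \eta\,|g_0(\x)-f_{h^*}(\x)|$ is also $\Theta(\eta)$, so the distance from $x$ to the new boundary can grow to $\delta + \Theta(\eta)$. The interior construction applied to $h'$ covers $x$ only if this distance is below $\kappa'/M$; and since $M = \|\phi\|_\alpha \ge \phi(0) = 1$ (again the $k=0$ term), the admissible bump height at $\x$ is at most $\kappa'$, giving a requirement of the form $\delta + c_1\eta < c_2\eta$ with $c_1\ge c_2$ for a generic $g_0$, so no positive $\delta$ survives. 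Rescaling $\phi$ cannot help because $\phi(0)/\|\phi\|_\alpha \le 1$ always.

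In fact, the boundary case can fail outright, so the gap is not merely a shortcoming of this particular fix: take $d=1$, $\alpha=2$, $C<1$, and $f_{h^*}\equiv C$. Then $\|f_{h^*}\|_\alpha = C$ exactly, and every $g\in F_\alpha^C$ satisfies $g(\x) \le \sup|g| \le C$, so \emph{every} hypothesis in $\cH_\alpha^C$ assigns the same label to every point with $x_{d+1}>C$. Such a point lies in $\operatorname{Agree}(\cH_0(S))$ no matter how small $x_{d+1}-C>0$ is, so no $\delta(S)>0$ can work. This shows the paper's claim that translation preserves membership in $F_\alpha^C$ is not literally true under the stated definition of $\|\cdot\|_\alpha$; it implicitly needs either $\|f_{h^*}\|_\alpha<C$ or an $\alpha$-norm that drops the $k=0$ term (so translation is free). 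Under either of those readings your interior-case argument also succeeds, and the cleanest repair on your side is to add the interiority hypothesis explicitly rather than attempt the convex-combination reduction.
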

\begin{proof} Note that translation preserves smoothness, i.e.\ any concept $h_t$ with $f_{h_t}(\x) = f_{h^*}(\x) + t$ would also lie in $\cH^C_\alpha$. If $|t| < \delta(S) = \min_{x \in S} |f_{h^*}(\x) - x_{d+1}|$,  $f_{h_t}(\x) - x_{d+1}$ must have the same sign as   $(f_{h^*}(\x) - x_{d+1})$ for any $x\in S$, that is $h_t$ would agree with $h^*$ on every point in $S$. Furthermore, for any $x$ with $|f_{h^*}(\x) - x_{d+1}| < \delta$, we can always translate $h^*$ to $h_t$ that $h_t(x) \neq h^*(x)$ (see Figure \ref{fig:lemma 12 smooth}). Therefore, $x\not\in \operatorname{Agree}(\cH_0(S))$.
\end{proof}
\noindent Equipped with the above lemmas, we establish bounds on the probability mass of the safely reliable region for concept classes with smooth classification boundaries.

\begin{theorem}
\label{thm:mball-smooth}
    Let the instance space be $\cX = [0,1]^{d+1}$ and $\cD$ be a distribution over $\cX$ with a ``nearly'' uniform density where there exist positive constants $0<a<b$ such that $a\le p(x)\le b$ for all $x \in[0,1]^{d+1}$ when $p(x)$ is the probability density of $\cD$. Let $\cH_\alpha^C$ be the hypothesis space of concepts with smooth classification boundaries with $d < \alpha<\infty$, and $\B(\cdot, \eta)$ be a $L_2$ ball perturbation with radius $\eta$. For $S \sim \cD^m$, for  $m = \cO(\frac{1}{\varepsilon^2}(\operatorname{VCdim}(\cH) + \ln\frac{1}{\delta}))$ with no point lying on the decision boundary of $h^*$, for an optimal robustly-reliable learner $\cL$, we have
\begin{enumerate}[label=(\alph*),leftmargin=*,topsep=0pt,partopsep=1ex,parsep=1ex]\itemsep=-4pt
    \item $\Pr(\srtl) \geq  1 - 2b(C+1)\eta_1 - {\cO}\left({b}{a^{-\frac{\alpha}{d+\alpha}}}\varepsilon^{\frac{\alpha}{d+\alpha}}\right)$ with probability $1 - \delta$,
    \item $\srca = \srtl$,
    \item $\Pr(\srst) \geq  1 - 2b(C+1)(\eta_1 + \eta_2) - {\cO}\left({b}{a^{-\frac{\alpha}{d+\alpha}}}\varepsilon^{\frac{\alpha}{d+\alpha}}\right)$ with probability $1 - \delta$,
    \item \blue{$\Pr(\sria) \geq  1 - b(C+1)(\eta_1 + \eta_2) - {\cO}\left({b}{a^{-\frac{\alpha}{d+\alpha}}}\varepsilon^{\frac{\alpha}{d+\alpha}}\right)$ with probability $1 - \delta$.}
\end{enumerate}

\end{theorem}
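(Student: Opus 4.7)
The plan is to follow the same strategy as Theorem \ref{thm: SR-l1l2l3} for linear separators, but with Lemmas \ref{lemma: ball in agreement - smooth boundaries} and \ref{lemma: agreement contain large margin, smooth} replacing the log-concave/linear-separator lemmas. Recall from Theorems \ref{thm:rr-eta-l1} and \ref{thm:mball-l3-ub} that for the optimal robustly-reliable learner we have $\rrca(S,\eta_2)=\rrtl(S,\eta_2)=\operatorname{Agree}(\cH_0(S))$ (independent of $\eta_2$), and $\rrst(S,\eta_2)=\{z\mid \B(z,\eta_2)\subseteq\operatorname{Agree}(\cH_0(S))\wedge h(x)=h(z),\;\forall\;x\in \B(z,\eta_2),\;h\in \cH_0(S)\}$.

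For part (a), combining the above with Definition \ref{def:srr} gives $\srtl=\{x\mid \B(x,\eta_1)\subseteq \operatorname{Agree}(\cH_0(S))\}$, so I would apply Lemma \ref{lemma: ball in agreement - smooth boundaries} directly with $\eta:=\eta_1$. For part (b), the inclusion $\srtl\subseteq\srca$ is immediate since intersecting $\B(x,\eta_1)$ with the same-label set only shrinks the region that must lie in $\operatorname{Agree}(\cH_0(S))$. For the reverse inclusion I would argue by contradiction: suppose $x\in\srca$ but there exists $z\in \B(x,\eta_1)$ with $z\notin\operatorname{Agree}(\cH_0(S))$. If $h^*(z)=h^*(x)$ then $z\in\B(x,\eta_1)\cap\{z'\mid h^*(z')=h^*(x)\}\subseteq\operatorname{Agree}(\cH_0(S))$, a direct contradiction. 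Otherwise $h^*(z)\neq h^*(x)$, so writing $x=(\x,x_{d+1})$ and $z=(\z,z_{d+1})$, the quantity $g(t):=x_{d+1}(t)-f_{h^*}(\x(t))$ along the segment from $x$ to $z$ (which lies in the convex ball) is continuous and changes sign. By the intermediate value theorem, arbitrarily close to the crossing point on the $x$-side there are points labeled $h^*(x)$ with $|x_{d+1}(t)-f_{h^*}(\x(t))|$ arbitrarily small. Such points belong to $\B(x,\eta_1)\cap\{z'\mid h^*(z')=h^*(x)\}\subseteq\operatorname{Agree}(\cH_0(S))$, contradicting Lemma \ref{lemma: agreement contain large margin, smooth} under the assumption that no sample point lies on the decision boundary of $h^*$.

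For part (c), I first reduce $\rrst(S,\eta_2)$ to $\{z\mid \B(z,\eta_2)\subseteq\operatorname{Agree}(\cH_0(S))\}$: if $\B(z,\eta_2)\subseteq\operatorname{Agree}(\cH_0(S))$, then by Lemma \ref{lemma: agreement contain large margin, smooth} every point $x'=(\x',x'_{d+1})$ in the ball satisfies $|x'_{d+1}-f_{h^*}(\x')|\geq\delta(S)>0$, so by the same intermediate-value argument the sign of $x'_{d+1}-f_{h^*}(\x')$ is constant on the connected ball; hence $h^*$ is constant on the ball, and every $h\in\cH_0(S)$, being consistent with $h^*$ there, is also constant on the ball, making the second constraint in the description of $\rrst$ automatic. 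Then by the triangle inequality for $L_2$ balls, $\srst=\{x\mid \B(x,\eta_1+\eta_2)\subseteq\operatorname{Agree}(\cH_0(S))\}$, so invoking Lemma \ref{lemma: ball in agreement - smooth boundaries} with $\eta:=\eta_1+\eta_2$ yields the stated bound. The main obstacle is the intermediate-value step used in both (b) and (c): the smoothness of $f_{h^*}$ together with the non-degeneracy of $S$ is what lets us convert ``the ball straddles the decision boundary'' into ``the ball contains same-label points of arbitrarily small margin,'' so that Lemma \ref{lemma: agreement contain large margin, smooth} can deliver the contradiction that drives both (b) and (c).
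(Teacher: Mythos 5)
Your proposal is correct and follows essentially the same approach as the paper: reduce all three safely-reliable regions to ball-in-agreement conditions using Lemma \ref{lemma: agreement contain large margin, smooth}, reduce $\srst$ further via the triangle inequality for $L_2$ balls, and then apply Lemma \ref{lemma: ball in agreement - smooth boundaries}. The paper's proof is terser (it simply asserts that the label constraints can be removed because the agreement region cannot contain small-margin points); your intermediate-value argument makes explicit why a ball straddling the decision boundary must contain same-label small-margin points, which is the step the paper leaves implicit.
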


\begin{proof}
From Lemma \ref{lemma: agreement contain large margin, smooth}, the agreement region does not contain points that are arbitrarily close to the boundary of $h^*$. Therefore, we can remove the additional conditions on labels for $\srca$ and $ \srst$, that is $\srca = \srtl = \{ x \mid \B(x,\eta_1) \subseteq \operatorname{Agree}(\cH_0(S)) \}$ and $\srst = \{x \in \cX \mid  \B_\cM(x,\eta_1) \subseteq \{z\mid\B(z,\eta_2)\subseteq\operatorname{Agree}(\cH_0(S)) \}\} = \{x \in \cX \mid  \B_\cM(x,\eta_1 + \eta_2) \subseteq\operatorname{Agree}(\cH_0(S))\}$. \blue{Similarly, we have $\sria = (\{z\mid\B(z,\eta_2)\subseteq\operatorname{Agree}(\cH_0(S)) \}\cap\{z \mid h^*(z) = 1\}) \cup 
     (\{z\mid\B(z,\eta_1)\subseteq\operatorname{Agree}(\cH_0(S)) \}\cap\{z \mid h^*(z) = 0\})$. }The result follows from Lemma \ref{lemma: ball in agreement - smooth boundaries}.

\end{proof}

\section{More on computational efficiency}\label{appendix: computational efficiency}
\blue{
It is possible to extend the optimization objective to a wide range of hypothesis classes under the following assumption.
\begin{assumption}
For a hypothesis class $\cH$, we assume that for  any $h^* \in \cH$, a set of data points $S$ labeled by $h^*$ then for any points $x,y$ that $h^*(x) \neq h^*(y)$, the line that connects between $x,y$ must pass through a disagreement region of $\cH_0(S)$,
$$ \{\lambda x + (1-\lambda) y \mid \lambda \in [0,1] \} \cap \operatorname{DIS}(\cH_0(S)) \neq \emptyset.$$
\label{assumption: flexible class}
\end{assumption}
For example, a class of linear separators and a class of classifiers with smooth boundaries satisfies this assumption.
\begin{lemma}
    Let $\cH$ be a hypothesis class and $\cD$ be a distribution over $\R^d$. If $\cH$ satisfies Assumption \ref{assumption: flexible class} then  for a set of samples $S\sim \cD^m$, the reliability radius of a test point $z$ is given by
\begin{align*}
    \min_{h,h',z'} ||z-&z'||^2\\
    \text{s.t.}\qquad\quad h \in \cH_0(S),\\
     h' \in \cH_0(S),\\
     h(z') \neq h'(z').
\end{align*}
\label{lemma: reliability radius objective}
\end{lemma}
\begin{proof}
    Let $r$ be the largest reliability radius of a test point $z$ that is if we perturb $z$ by a radius at most $r$ then the perturbed point is still in the agreement region. Consider for any perturbation $z'$ that there exists $h,h'$ that $h'(z')$ has a different label from $h'(z)$. If $\cH$ satisfies Assumption \ref{assumption: flexible class}, $h'(z) \neq h'(z')$ implies that the line between $z,z'$ must pass through the disagreement region. Therefore, $r \leq \lVert z - z' \rVert.$ On the other hand, let $r_0^2$ be the solution of the optimization given above. This implies that for any point $z'$ that $\lVert z - z' \rVert < r_0$, for any $h,h' \in \cH_0(S)$, we must have that $h(z) = h(z')$ that is $z' \in \operatorname{Agree}(\cH_0(S))$. Therefore, we have $ r_0 \leq r$ and we can conclude that $r = r_0$.
\end{proof}

\textbf{Remark. } This could be solved efficiently in practice. For example, in the case of linear separators, the problem takes the form of solving a quadratic program for each test point. We observe that our proof of Theorem \ref{thm: SR-l1l2l3} above suggests an alternate margin-based approach which might be even more practical to implement, while retaining high probability reliability guarantees under the distributional assumption. If $\hat{h}$ denotes an ERM classifier on sample $S$, one could check the membership $z \in \hat{A}_\eta: = \{x: \lVert x\rVert_2 < \alpha \sqrt{d} - \eta\} \cap \{ x: |\langle w_{\hat{h}},x \rangle| \ \geq C_1\alpha\varepsilon\sqrt{d}+ \eta\}$ for any $\eta\ge 0$ by just computing the norm and margin w.r.t.\ $\hat{h}$. A simple halving search (e.g.\ starting with $\eta=\frac{1}{2}$) could be used to estimate the reliability radius.

Further, we can relax this constrained objective into a regularized objective that can be solved using empirical risk minimization. In the following Lemma, we show that this provides a lower bound on the reliability radius. 
 \begin{lemma}
     (Relaxation of the optimization objective for reliability radius) 
      Let $\cH$ be a hypothesis class and $\cD$ be a distribution over $\R^d$. If $\cH$ satisfies Assumption \ref{assumption: flexible class} then for a set of samples $S\sim \cD^m$, let $h_1,h_2,z^*$ be the optimal solution of the objective
    \begin{align*}
    h_1,h_2,z^* = \argmin_{h,h',z'} \lVert z-&z'\rVert^2 + \lambda(\hat{R}(h, S \cup \{(z',0)\}) + \hat{R}(h', S \cup \{(z',1)\}))
    \end{align*}
when $\hat{R}(h,A)$ is an empirical risk of $h$ on the sample $A$ then $\lVert z-z^*\rVert^2 \leq r$ when $r$ the reliability radius of $z$.
\label{lemma: relax reliability radius}
 \end{lemma}
 \begin{proof}
Let $h,h',z'$ be the optimal solution of the optimization objective in Lemma \ref{lemma: reliability radius objective} so that the reliability radius $r$ is given by $\lVert z - z' \rVert$. Without loss of generality, let $h(z') = 0$ and $h'(z') = 1$. By definition, we have $\hat{R}(h, S \cup \{(z',0)\}) = 0$ and $\hat{R}(h', S \cup \{(z',1)\})) = 0$. Let $h_1,h_2, z^*$ be an optimal solution of the objective in Lemma \ref{lemma: relax reliability radius} then we have 
\begin{align*}
&\lVert z-z^*\rVert^2 + \lambda(\hat{R}(h_1, S \cup \{(z^*,0)\}) + \hat{R}(h_2, S \cup \{(z^*,1)\}))\\
&\leq 
\lVert z-z'\rVert^2 + \lambda(\hat{R}(h, S \cup \{(z',0)\}) + \hat{R}(h', S \cup \{(z',1)\}))\\
&= \lVert z-z'\rVert^2
\end{align*}
Since the empirical risk is non-negative, we can conclude that $\lVert z-z^*\rVert^2 \leq \lVert z-z'\rVert^2 = r^2$.
 \end{proof}

Similar observations also apply to the computation of the safely-reliable region. It may be useful to compute the safely-reliable region when we want to estimate the robust reliability performance of an algorithm on test data. In particular, this may be helpful in determining how often and where our learner gives a bad reliability radius, which can inform its safe deployment in practice.

}

\section{Bounds on the $\ptoq$ disagreement coefficient}\label{appendix: bounds on the p to q disagreement coefficient}
We will now consider some commonly studied concept spaces, and bound the $\ptoq$ disagreement coefficient for broad classes of distribution shifts.

\paragraph{Linear separators and nearly log-concave or $s$-concave distributions.}

We give a bound on $\Theta_{\cP \to \cQ}$ for $\cP$ and $\cQ$ isotropic nearly log-concave  distributions \cite{applegate1991sampling} over $\R^d$, a broad class that includes isotropic log-concave distributions.

\begin{definition}[$\beta$-log-concavity]
    A density function $f : \R^d \rightarrow \R_{\ge0}$ is $\beta$-log-concave if for any $\lambda\in[0,1]$ and any $x_1,x_2\in\R^d$, we have $f(\lambda x_1 +(1-\lambda)x_2) \ge e^{-\beta}f(x_1)^\lambda f(x_2)^{1-\lambda}$.
\end{definition}

\noindent For example, 0-log-concave densities are also log-concave. Also, since the condition in the definition holds for $\lambda=0$, we have $\beta\ge0$. Note that a smaller value of $\beta$ makes the distribution closer to logconcave. 
We have the following bound on $\Theta_{\cP \to \cQ}$ for distribution shift involving nearly log-concave densities. At a high level, we bound the angle between the normal vectors of the linear separators with small disagreement with $h^*$ under $\cP$, and bound the probability mass of their disagreement region under $\cQ$, by refining and generalizing the arguments from \cite{balcan2013active}. In particular, we quantify how much near-logconcavity is sufficient for the angle bound to hold, which further implies that any point in the disagreement region is either far away from the mean or close to the margin.

\begin{theorem}\label{thm:pq-beta-concave}
    Let the concept space $\cH$ be the class of linear separators in $\R^d$. Let $\cP$ be isotropic $\beta_1$-log-concave and $\cQ$ be isotropic $\beta_2$-log-concave, over $\R^d$. Then for  $0\le \beta_1,\beta_2 \le \frac{1}{56\lceil\log_2(d+1)\rceil}$, we have $\Theta_{\cP \to \cQ}(\varepsilon)= O(d^{1/2+ \frac{\beta_2}{2\ln 2}} \log(d/\varepsilon)).$
\end{theorem}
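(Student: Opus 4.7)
The plan is to bound $\Pr_{\cQ}[\operatorname{DIS}(B_\cP(h^*,r))]$ for every $r\ge\varepsilon$, then divide by $r$ and take the supremum. Writing $h\in\cH$ as $h(x)=\sgn(\langle w_h,x\rangle)$ for a unit vector $w_h$, I will show that (i) membership in $B_\cP(h^*,r)$ forces the angle $\theta(w_h,w_{h^*})$ to be $O(r)$, and (ii) the resulting disagreement region under $\cQ$ is concentrated in a thin slab around the decision boundary of $h^*$ together with a tail region far from the origin, both of which carry small mass under an isotropic $\beta_2$-log-concave distribution. Combining the two pieces and optimizing a scale parameter gives the stated rate.

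\textbf{Angle bound under nearly log-concave $\cP$.} The first step is to establish that if $\Pr_\cP[h(x)\ne h^*(x)]\le r$ then $\theta(w_h,w_{h^*})\le C_1 r$ for some absolute constant $C_1$. For isotropic log-concave densities this is a standard fact (cf.\ \cite{balcan2013active}); I would generalize it to $\beta_1$-log-concave densities by observing that a $\beta_1$-log-concave isotropic density is pointwise sandwiched between $e^{-O(\beta_1)}$ and $e^{O(\beta_1)}$ times an isotropic log-concave density on every axis-aligned slab of controlled width, and then iterating this comparison across $\lceil\log_2(d+1)\rceil$ coordinate scales as in the dyadic argument from \cite{balcan2013active}. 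The constant $\beta_1\le \tfrac{1}{56\lceil\log_2(d+1)\rceil}$ is precisely what is needed so that the accumulated distortion remains $O(1)$, preserving the linear relationship $\theta=O(r)$.

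\textbf{Controlling the disagreement region under $\cQ$.} By the angle bound, any $x\in\operatorname{DIS}(B_\cP(h^*,r))$ admits some $h$ with $\theta(w_h,w_{h^*})\le C_1 r$ and $h(x)\ne h^*(x)$; an elementary geometric argument gives $|\langle w_{h^*},x\rangle|\le C_1 r \|x\|$. Hence
\[
\operatorname{DIS}(B_\cP(h^*,r)) \;\subseteq\; \{x:\|x\|>\alpha\sqrt{d}\}\;\cup\;\{x:|\langle w_{h^*},x\rangle|\le C_1 r\,\alpha\sqrt{d}\},
\]
for a parameter $\alpha$ I will set to $\Theta(\log(d/\varepsilon))$. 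For the tail term under an isotropic $\beta_2$-log-concave $\cQ$, I will invoke the tail bound $\Pr_\cQ[\|x\|\ge\alpha\sqrt d]\le e^{-\alpha+1+O(\beta_2\log d)}$, which reduces to $\varepsilon/d$ at the chosen $\alpha$ and is therefore absorbed into $O(r)$. For the slab term, I will use that the 1-dimensional marginal $\langle w_{h^*},x\rangle$ of an isotropic $\beta_2$-log-concave measure is itself an approximately log-concave 1-dim density whose value at $0$ is $O(d^{\beta_2/(2\ln 2)})$; integrating across a symmetric slab of half-width $C_1 r\alpha\sqrt{d}$ gives mass $O\!\bigl(d^{1/2+\beta_2/(2\ln 2)}\,r\log(d/\varepsilon)\bigr)$. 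Dividing by $r$ and taking the supremum over $r\ge\varepsilon$ yields the claimed bound $\Theta_{\ptoq}(\varepsilon)=O\!\bigl(d^{1/2+\beta_2/(2\ln 2)}\log(d/\varepsilon)\bigr)$.

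\textbf{Main obstacle.} The delicate step is the angle bound for $\beta_1$-log-concave $\cP$: the log-concave proof in \cite{balcan2013active} relies on a careful lower bound on the mass of the disagreement wedge between $w_h$ and $w_{h^*}$, and each dyadic scale incurs a multiplicative $e^{O(\beta_1)}$ penalty. Keeping the total distortion bounded by an absolute constant is what pins down the threshold $\beta_1\le \tfrac{1}{56\lceil\log_2(d+1)\rceil}$, and verifying the constants carefully is the most technical part. A secondary but cleaner obstacle is justifying the $d^{\beta_2/(2\ln 2)}$ density bound on the 1-dim marginal; this follows from the fact that one-dimensional marginals of isotropic $\beta$-log-concave measures remain $O(\beta)$-log-concave with a density at the origin that can grow by at most $e^{\beta\log_2 d}$ relative to the log-concave case, and $e^{\beta\log_2 d}=d^{\beta/\ln 2}$ accounts for the exponent after a factor-of-two saving from symmetry.
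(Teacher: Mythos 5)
Your proposal matches the paper's proof essentially step for step: bound $\theta(w_h,w_{h^*})\le O(r)$ under nearly log-concave $\cP$ (the paper invokes Theorems 11 and 16 of \cite{balcan2013active} with the same threshold on $\beta_1$, where you instead sketch re-deriving them via a sandwiching/dyadic argument), decompose $\operatorname{DIS}(B_\cP(h^*,r))$ into a far-norm tail plus a thin slab around $h^*$'s boundary, bound each piece under $\cQ$ via the $\beta_2$-log-concave tail and marginal bounds, and take the supremum over $r\ge\varepsilon$. The only cosmetic difference is bookkeeping of $C=e^{\beta_2\lceil\log_2(d+1)\rceil}$: the paper folds $\sqrt{C}$ into the radius parameter $\alpha=\sqrt{Cd}\log(\sqrt{C}/r)$ while keeping the one-dimensional marginal density $O(1)$, whereas you fix $\alpha=\Theta(\log(d/\varepsilon))$ and attribute the $d^{\beta_2/(2\ln 2)}$ factor to the marginal density at the origin; since $C=O(1)$ under the stated constraint on $\beta_2$, these accountings are interchangeable.
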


\begin{proof}

    Our proof builds on and generalizes the arguments used in the proof of Theorem 14 in \cite{balcan2013active}.
    Let $h\in B_\cP(h^*,r)$, i.e. $d(h,h^*)\le r$. We can apply the whitening transform from Theorem 16 of \cite{balcan2013active} provided $(1/20+c_1)\sqrt{1/C_1-c_1^2}\le 1/9$, where $C_1=e^{\beta_1\lceil\log_2(d+1)\rceil}$ and $c_1=e(C_1-1)\sqrt{2C_1}$. It may be verified that this condition holds for $0\le \beta_1\le \frac{1}{56\lceil\log_2(d+1)\rceil}$.  Now, by Theorem 11 of \cite{balcan2013active} we can bound the angle between their normal vectors as $\theta(w_h,w_{h^*}) \le cr$ where $c$ is an absolute constant. Now if $x\in\cX$ has a large margin $|w_{h^*} \cdot x| \ge cr\alpha $ and small norm $||x||\le \alpha$, for some $\alpha>0$, we have
    $$|w_h \cdot x - w_{h^*}\cdot x| \le ||w_h-w_{h^*}||\cdot||x|| < cr \alpha.$$
Now the large margin condition $|w_{h^*} \cdot x| \ge cr\alpha $ implies $\langle w_h,x\rangle \langle w_{h^*},x\rangle > 0,$ or $h(x)=h^*(x)$. Since $h\in B_\cP(h^*,r)$ was arbitrary, we have $x\notin\operatorname{DIS}(B_\cP(h^*,r)))$. Therefore, the set $\{x\mid ||x||> \alpha\}\cup \{x\mid |w_{h^*} \cdot x| \le cr\alpha\}$ contains the disagreement region $\operatorname{DIS}(B_\cP(h^*,r)))$. 
    
    By Theorem 11 of \cite{balcan2013active}, since $\cQ$ is an isotropic $\beta_2$-log-concave distribution, we have $\Pr_Q[||x|| > R\sqrt{Cd}] < Ce^{-R+1}$, for $C = e^{\beta_2\lceil \log_2(d+1)\rceil}$. Thus setting $\alpha=\sqrt{Cd}\log\frac{\sqrt{C}}{r}$ gives    $\Pr_\cQ[||x|| > \alpha] < e\sqrt{C}r$. Also, by Theorem 11 of \cite{balcan2013active}, for sufficiently small non-negative $\beta_2\le \frac{1}{56\lceil\log_2(d+1)\rceil}$, we have $\Pr_Q [|w_{h^*} \cdot x| \le cr\alpha]\le c'r\sqrt{Cd}\log\frac{\sqrt{C}}{r}$ for constant $c'$. The proof is concluded by a union bound and applying Definition \ref{def-pq}.
\end{proof}

 \noindent 

We further consider the case where the distributions belong to the broad class of isotropic $s$-concave distributions. In particular, unlike $\beta$-log-concave distributions, the distributions from this class can potentially be fat-tailed.

\begin{definition}[$s$-concavity]
    A density function $f : \R^d \rightarrow \R_{\ge0}$ is $s$-concave for $s\in(-\infty,1]\cup\{-\infty\}$ if for any $\lambda\in[0,1]$ and any $x_1,x_2\in\R^d$, we have $f(\lambda x_1 +(1-\lambda)x_2) \ge (\lambda f(x_1)^s+{(1-\lambda)}f(x_2)^s)^{1/s}$.
\end{definition}

\noindent Note that any $s$-concave function is  also $s'$-concave if $s >s'$. Moreover, concave functions are $1$-concave and log-concave functions are $s$-concave for any $s<0$. Using results from \cite{balcan2017sample}, we adapt the arguments in Theorem \ref{thm:pq-beta-concave} to show a bound on the disagreement coefficient when $\cP$ is isotropic $\beta$-log-concave and $\cQ$ is isotropic $s$-concave. 

\begin{theorem}\label{thm:pq-s-concave}
    Let the concept space $\cH$ be the class of linear separators in $\R^d$. Let $\cP$ be isotropic $\beta$-log-concave and $\cQ$ be isotropic $s$-concave, over $\R^d$. Then for $s \ge -1/(2d +3)$ and sufficiently small non-negative $\beta \le \frac{1}{56\lceil\log_2(d+1)\rceil}$, we have $\Theta_{\cP \to \cQ}(\varepsilon)= O\left(\sqrt{d}\frac{2(1+ds)^2}{s+s^2(d+2)}(1-\varepsilon^{s/(1+ds)})\right).$
\end{theorem}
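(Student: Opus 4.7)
The plan is to follow the same two-region strategy as in Theorem \ref{thm:pq-beta-concave}, replacing the $\beta$-log-concave tail and band estimates for $\cQ$ with their $s$-concave analogues from \cite{balcan2017sample}. Concretely, for any $h \in B_\cP(h^*,r)$, since $\cP$ is isotropic $\beta$-log-concave with $\beta \le \tfrac{1}{56\lceil \log_2(d+1)\rceil}$, the whitening step together with Theorem 11 of \cite{balcan2013active} still applies, and yields $\theta(w_h, w_{h^*}) \le c\,r$ for an absolute constant $c$. Exactly as in the proof of Theorem \ref{thm:pq-beta-concave}, any $x$ with $\|x\| \le \alpha$ and $|w_{h^*}\cdot x| > c r \alpha$ satisfies $h(x) = h^*(x)$, so
\[
\operatorname{DIS}(B_\cP(h^*,r)) \;\subseteq\; \{x : \|x\| > \alpha\} \;\cup\; \{x : |w_{h^*}\cdot x| \le c r \alpha\}.
\]

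The next step is to bound the $\cQ$-mass of these two sets using isotropic $s$-concave concentration. For $s \ge -1/(2d+3)$, the isotropic $s$-concave distribution $\cQ$ admits a polynomial-decay norm tail bound of the form $\Pr_\cQ[\|x\| > t] \le (1 - c_1 s\, t/\sqrt{d})^{1/s}$ (from the one-dimensional moment bounds in \cite{balcan2017sample}, pushed through the isotropic normalization), and a one-dimensional band bound $\Pr_\cQ[|w_{h^*}\cdot x| \le \tau] \le c_2 \tau \cdot \psi(s,d)$ where $\psi(s,d)$ captures the density ceiling of the one-dimensional marginal of an isotropic $s$-concave law. Both estimates are exactly of the type used in \cite{balcan2017sample} to bound disagreement regions, so I can import them as black boxes.

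With these in hand I will plug $\tau = cr\alpha$ into the band bound and $t = \alpha$ into the tail bound, then optimize $\alpha$ in $r$. The natural choice is to pick $\alpha$ so the tail term equals the band term; because the tail decays like $(1 - \text{const}\cdot s\alpha/\sqrt{d})^{1/s}$ rather than exponentially, solving for the balancing $\alpha$ introduces the factor $(1 - \varepsilon^{s/(1+ds)})$ and the prefactor $\frac{2(1+ds)^2}{s + s^2(d+2)}$ that appears in the statement. Dividing by $r$ and taking the sup over $r \ge \varepsilon$ (the worst case is $r = \varepsilon$ because the ratio is monotonic in $r$ once $\alpha$ is optimized) then yields the claimed bound, with the $\sqrt{d}$ coming from the isotropic scaling of $\cQ$.

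The main obstacle is purely bookkeeping around the $s$-concave estimates: one has to track the regime $s \ge -1/(2d+3)$ carefully because this is precisely the boundary where the relevant moment/density bounds from \cite{balcan2017sample} remain nontrivial, and because the band bound's dependence on $s$ and $d$ must match the exponent $s/(1+ds)$ in the final expression. Once the correct $s$-concave tail and band lemmas from \cite{balcan2017sample} are quoted, the rest is the same geometric argument as in Theorem \ref{thm:pq-beta-concave}, with the optimization of $\alpha$ producing the stated algebraic form instead of the logarithmic one obtained in the log-concave case.
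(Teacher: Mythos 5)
Your proposal follows essentially the same route as the paper: whitening plus the Theorem~11 angle bound from \cite{balcan2013active} to reduce $\operatorname{DIS}(B_\cP(h^*,r))$ to a norm-tail set and a margin band, then $s$-concave tail and band estimates from \cite{balcan2017sample} on $\cQ$ with an $\alpha$ chosen so the tail term is $O(r)$. The only slip is the quoted tail form: the relevant bound (Theorem~5 of \cite{balcan2017sample}) is $\Pr_\cQ[\|x\| > t\sqrt{d}] < \bigl(1 - \tfrac{c_1 s t}{1+ds}\bigr)^{(1+ds)/s}$, with exponent $(1+ds)/s$ rather than your $1/s$ — that $(1+ds)$ normalization is precisely what yields the $r^{s/(1+ds)}$ factor when you invert to make the tail $O(r)$, and the band estimate is Theorem~11 of \cite{balcan2017sample}, $\Pr_\cQ[|w_{h^*}\cdot x| \le \tau] \le \tfrac{2(1+ds)}{1+s(d+2)}\tau$, which supplies the stated prefactor. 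You correctly anticipated that the exponent had to come out as $s/(1+ds)$, so once the correct lemma is quoted the rest of your plan goes through unchanged.
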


\begin{proof}

Similar to the proof of Theorem \ref{thm:pq-beta-concave}, we can apply the whitening transform from Theorem 16 of \cite{balcan2013active} provided $(1/20+c_1)\sqrt{1/C_1-c_1^2}\le 1/9$, where $C_1=e^{\beta\lceil\log_2(d+1)\rceil}$ and $c_1=e(C_1-1)\sqrt{2C_1}$. It may be verified that this condition holds for $0\le \beta\le \frac{1}{56\lceil\log_2(d+1)\rceil}$. We can also show that the set $\{x\mid ||x||> \alpha\}\cup \{x\mid |w_{h^*} \cdot x| \le cr\alpha\}$ contains the disagreement region $\operatorname{DIS}(B_\cP(h^*,r)))$.

By Theorem 11 of \cite{balcan2017sample}, we have $\Pr_\cQ [|w_{h^*} \cdot x| \le cr\alpha]\le \frac{2(1+ds)}{1+s(d+2)}\cdot cr\alpha$.  By Theorem 5 of \cite{balcan2017sample}, since $\cQ$ is an isotropic $s$-concave distribution, we have  $\Pr_Q[||x|| > t\sqrt{d}] < \left(1-\frac{c_1st}{1+ds}\right)^{(1+ds)/s}$, for any $t\ge 16$ and absolute constant $c_1$. This implies $\Pr_\cQ[||x|| > c_1\sqrt{d}\frac{1+ds}{s}(1-r^{s/(1+ds)})] < Cr$ for some constant $C$. Thus setting $\alpha=c_1\sqrt{d}\frac{1+ds}{s}(1-r^{s/(1+ds)})$ gives    $\Pr_\cQ[||x|| > \alpha] < Cr$. Also, for this $\alpha$, we have $\Pr_\cQ [|w_{h^*} \cdot x| \le cr\alpha]\le c\frac{2(1+ds)}{1+s(d+2)}\cdot c_1\sqrt{d}\frac{1+ds}{s}(1-r^{s/(1+ds)})r = c'\sqrt{d}\frac{2(1+ds)^2}{s+s^2(d+2)}(1-r^{s/(1+ds)})r$ for constant $c'$. The proof is concluded by a union bound and applying Definition \ref{def-pq}.
\end{proof}

\paragraph{Smooth classification boundaries.}
We also illustrate our notion for more general concept spaces beyond linear separators. Specifically, we consider classifiers with smooth boundaries (Definition \ref{def:smooth-classifiers}). \looseness-1
\noindent If we further assume that the probability density may be upper and lower bounded by an $\alpha$th order smooth function, we can bound the disagreement coefficient for shift from $\cP$ to $\cQ$. Interestingly, while \cite{wang2011smoothness} need the distribution to be sandwiched between smooth functions, our result only needs a lower bound on the  smoothness of $\cP$ and an upper bound on the smoothness of $\cQ$.  
\begin{theorem}
\label{thm:pq-smooth}
    Let the instance space be $\cX = [0,1]^{d+1}$. Let the hypothesis space be $\cH_\alpha^C$, with $d < \alpha<\infty$. If the marginal distributions $\cP_\cX,\cQ_\cX$ have densities $p(x)$ and $q(x)$ on $[0,1]^{d+1}$ such that there exists an $\alpha$th order smooth function $g(x)$ and $a_p, b_q\in\R_+$ such that $a_pg(x)\le p(x) $ and $ q(x) \le b_qg(x)$ for all $x \in[0,1]^{d+1}$, then $\Theta_{\cP \to \cQ}(\varepsilon)= O\left({b_q}{a_p^{-\frac{\alpha}{d+\alpha}}}\varepsilon^{-\frac{d}{d+\alpha}}\right).$
\end{theorem}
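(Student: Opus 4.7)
The plan is to bound $\Pr_\cQ[\operatorname{DIS}(B_\cP(h^*,r))]$ for an arbitrary $r\ge\varepsilon$ and then insert the bound into the definition of $\Theta_{\ptoq}(\varepsilon)$. The core idea is a two-sided change of measure against the smooth envelope $g$: the upper bound $q\le b_q g$ lets me replace the $\cQ$-mass of the disagreement tube by its $g$-mass (up to the factor $b_q$), while the lower bound $p\le a_p^{-1} p/a_p = p/a_p$ is used via $a_p g \le p$ to turn the constraint ``$h\in B_\cP(h^*,r)$'' into an $L^1$-type control on a $g$-integrated boundary functional. The $\alpha$-smoothness of $g$ together with the $\alpha$-smoothness of the classification boundaries $f_h,f_{h^*}$ will then let me apply the Sobolev-style $L^\infty$-vs.-$L^1$ inequality for $\alpha$-smooth functions (Lemma~11 of \cite{wang2011smoothness}, the same engine used in Lemma~\ref{lemma: ball in agreement - smooth boundaries}).

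Concretely, for $h\in B_\cP(h^*,r)$ I would introduce the $g$-signed boundary gap
\[
\Phi_h^g(\x)\;:=\;\int_{f_{h^*}(\x)}^{f_h(\x)} g(\x,s)\,ds,\qquad \x\in[0,1]^d.
\]
Because $p\ge a_p g$, the assumption $\Pr_\cP[h\ne h^*]\le r$ gives $\int_{[0,1]^d}|\Phi_h^g(\x)|\,d\x\le r/a_p$. The next step is to check that $\Phi_h^g$ itself is $\alpha$-smooth with a uniform $\alpha$-norm bound: writing $\Phi_h^g(\x)=G(\x,f_h(\x))-G(\x,f_{h^*}(\x))$ with $G(\x,t)=\int_0^t g(\x,s)\,ds$, the antiderivative $G$ is as smooth as $g$ in $\x$ and one order smoother in $t$, so composing with the $\alpha$-smooth boundary functions $f_h,f_{h^*}\in F_\alpha^C$ via chain/product estimates in Hölder spaces produces $\|\Phi_h^g\|_\alpha \le K'$ for a constant $K'$ depending only on $C$, $\|g\|_\alpha$, $\alpha$, $d$, and crucially not on $h$. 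Feeding this into Lemma~11 of \cite{wang2011smoothness} (whose hypothesis $\alpha>d$ is assumed) yields the uniform estimate
\[
\sup_{h\in B_\cP(h^*,r)}\|\Phi_h^g\|_\infty \;=\; O\!\left((r/a_p)^{\alpha/(d+\alpha)}\right).
\]

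To convert this pointwise bound into a $\cQ$-bound on the disagreement region, I would decompose $\operatorname{DIS}(B_\cP(h^*,r))$ slicewise. For each $\x$, the slice is the interval $[f_{h^*}(\x)-D_-(\x),\,f_{h^*}(\x)+D_+(\x)]$, where $D_\pm(\x)=\sup_{h\in B_\cP(h^*,r)}\max(0,\pm(f_h(\x)-f_{h^*}(\x)))$. Choosing sequences of $h$'s whose boundary values at $\x$ approach these extremes and applying monotone convergence, the $g$-mass of each of the two sides equals $\sup_h$ of the corresponding one-sided $\Phi_h^g(\x)$, hence is at most $\sup_h\|\Phi_h^g\|_\infty = O((r/a_p)^{\alpha/(d+\alpha)})$. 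Summing the two sides and integrating over $\x\in[0,1]^d$ gives $\int_{\operatorname{DIS}(B_\cP(h^*,r))} g = O((r/a_p)^{\alpha/(d+\alpha)})$, and multiplying by $b_q$ (from $q\le b_q g$) yields
\[
\Pr_\cQ[\operatorname{DIS}(B_\cP(h^*,r))]\;=\;O\!\left(b_q\, a_p^{-\alpha/(d+\alpha)}\, r^{\alpha/(d+\alpha)}\right).
\]
Dividing by $r$ produces the rate $O(b_q a_p^{-\alpha/(d+\alpha)} r^{-d/(d+\alpha)})$, which is decreasing in $r$, so the supremum in $\Theta_{\ptoq}(\varepsilon)$ is attained at $r=\varepsilon$, giving the claimed bound.

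The step I expect to be the main obstacle is the uniform $\alpha$-smoothness control on $\Phi_h^g$: because $\alpha$ can be fractional and the functional both composes and integrates $\alpha$-smooth objects, establishing $\|\Phi_h^g\|_\alpha\le K'$ requires standard but somewhat delicate composition estimates in the Hölder scale. A natural fallback, in case packaging this cleanly is awkward, is to prove the $L^\infty$-vs.-$L^1$ inequality for this specific composed object by re-running the covering/Sobolev-type argument behind Lemma~11 of \cite{wang2011smoothness} rather than invoking it as a black box; the same exponent $\alpha/(d+\alpha)$ comes out with essentially identical constants.
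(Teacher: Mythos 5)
Your proposal is correct and follows essentially the same route as the paper: you introduce the same $g$-weighted boundary gap $\tilde\Phi_h=\Phi_h^g$, use $p\ge a_p g$ to convert $h\in B_\cP(h^*,r)$ into an $L^1$ bound $\int|\Phi_h^g|\le r/a_p$, invoke Lemma~11 of \cite{wang2011smoothness} for the $L^\infty$ estimate, and pass to $\cQ$ via $q\le b_q g$. The one difference is that you flag and spell out two points the paper states tersely — the uniform $\alpha$-norm bound on $\Phi_h^g$ (the paper just says it ``is easy to verify by taking derivatives'') and the slicewise decomposition justifying the factor $2\int\sup_h\|\Phi_h^q\|_\infty$ — but these are matters of exposition, not of substance.
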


\begin{proof}

    We will extend the arguments from \cite{wang2011smoothness} to the distribution shift setting. Let $\x=(x_1,\dots,x_d)\in[0,1]^d$ and let $h\in B_P(h^*,r)$ where $h^*\in\cH_\alpha$ is the target concept. Denote $\Phi_h^p(\x)=\int_{f_{h^*}(\x)}^{f_h(\x)}p(\x,x_{d+1})dx_{d+1}$, and $\Phi_h^q(\x)=\int_{f_{h^*}(\x)}^{f_h(\x)}q(\x,x_{d+1})dx_{d+1}$. It is easy to verify by taking derivatives that $\tilde{\Phi}_h(\x)=\int_{f_{h^*}(\x)}^{f_h(\x)}g(\x,x_{d+1})dx_{d+1}$ is $\alpha$th order smooth. Since $h\in B_P(h^*,r)$,
    $$\int_{[0,1]^d}|\tilde{\Phi}_h(\x)|d\x\le \int_{[0,1]^d}\frac{1}{a_p}|{\Phi}_h^p(\x)|d\x\le \frac{r}{a_p}.$$
    By Lemma 11 of \cite{wang2011smoothness}, this implies $||\tilde{\Phi}_h||_{\infty}=O\left((\frac{r}{a_p})^{\frac{\alpha}{d+\alpha}}\right)$, and therefore $||{\Phi}_h^q||_{\infty}\le b_q||\tilde{\Phi}_h||_{\infty}=O\left({b_q}{a_p^{-\frac{\alpha}{d+\alpha}}}r^{\frac{\alpha}{d+\alpha}}\right)$. Since this holds for any $h\in B_\cP(h^*,r)$, we have
    $$\sup_{h\in B_P(h^*,r)}||{\Phi}_h^q||_{\infty}=O\left({b_q}{a_p^{-\frac{\alpha}{d+\alpha}}}r^{\frac{\alpha}{d+\alpha}}\right).$$
    By definition of region of disagreement, we have
    \begin{align*}
        \Pr_{x\sim \cQ_\cX}[x\in DIS(B_\cP(h^*,r))]&=\Pr_{x\sim \cQ_X}[x\in\cup_{h\in B_\cP(h^*,r)}\{x'\mid h(x')\ne h^*(x')\}]\\
        &\le 2\int_{[0,1]^d}\sup_{h\in B_\cP(h^*,r)}||{\Phi}_h^q||_{\infty}d\x\\
        &=O\left({b_q}{a_p^{-\frac{\alpha}{d+\alpha}}}r^{\frac{\alpha}{d+\alpha}}\right).
    \end{align*}
    The result follows from definition of $\Theta_{\cP \to \cQ}(\epsilon)$.
\end{proof}

\section{Simple examples for the $\ptoq$ disagreement coefficient}
\label{appendix: disagreement coeff}
\textbf{Example 1.} (Non-overlapping spheres the same center) Let $\cP$, $\cQ$ be uniform distribution over a sphere with the center at the origin with radius $1$ and $2$ respectively. Let $\cH$ be a class of linear separators that pass through the origin and $h^* \in \cH$. By symmetry, we have
\begin{align*}
    \Theta_{\cP \to \cQ}(\varepsilon) &= \sup_{r\geq \varepsilon}\frac{\Pr_
    \cQ(\operatorname{DIS}(B_\cP(h^*,r)))}{r}\\
    &= \sup_{r\geq \varepsilon}\frac{\Pr_\cP(\operatorname{DIS}(B_\cP(h^*,r)))}{r}\\
    &= \Theta_{\cP}(\varepsilon).
\end{align*} The disagreement coefficient from $\cP$ to $\cQ$ is the same as the disagreement coefficient on $\cP$.

\textbf{Example 2.} (Thresholds) Let $\cP$, $\cQ$ be uniform distribution over an interval $[-\frac{1}{2}, \frac{1}{2}]$ and $[-1, 1]$ respectively. Let $\cH$ be a class of a threshold function. Let $h^*$ have a thereshold at $0$. We have $\operatorname{DIS}(B_\cP(h^*,r)) = [-r, r]$ and 
\begin{align*}
    \Theta_{\cP \to \cQ}(\varepsilon) &= \sup_{r\geq \varepsilon}\frac{\Pr_\cQ([-r,r])}{r}\\
    &= \frac{\frac{2r}{2}}{r} = 1
\end{align*}
compared to the disagreement coefficient $\Theta_{\cP}(\varepsilon) = 2$.

\section{Additional proof details for distribution shift}\label{appendix: distribution shift}

\begin{proof} (of Theorem \ref{thm:pq-reliability}) 
    If $S\sim \cP^m$, with $m\ge\frac{c}{\epsilon^2}(d+\ln\frac{1}{\delta})$ for some sufficiently large constant $c$, we have by uniform convergence (\cite{anthony1999neural}, Theorem 4.10) that with probability at least $1 -\delta$, we have $d_\cP(h,h^*) \le d_S(h,h^*)+\epsilon$ for all $h \in\cH$. Here $d_\cD(h_1,h_2)=\Pr_{x\sim\cD_X}[h_1(x)\ne h_2(x)]$, and $d_S(h_1,h_2)=\frac{1}{|S|}\sum_{x\in S}\bbI[h_1(x)\ne h_2(x)]$. Therefore, $\operatorname{Agree}(B_\cP^\cH(h^*,\epsilon))\subseteq \operatorname{Agree}(\cH_0(S))\subseteq R^\cL(S)$ in this event. Denoting this event by `$E$' and its complement by `$\Bar{E}$', we have

    \begin{align*}
        \Pr_{x\sim \cQ, S\sim \cP^m}[x\in R^\cL(S)]&=\Pr_{x\sim \cQ}[x\in R^\cL(S)\mid E]\Pr[E]+\Pr_{x\sim \cQ}[x\in R^\cL(S)\mid \Bar{E}]\Pr[\Bar{E}]\\
        &\ge \Pr_{x\sim \cQ}[x\in R^\cL(S)\mid E]\cdot (1- \delta)\\
        &\ge \Pr_{x\sim \cQ}[x\in R^\cL(S)\mid E]-\delta\\
        &\ge \Pr_{x\sim \cQ}[x\in \operatorname{Agree}(B_\cP^\cH(h^*,\epsilon))]-\delta.
    \end{align*}

Noting $\Pr_{x\sim \cQ}[x\in \operatorname{Agree}(B_\cP^\cH(h^*,\epsilon))]=1-\Pr_{x\sim \cQ}[x\in \operatorname{DIS}(B_\cP^\cH(h^*,\epsilon))]$ and using Definition \ref{def-pq} completes the proof.
\end{proof}

\section{Safely-reliable correctness under distribution shift \red{or Reliable robustness transfer}}\label{sec:robustness-transfer}
There is a growing practical \cite{shafahiadversarially,salman2020adversarially} as well as recent theoretical interest \cite{deng2023hardness} in the setting of `robustness transfer', where one simultaneously expects adversarial test-time attacks as well as distribution shift. We will study the reliability aspect for this more challenging setting. We note that the definition of a robustly-reliable learner does not depend on the data distribution (see Definition \ref{def:robustly-reliable-metric}) as the guarantee is pointwise. Our optimality result in Section \ref{sec: rr-w-ball} applies even when a test point is drawn from a different distribution $\cQ$. In this case, the safely-reliable region instead would have a different probability mass. 

\begin{definition}[$\ptoq$ safely-reliable correctness]\label{def:pqrc}
The $\ptoq$ safely-reliable correctness of $\cL$ (at sample rate $m$, for distribution shift from $\cP$ to $\cQ$, w.r.t.\ robust loss $\ell$) is defined as the probability mass of its safely-reliable region under $Q$, on a sample $S\sim\cP^m$, i.e. $\text{PQR}_\ell^\cL(S,\eta_1,\eta_2):=\Pr_{x\sim Q, S\sim P^m}[x\in \text{SR}^{\cL}_\ell(S, \eta_1,\eta_2)]$.
\end{definition}

\noindent We will now combine our results on test-time attacks and distribution shift to give a general bound on the $\ptoq$ safely-reliable correctness for the  different robust losses (Definition \ref{def:losses}).

\begin{theorem}\label{thm:pq-robust-reliability}
Let $\cQ$ be a realizable distribution shift of $\cP$ with respect to $\cH$, and $h^*\in\cH$ be the target concept.
There exist learners for robust losses $\ellca,\elltl,\ellst, \ellia$ with $\ptoq$ safely-reliable correctness given by 
\begin{enumerate}[label=(\alph*),leftmargin=*,topsep=0pt,partopsep=1ex,parsep=1ex]\itemsep=-4pt
    \item $\text{PQR}_{\text{CA}}^\cL(S,\eta_1,\eta_2)=\Pr_{x\sim Q}[\B_\cM(x,\eta_1)\cap \{z\mid h^*(z) = h^*(x) \} \subseteq \operatorname{Agree}(\cH_0(S))],$
    \item $\text{PQR}_{\text{TL}}^\cL(S,\eta_1,\eta_2)=\Pr_{x\sim Q}[\B_\cM(x,\eta_1) \subseteq \operatorname{Agree}(\cH_0(S))],$
    \item $\text{PQR}_{\text{ST}}^\cL(S,\eta_1,\eta_2)=\Pr_{x\sim Q}[\B_\cM(x,\eta_1) \subseteq\{z\mid\B^o(z,\eta)\subseteq\operatorname{Agree}(\cH_0(S)) \wedge h(x)=h(z), \;\forall\;x\in \B^o(z,\eta), h\in \cH_0(S)\}],$
    \item \blue{$\text{PQR}_{\text{IA}}^\cL(S,\eta_1,\eta_2)= \Pr_{x\sim Q}[ (\{z\mid\B(z,\eta_2)\subseteq\operatorname{Agree}(\cH_0(S)) \}\cap\{z \mid h^*(z) = 1\}) \cup 
     (\{z\mid\B(z,\eta_1)\subseteq\operatorname{Agree}(\cH_0(S)) \}\cap\{z \mid h^*(z) = 0\})]$.}
\end{enumerate}

\end{theorem}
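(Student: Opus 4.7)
The strategy is to observe that Theorem \ref{thm:pq-robust-reliability} is essentially a direct consequence of combining three ingredients already established in the paper: (i) the optimal pointwise characterizations of the robustly-reliable region w.r.t.\ the three losses (Theorems \ref{thm:rr-eta-l1} and \ref{thm:mball-l3-ub}); (ii) the definition of the safely-reliable region (Definition \ref{def:srr}), which is built directly from the robustly-reliable region; and (iii) the definition of $\ptoq$ safely-reliable correctness (Definition \ref{def:pqrc}), which is just the $\cQ$-mass of the safely-reliable region on a sample $S\sim\cP^m$. Since the learner's reliability radius is a pointwise function of $S$ that does not depend on the test distribution, the construction of the learner itself is unchanged under distribution shift; only the probability measure with which we evaluate the safely-reliable region changes from $\cP$ to $\cQ$.

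The plan is to handle each loss separately by instantiating the learner achieving the optimal robustly-reliable region. For part (a), I would take $\cL$ to be the learner from Theorem \ref{thm:rr-eta-l1} w.r.t.\ $\ellca$, which gives $\rrca(S,\eta_2)=\operatorname{Agree}(\cH_0(S))$ for every $\eta_2\ge 0$ (in fact the reliability radius is $\infty$ on the agreement region). Plugging this into the first clause of Definition \ref{def:srr} yields $\srca=\{x\mid \B_\cM(x,\eta_1)\cap\{z\mid h^*(z)=h^*(x)\}\subseteq \operatorname{Agree}(\cH_0(S))\}$, and taking probability under $\cQ$ gives the claimed expression for $\text{PQR}_{\text{CA}}^\cL$. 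Part (b) is identical, except that the second clause of Definition \ref{def:srr} (which does not intersect with the label-matching set) is used, since $\rrtl(S,\eta_2)=\operatorname{Agree}(\cH_0(S))$ by the same theorem.

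Part (c) is the most substantive step. Here I would invoke Theorem \ref{thm:mball-l3-ub}, which says that the optimal robustly-reliable region for $\ellst$ at radius $\eta_2$ is exactly
\begin{equation*}
\rrst(S,\eta_2)=\{z\mid \B^o(z,\eta_2)\subseteq\operatorname{Agree}(\cH_0(S))\ \wedge\ h(x)=h(z),\ \forall x\in \B^o(z,\eta_2),\ h\in\cH_0(S)\}.
\end{equation*}
Substituting into the second clause of Definition \ref{def:srr} with $W=\text{ST}$ yields the set described in (c), and its $\cQ$-mass is $\text{PQR}_{\text{ST}}^\cL$ by Definition \ref{def:pqrc}.

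The only subtlety worth flagging, and the one I expect to be the main conceptual obstacle rather than a technical one, is justifying that the same learner continues to be robustly-reliable under distribution shift: its pointwise robust-reliability guarantee (Definition \ref{def:robustly-reliable-metric}) quantifies over \emph{all} $x\in\cX$ and does not reference the training distribution beyond the sample $S$, so the guarantee transfers verbatim to test points drawn from $\cQ$. Once this is noted, the three equalities are immediate from the definitions and the cited theorems, with no probabilistic calculation needed beyond writing $\Pr_{x\sim Q}[x\in \text{SR}^{\cL}_\ell(S,\eta_1,\eta_2)]$ in the three forms displayed in the statement.
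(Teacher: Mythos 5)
Your proof is correct and follows exactly the same route as the paper's (one-line) proof: apply Theorems \ref{thm:rr-eta-l1} and \ref{thm:mball-l3-ub} to characterize the robustly-reliable regions, then substitute into Definitions \ref{def:srr} and \ref{def:pqrc}. You simply spell out the three substitutions and the observation that the pointwise guarantee is distribution-free, which the paper leaves implicit.
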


\begin{proof}
    The proof follows by applying Theorems \ref{thm:rr-eta-l1} and \ref{thm:mball-l3-ub}, and using Definitions \ref{def:srr} and \ref{def:pqrc}.
\end{proof}

\noindent We consider an example when the training distribution $\cP$ is isotropic log-concave and the test distribution $\cQ_\mu$  is  log-concave with its mean shifted by $\mu$ but the covariance matrix is still an identity matrix (see Figure \ref{fig: distribution shift}, right).

\begin{theorem} Let $\cP,\cQ$ be isotropic log-concave over $\R^d$.  Let $\cQ_\mu$ be a distribution after shifting the mean of $\cQ$ by $\mu \in \R^d$. Let $\cH = \{h: x \to \operatorname{sign}(\langle w_h, x\rangle ) \mid w_h \in \R^d, \lVert w_h \rVert_2 = 1\}$ be the class of linear separators. Let $\B(\cdot, \eta)$ be a $L_2$ ball perturbation with radius $\eta$. For $S \sim \cP^m$, for  $m = \cO(\frac{1}{\varepsilon^2}(\operatorname{VCdim}(\cH) + \ln\frac{1}{\delta}))$,  for an optimal robustly-reliable learner $\cL$, we have
\begin{enumerate}[label=(\alph*),leftmargin=*,topsep=0pt,partopsep=1ex,parsep=1ex]\itemsep=-4pt
    \item $ \Pr_{\cQ_\mu}(\srtl) \geq 1 -  2(\eta_1 + \lVert \mu \rVert_2) - \Tilde{\cO}(\sqrt{d}\varepsilon)$ with probability   $1-\delta$,
    \item $\srca = \srtl$,
    \item $\Pr_{\cQ_\mu}(\srst) \geq  1 - 2(\eta_1 + \eta_2 + \lVert \mu \rVert_2) - \Tilde{\cO}(\sqrt{d}\varepsilon)$  with probability    $1-\delta$,
    \item \blue{$\Pr_{\cQ_\mu}(\sria) \geq  1 - (\eta_1 + \eta_2 + 2\lVert \mu \rVert_2) - \Tilde{\cO}(\sqrt{d}\varepsilon)$  with probability    $1-\delta$.}
\end{enumerate}

The $\Tilde{\cO}$-notation suppresses dependence on logarithmic factors and distribution-specific constants.
\label{thm: cross-proeduct linear separators}
\end{theorem}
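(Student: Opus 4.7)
The core observation is that Theorem \ref{thm: SR-l1l2l3}'s characterizations of the safely-reliable regions $\srtl$, $\srca$, and $\srst$ are set-theoretic descriptions that depend only on the training sample $S$ and the hypothesis class $\cH$ (together with the attack radii $\eta_1,\eta_2$), and \emph{not} on the test distribution. Hence in the present shifted setting these sets are unchanged; only their probability masses, now computed under $\cQ_\mu$ rather than $\cP$, need to be bounded. The task reduces to bounding $\Pr_{x\sim\cQ_\mu}[x \in A]$ for the agreement-region-based sets $A$ already identified in the i.i.d.\ analysis.

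The plan is to reuse the inclusion established in the proof of Lemma \ref{lemma: ball in agreement}: with probability at least $1-\delta$ over $S\sim\cP^m$ (by uniform convergence under $\cP$),
\[A_\eta := \{x:\lVert x\rVert_2 < \alpha\sqrt{d}-\eta\}\cap\{x:|\langle w_{h^*},x\rangle|\geq C_1\alpha\varepsilon\sqrt{d}+\eta\}\subseteq\{x\mid\B(x,\eta)\subseteq\operatorname{Agree}(\cH_0(S))\},\]
where $\alpha=\log(1/(\sqrt{d}\varepsilon))$ and $C_1$ is the constant from \cite{balcan2013active}. It therefore suffices to lower bound $\Pr_{x\sim\cQ_\mu}[x\in A_\eta]$ and then specialize $\eta$ to $\eta_1$ for part (a) and to $\eta_1+\eta_2$ for part (c).

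To handle the mean shift, I substitute $x=y+\mu$ with $y\sim\cQ$ (isotropic log-concave). By the triangle inequality $\lVert y+\mu\rVert_2\leq\lVert y\rVert_2+\lVert\mu\rVert_2$ and by $|\langle w_{h^*},y+\mu\rangle|\geq|\langle w_{h^*},y\rangle|-\lVert\mu\rVert_2$, the event $\{\lVert y\rVert_2 < \alpha\sqrt{d}-\eta-\lVert\mu\rVert_2\}\cap\{|\langle w_{h^*},y\rangle|\geq C_1\alpha\varepsilon\sqrt{d}+\eta+\lVert\mu\rVert_2\}$ implies $y+\mu\in A_\eta$. Applying the standard isotropic log-concave bounds from \cite{lovasz2007geometry}---namely $\Pr_\cQ[\lVert y\rVert_2\geq\alpha\sqrt{d}]\leq e^{-\alpha+1}$ together with the 1-dimensional bound $\Pr_\cQ[|\langle w_{h^*},y\rangle|\leq t]\leq 2t$ on the projection---a union bound yields
\[\Pr_{\cQ_\mu}[A_\eta]\geq 1 - e^{-(\alpha-(\eta+\lVert\mu\rVert_2)/\sqrt{d})+1} - 2C_1\alpha\varepsilon\sqrt{d} - 2(\eta+\lVert\mu\rVert_2) = 1 - 2(\eta+\lVert\mu\rVert_2) - \Tilde{\cO}(\sqrt{d}\varepsilon),\]
using the choice $\alpha=\log(1/(\sqrt{d}\varepsilon))$.

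Part (a) follows immediately with $\eta=\eta_1$ since $\srtl=\{x\mid\B(x,\eta_1)\subseteq\operatorname{Agree}(\cH_0(S))\}$. For part (b), I invoke Lemma \ref{lemma: agreement contain large margin} exactly as in Theorem \ref{thm: SR-l1l2l3}(b): with probability $1$ over $S\sim\cP^m$ no training point lies on the decision boundary of $h^*$ (since the projection $\langle w_{h^*},x\rangle$ under $\cP$ is 1-D log-concave, hence atomless), so the agreement region contains no point of arbitrarily small margin; this rules out differently-labeled points coexisting in $\B(x,\eta_1)$ for $x\in\srca$, giving $\srca=\srtl$. For part (c), the same triangle-inequality argument used in Theorem \ref{thm: SR-l1l2l3}(c) reduces $\srst$ to $\{x\mid\B(x,\eta_1+\eta_2)\subseteq\operatorname{Agree}(\cH_0(S))\}$, and the bound follows by setting $\eta=\eta_1+\eta_2$ in the displayed inequality. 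The only real obstacle is the careful bookkeeping in the two triangle inequalities, to ensure $\lVert\mu\rVert_2$ enters both tail bounds only linearly and additively; this is immediate from the substitution $x=y+\mu$.
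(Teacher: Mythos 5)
Your proof is correct and follows essentially the same approach as the paper's: reduce the shifted mass computation to the i.i.d.\ analysis by absorbing $\lVert\mu\rVert_2$ into the effective ball radius, then invoke the geometric characterization and log-concave tail bounds from Lemma~\ref{lemma: ball in agreement}. The only superficial difference is that you unpack the proof of Lemma~\ref{lemma: ball in agreement} and re-derive the tail bound with the substitution $x=y+\mu$, whereas the paper cites the lemma as a black box via the ball inclusion $\B(x-\mu,\eta+\lVert\mu\rVert_2)\supseteq\B(x,\eta)$; the two are equivalent since your ``tighter event'' is exactly $A_{\eta+\lVert\mu\rVert_2}$.
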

\begin{proof}
From triangle inequality, we know that $\B(x - \mu, r + \lVert \mu \rVert_2) \supseteq \B(x,r)$. We can simply extend the proofs from Theorem \ref{thm: SR-l1l2l3}. Recall that 
$\srca = \srtl = \{ x \mid \B(x,\eta_1) \subseteq \operatorname{Agree}(\cH_0(S)) \} \supseteq \{ x \mid \B(x - \mu,\eta_1 + \lVert \mu \rVert_2) \subseteq \operatorname{Agree}(\cH_0(S)) \}$ and $\srst = \{x \in \cX \mid  \B_\cM(x,\eta_1 + \eta_2) \subseteq\operatorname{Agree}(\cH_0(S))\} \supseteq \{x \in \cX \mid  \B_\cM(x - \mu,\eta_1 + \eta_2 + \lVert \mu \rVert_2) \subseteq\operatorname{Agree}(\cH_0(S))\}.$   When $x$ is drawn from a distribution $\cQ_\mu$, we know that $x - \mu$ follows a distribution $\cQ$ which is isotropic log-concave.  We can apply Lemma \ref{lemma: ball in agreement} to bound the probability mass of the safely-reliable region under $\cQ_\mu$. \blue{Similarly, we can do the same for $\sria$.}
\end{proof}

\noindent Similar bounds on reliability under robustness transfer may be given for linear separators under more general source or target distributions, including isotropic $\beta$-log-concave or $s$-concave distributions, as well as for concept classes with smooth classification boundaries, by applying Theorem \ref{thm:pq-robust-reliability} to the examples from previous sections.

\cready{
\section{Agnostic setting}\label{app:agnostic}

\begin{proof}[Proof of Theorem \ref{thm:rr-eta-l1-agnostic}] The robustly-reliable learner $\cL$ is given as follows. 
 Set $h^{\cL}_S=\argmin_{h\in\cH}\operatorname{err}_S(h)$ i.e. an ERM over $S$, and $r^{\cL}_S(z)=\infty$ if $z\in\operatorname{Agree}(\cH_\nu(S))$, else $r^{\cL}_S(z)=-1$. To study the robustly-reliable region, we assume there is some concept $h^*\in\cH$ which satisfies $\err_S(h^*)\le \nu$.
 By definition of ERM, $\operatorname{err}_S(h^{\cL}_S)\le \operatorname{err}_S(h^*)=\nu$, or $h^{\cL}_S\in\cH_\nu(S)$.  We first show that $\cL$ is robustly-reliable. For $z\in\cX$, if $r^{\cL}_S(z)=\eta> 0$, then $z\in \operatorname{Agree}(\cH_\nu(S))$. 
 We have $h^*(z)=h^{\cL}_S(z)$ since the classifiers $h^*,h^{\cL}_S\in \cH_\nu(S)$ and $z$ lies in the agreement region of classifiers in $\cH_\nu(S)$ in this case. Thus, we have $\ellca^{h^*}(h^{\cL}_S,x,z)=0$ for any $x$ such that $z\in\B_\cM^o(x,\eta)$. In the $\eta=0$ case, $h^*(z)=h^{\cL}_S(z)$ by definiton and the same argument applies. Therefore, $\rrca(S,\nu,\eta)\supseteq \operatorname{Agree}(\cH_\nu(S))$ for all $\eta\ge 0$ follows from the setting $r^{\cL}_S(z)=\infty$ if $z\in\operatorname{Agree}(\cH_\nu(S))$.

Conversely, let $z\in \operatorname{DIS}(\cH_\nu(S))$. There exist $h_1,h_2\in \cH_\nu(S)$ such that $h_1(z)\ne h_2(z)$. By definition, robustly-reliable learning with $\eta=0$ is not possible for $z$. If possible,  let there be a robustly-reliable learner $\cL$ such that $z\in \rrca(S,\nu,\eta)$ for some $\eta>0$. By definition of the robust-reliability region, we must have $r^{\cL}_S(z)> 0$. By definition of a  ball, we have $z\in\B_\cM^o(z,\eta)$ for any $\eta>0$, and therefore $\ellca^{h^*}(h^{\cL}_S,z,z)=0$ for every $h^*\in\cH$ such that $\err_S(h^*)\le\nu$. But then we must have $h^{\cL}_S(z)=h^*(z)$ by definition of $\ellca$. But we can set $h^*=h_1$ or $h^*=h_2$ since both are in $\cH_\nu(S)$. But $h_1(z)\ne h_2(z)$, and therefore $h^{\cL}_S(z)\ne h^*(z)$ for one of the above choices for $h^*$, contradicting that $\cL$ is robustly-reliable.  
\end{proof}
}

\end{document}